\documentclass[dvipsnames, a4paper, 11pt]{article}

\usepackage{amsthm}
\usepackage[most]{tcolorbox}

\newtheoremstyle{upright}   %
  {10pt}                    %
  {10pt}                    %
  {}                        %
  {}                        %
  {\bfseries}               %
  {.}                       %
  { }                       %
  {}                        %

\theoremstyle{upright}

\tcbuselibrary{theorems}
\definecolor{theoremblue}{HTML}{2196F3} %

\tcolorboxenvironment{theorem}{
  enhanced jigsaw,
  sharp corners,
  colframe=theoremblue,
  colback=theoremblue!5,
  coltitle=black,
  fonttitle=\bfseries,
  borderline west={2pt}{0pt}{theoremblue},
  before skip=10pt,
  after skip=10pt,
  boxrule=0pt,
  left=10pt,
  right=10pt,
  breakable
}

\tcolorboxenvironment{definition}{
  enhanced jigsaw,
  sharp corners,
  colframe=theoremblue,
  colback=theoremblue!5,
  coltitle=black,
  fonttitle=\bfseries,
  borderline west={2pt}{0pt}{theoremblue},
  before skip=10pt,
  after skip=10pt,
  boxrule=0pt,
  left=10pt,
  right=10pt,
  breakable
}

\tcolorboxenvironment{corollary}{
  enhanced jigsaw,
  sharp corners,
  colframe=theoremblue,
  colback=theoremblue!5,
  coltitle=black,
  fonttitle=\bfseries,
  borderline west={2pt}{0pt}{theoremblue},
  before skip=10pt,
  after skip=10pt,
  boxrule=0pt,
  left=10pt,
  right=10pt,
  breakable
}

\tcolorboxenvironment{prop}{
  enhanced jigsaw,
  sharp corners,
  colframe=theoremblue,
  colback=theoremblue!5,
  coltitle=black,
  fonttitle=\bfseries,
  borderline west={2pt}{0pt}{theoremblue},
  before skip=10pt,
  after skip=10pt,
  boxrule=0pt,
  left=10pt,
  right=10pt,
  breakable
}

\tcolorboxenvironment{condition}{
  enhanced jigsaw,
  sharp corners,
  colframe=theoremblue,
  colback=theoremblue!5,
  coltitle=black,
  fonttitle=\bfseries,
  borderline west={2pt}{0pt}{theoremblue},
  before skip=10pt,
  after skip=10pt,
  boxrule=0pt,
  left=10pt,
  right=10pt,
  breakable
}

\tcolorboxenvironment{example}{
  enhanced jigsaw,
  sharp corners,
  colframe=theoremblue,
  colback=theoremblue!5,
  coltitle=black,
  fonttitle=\bfseries,
  borderline west={2pt}{0pt}{theoremblue},
  before skip=10pt,
  after skip=10pt,
  boxrule=0pt,
  left=10pt,
  right=10pt,
  breakable
}

\tcolorboxenvironment{lemma}{
  enhanced jigsaw,
  sharp corners,
  colframe=theoremblue,
  colback=theoremblue!5,
  coltitle=black,
  fonttitle=\bfseries,
  borderline west={2pt}{0pt}{theoremblue},
  before skip=10pt,
  after skip=10pt,
  boxrule=0pt,
  left=10pt,
  right=10pt,
  breakable
}

\tcolorboxenvironment{assumption}{
  enhanced jigsaw,
  sharp corners,
  colframe=theoremblue,
  colback=theoremblue!5,
  coltitle=black,
  fonttitle=\bfseries,
  borderline west={2pt}{0pt}{theoremblue},
  before skip=10pt,
  after skip=10pt,
  boxrule=0pt,
  left=10pt,
  right=10pt,
  breakable
}

\tcolorboxenvironment{remark}{
  enhanced jigsaw,
  sharp corners,
  colframe=theoremblue,
  colback=theoremblue!5,
  coltitle=black,
  fonttitle=\bfseries,
  borderline west={2pt}{0pt}{theoremblue},
  before skip=10pt,
  after skip=10pt,
  boxrule=0pt,
  left=10pt,
  right=10pt,
  breakable
}

\usepackage{hyperref}
\usepackage[utf8]{inputenc}
\usepackage[T1]{fontenc}
\usepackage{amsmath}
\usepackage{amsthm}
\usepackage{amsfonts}
\usepackage{amssymb}
\usepackage{graphicx,caption,subcaption}
\usepackage{fancybox}
\usepackage{enumitem}
\usepackage{soul}
\usepackage{mathtools}
\usepackage{lmodern}
\usepackage{stmaryrd}
\usepackage{multirow}
\usepackage{multicol}
\usepackage{xcolor}
\usepackage{tabularx}
\usepackage{array}
\usepackage{colortbl}
\usepackage{varwidth}
\PassOptionsToPackage{dvipsnames,svgnames}{xcolor}     
\usepackage{wrapfig}
\usepackage[explicit,calcwidth]{titlesec}
\usepackage{fancyhdr}
\usepackage[left=2.1cm, right=2.1cm, top=2cm, bottom=2cm]{geometry}
\usepackage{esint}
\usepackage{tocloft}
\usepackage[skip=8pt plus1pt, indent=0pt]{parskip}
\usepackage{twoopt}
\usepackage{authblk}
\usepackage{accents}
\usepackage{mathrsfs}
\usepackage{booktabs}
\usepackage{adjustbox} %
\hypersetup{ colorlinks=true, %
    linktoc=all,     %
    linkcolor=blue,  %
}
\PassOptionsToPackage{unicode}{hyperref}
\PassOptionsToPackage{naturalnames}{hyperref}
\setcounter{secnumdepth}{3}
\setcounter{tocdepth}{3}
\usepackage{float}
\usepackage{import}
\usepackage{placeins} %

\usepackage[
    backend=biber,
    uniquename=false,
    bibencoding=utf8,
    sorting=nyt,
    doi=false, isbn=false, url=false,
    style=alphabetic,
    maxcitenames=2,
    uniquelist=false,
    maxbibnames=123,
    backref=false,
    hyperref=true
]{biblatex}
\addbibresource{ecl.bib}

\pagestyle{fancy}
\setlength{\headheight}{14pt}
\fancyhead[L]{Differentiable EM and OT}
\fancyhead[R]{S. Boïté, E. Tanguy, J. Delon, A. Desolneux and R. Flamary}

\usepackage[ruled,vlined,linesnumbered]{algorithm2e}

\setlength{\marginparwidth}{2cm}
\usepackage{todonotes}
\setuptodonotes{fancyline, color=red!20, inline}

\usepackage{silence}

\WarningFilter{minitoc(hints)}{W0023} \WarningFilter{minitoc(hints)}{W0028}
\WarningFilter{minitoc(hints)}{W0030} \WarningFilter{minitoc(hints)}{W0099}
\WarningFilter{minitoc(hints)}{W0024} \WarningFilter{minitoc(hints)}{W0030}
\WarningFilter{Package pgf}{Snakes} \WarningFilter{pdfTex}{}
\WarningFilter{tocloft.sty}{}

\usepackage[bbgreekl]{mathbbol}
\DeclareSymbolFont{bbold}{U}{bbold}{m}{n}
\DeclareSymbolFontAlphabet{\mathbbold}{bbold}
\DeclareSymbolFontAlphabet{\mathbb}{AMSb}%

\usepackage[nameinlink, capitalise]{cleveref}

\newtheorem{theorem}{Theorem}[section]
\newtheorem*{theorem*}{Theorem}

\newtheorem{lemma}[theorem]{Lemma}
\newtheorem*{lemma*}{Lemma}

\newtheorem{prop}[theorem]{Proposition}
\newtheorem{assumption}{Assumption}
\newtheorem{condition}{Condition}

\setcounter{tocdepth}{2}

\usepackage{pythonhighlight}

\makeatletter  %
\renewcommand\paragraph[1]{%
  \@startsection{paragraph}{4}{\z@}%
    {3.25ex \@plus1ex \@minus.2ex}%
    {-1em}%
    {\normalfont\normalsize\bfseries}%
  {#1.}%
}
\makeatother

\newcommand{\N}[0]{\mathbb{N}}

\newcommand{\R}[0]{\mathbb{R}}

\renewcommand{\L}[0]{\mathcal{L}}

\newcommand{\Sum}[2]{\displaystyle\sum\limits_{#1}^{#2}}
\newcommand{\Prod}[2]{\displaystyle\prod\limits_{#1}^{#2}}

\newcommand{\E}[1]{\mathbb{E}\left[#1\right]}

\newcommand{\dd}[0]{\mathrm{d}}
\definecolor{mybluei}{RGB}{0,173,239}

\newcommand{\Span}[0]{\mathrm{Span}}

\newcommand{\Tr}[0]{\mathrm{Tr}}

\renewcommand{\epsilon}{\varepsilon}

\DeclareMathOperator{\diag}{diag}

\newcommand{\W}[0]{\mathrm{W}}

\newcommand{\app}[4]{\left\lbrace\begin{array}{ccc}
   #1 & \longrightarrow & #2 \\
   #3 & \longmapsto & #4 \\
\end{array} \right.}

\newcommand{\oll}[1]{\overline{#1}}

\renewcommand{\O}{\mathcal{O}}

\DeclareMathOperator{\KL}{KL}

\newcommand{\opn}[1]{\left\|#1\right\|_{\mathrm{op}}}
\newcommand{\PD}[1][d]{S_{#1}^{++}(\R)}
\newcommand{\PSD}[1][d]{S_{#1}^{+}(\R)}
\newcommand{\Sym}[1][d]{S_{#1}(\R)}
\newcommand{\dBW}{d_{\mathrm{BW}}}
\newcommand{\GMM}{\mathrm{GMM}}
\newcommand{\MW}{\mathrm{MW}}
\newcommand{\UMW}{\mathrm{UMW}}
\newcommand{\EEMMW}{\mathcal{E}_{\mathrm{EM-}\MW_2^2}}
\newcommand{\FEM}{F}
\newcommand{\Ethree}{\mathcal{E}_3}
\newcommand{\Etwo}{\mathcal{E}_2}
\newcommand{\JOS}{J_{\mathrm{OS}}}
\newcommand{\JAI}{J_{\mathrm{AI}}}
\newcommand{\JAD}{J_{\mathrm{AD}}}
\newcommand{\JFD}{J_{\mathrm{FD}}}
\newcommand{\FDstep}{\varepsilon_{\mathrm{FD}}}
\newcommand{\simplex}[1]{\triangle_{#1}}
\newcommand{\X}{\mathcal{X}}
\newcommand{\GD}{\mathrm{GD}}
\newcommand{\EM}{\mathrm{EM}}

\newcommand{\bfX}{\mathbf{X}}
\newcommand{\bfY}{\mathbf{Y}}

\newcommand{\redhighlight}[1]{\textcolor{red}{#1}}
\newcommand{\calE}{\mathcal{E}}

\makeatletter
\renewcommand*{\fps@figure}{h}  %
\makeatother

\title{Differentiable Expectation-Maximisation and Applications to Gaussian
Mixture Model Optimal Transport}
\author[1]{Samuel Boïté*}
\author[1]{Eloi Tanguy*}
\author[1]{Julie Delon}
\author[2]{Agnès Desolneux}
\author[3]{R\'emi Flamary}
\affil[1]{Universit\'e Paris Cit\'e, CNRS, MAP5, F-75006 Paris, France}
\affil[2]{Centre Borelli, CNRS and ENS Paris-Saclay, F-91190 Gif-sur-Yvette, France}
\affil[3]{CMAP, CNRS, Ecole Polytechnique, Institut Polytechnique de Paris}

\date{\today}
  
\begin{document}
\maketitle
*: equal contribution.

\begin{abstract}
	The Expectation-Maximisation (EM) algorithm is a central tool in statistics and machine learning, widely used for latent-variable models such as Gaussian Mixture Models (GMMs). Despite its ubiquity, EM is typically treated as a non-differentiable black box, preventing its integration into modern learning pipelines where end-to-end gradient propagation is essential. In this work, we present and compare several differentiation strategies for EM, from full automatic differentiation to approximate methods, assessing their accuracy and computational efficiency. As a key application, we leverage this differentiable EM in the computation of the Mixture Wasserstein distance $\mathrm{MW}_2$ between GMMs, allowing $\mathrm{MW}_2$ to be used as a differentiable loss in imaging and machine learning tasks. To complement our practical use of $\mathrm{MW}_2$, we contribute a novel stability result which provides theoretical justification for the use of $\mathrm{MW}_2$ with EM, and also introduce a novel unbalanced variant of $\mathrm{MW}_2$. Numerical experiments on barycentre computation, colour and style transfer, image generation, and texture synthesis illustrate the versatility of the proposed approach in different settings.

\end{abstract}

\tableofcontents

\section{Introduction}

The Expectation-Maximisation (EM) algorithm \cite{dempster1977maximum} is a
ubiquitous tool in statistics to fit mixture models on data
\cite{bailey1994fitting,enders2003using,vu2010blind,ng2013recent}. Numerous
variants of the EM algorithm have been proposed in the statistics and machine
learning communities
\cite{dayan1997using,friedman1998bayesian,fessler2002space,caffo2005ascent,ganchev2007expectation,varadhan2008simple,cappe2009line,cappe2011online,samdani2012unified,greff2017neural,zaffalon2021causal,kim2022differentiable},
and are the focus of various monographs
\cite{mclachlan2007algorithm,mclachlan2000finite}. From a theoretical
standpoint, the EM algorithm is only known to converge under specific conditions
\cite{wu1983convergence,boyles1983convergence,mclachlan2007algorithm,xu2016global},
and its behaviour is still not completely understood in full generality. In
machine learning, Gaussian priors have been used for latent space
representations
\cite{rasmussen2003gaussian,kingma2014auto,rezende2014stochastic,ho2020denoising}
with resounding success, while Gaussian Mixture Models (GMMs) are used more
sparingly \cite{nasios2006variational,viroli2019deep,yuan2020deepgmr}. Beyond
the difficulties of GMM estimation, the core challenge is that the EM algorithm
is not easily integrated into end-to-end learning pipelines, as its
differentiation with respect to the input data is not straightforward.
Theoretical ties between the EM algorithm and an alternate minimisation of an
energy involving Entropic Optimal Transport \cite{cuturi2013sinkhorn} were
recently highlighted
\cite{rigollet2018entropic,mena2020sinkhorn,diebold2024unified,vayer2025note}.
To our knowledge, these connections do not provide insight into the
differentiation of the EM algorithm. In this paper, we tackle the theory and
practice of differentiating the EM algorithm, in the hope of sparking further
research in this direction, beyond the various applications that we present.

One of the core contributions that sparked the Machine Learning wave is
automatic differentiation
\cite{wengert1964simple,linnainmaa1970representation,rumelhart1986learning,griewank2008evaluating},
which is a powerful method for complex optimisation problems. In the setting
where the target objective is itself an optimisation problem, the problem is
referred to as ``bi-level''. Numerous automatic differentiation methods for
bi-level iterative minimisation were studied in
\cite{gilbert1992automatic,beck1994automatic,shaban2019truncated,mehmood2020automatic,bolte2022automatic,bolte2024one}.
Another approach to bi-level optimisation involving fixed-point problems is the
\textit{implicit method}
\cite{luise2018differential,bai2019deep,lorraine2020optimizing,bolte2021nonsmooth,blondel2022efficient,ehrhardt2024analyzing}.

In order to illustrate the potential of differentiable EM, we apply it to
imaging tasks which rely on the comparison of GMMs with Optimal Transport (OT)
\cite{monge1781memoire,kantorovich1942translocation}. Specifically, we leverage
a variant of the Wasserstein distance between GMMs, called the
Mixture-Wasserstein distance $\MW_2$ \cite{delon2020wasserstein}, which compares
GMMs by matching their components using a small-scale discrete OT problem that
can be solved efficiently \cite{computational_ot}. The Mixture-Wasserstein
distance is one of many examples of recent advances in computational OT, where
less costly surrogates of the Wasserstein distance such as regularised
transport~\cite{cuturi2013sinkhorn} or sliced
transport~\cite{bonnotte2013sliced} have seen a wide range of success in machine
learning
\cite{bonneel2011displacement,courty2017domain,kolouri2018slicedae,karras2019style,feydy2019miccai}.
Computing transport distances between empirical distributions remains
challenging when the number of samples or the dimensionality of the space
becomes too large, even though several solutions have been proposed in the
literature~\cite{weed2019sample,genevay2019sample,chizat2020faster}.

The Mixture-Wasserstein distance has been used in texture
synthesis~\cite{leclaire2023optimal}, for the evaluation of generative neural
networks~\cite{luzi2023evaluating}, in quantum
chemistry~\cite{dalery2023nonlinear}, and for domain
adaptation~\cite{montesuma2024lighter}. An efficient barycentre computation
algorithm for this metric was also recently proposed
in~\cite{tanguy2024computing}. When using $\MW_2$ on discrete data, the space
dimension $d$ and the number of samples $n$ only appear in two stages of the
whole computation: the GMM inference on the data, and the computation of Bures
distances between the covariance matrices of the GMM components. This makes the
approach highly versatile and robust to dimensionality in practice.
Nevertheless, a current limitation of the $\MW_2$ distance is the inference of
the GMMs, which our work renders differentiable with EM, thus allowing the use
of $\MW_2$ as a differentiable loss function between datasets in machine
learning tasks.

\paragraph{Objectives} Our goal in this paper is to propose different ways to
differentiate EM, allowing for applications in imaging or machine learning
problems. As a focal application, we will pair differentiable EM with the
Mixture-Wasserstein distance $\MW_2$, which is not difficult to differentiate in
practice (using classical results on the differentiation of discrete
OT~\cite{peyre2019computational}, see also \cite[Proposition
4.3]{tanguy2025constrained}). Differentiation of the Expectation-Maximisation
algorithm is a more involved process. Surprisingly, while EM is well-known and
extensively studied, the question of its differentiation seldom appears in the
literature. To the best of our knowledge, the first work in this direction is
\cite{kim2022differentiable}, which rewrites a Bayesian variant of EM which is
related to the Optimal Transport Kernel Embedding \cite{mialon2021trainable}. In
this paper, we propose several approaches for this differentiation, exact via
automatic differentiation\footnote{barring numerical approximation, which in certain
cases may cause substantial errors.} or approximate, and compare their
performance on different applications, ranging from toy examples to
larger-scale machine learning tasks. Given that our contribution is
methodological in nature, our goal is not to achieve state-of-the-art results on
these applications, but rather to illustrate the versatility of the proposed
approach.

\paragraph{Paper outline} In \cref{sec:diff_em}, we begin by recalling the EM
algorithm and expressing it as a fixed point problem. We give precise
mathematical meaning to the differentiation of a solution of the EM algorithm,
and present numerous strategies to compute the differential of $T$ steps of the
method with respect to the input data. In \cref{sec:gmmot}, we provide a short
reminder on the Mixture-Wasserstein distance $\MW_2$ and show a stability result
for the estimation of $\MW_2$ between GMMs. We discuss practical difficulties in
the differentiation of the $\MW_2$ distance between GMMs estimated from data,
and provide rationale for the importance of fixing EM weights. To circumvent the
numerical difficulties incurred by weight optimisation and to ensure robustness
to Gaussian outliers, we introduce an unbalanced variant of $\MW_2$.  
In \cref{sec:diff_em_illustrations} we illustrate our methods on the flow of
$\MW_2$ composed with the EM algorithm, and perform a quantitative study on the
convergence of the EM algorithm and the quality of the gradient approximations.
In \cref{sec:diff_em_applications}, we present several applications of
differentiable EM: barycentre computation, colour transfer with inspiration from
\cite{rabin2012wasserstein}, style transfer in the spirit of
\cite{gatys2015texture}, image generation through $\MW_2$-based Generative
Adversarial Networks, and a novel texture synthesis method related to
\cite{galerne2018texture,leclaire2023optimal,houdard2023generative}.

\section{Differentiation of the Expectation-Maximisation
Algorithm}\label{sec:diff_em}

\subsection{Main Ideas of the EM Algorithm}

The Expectation-Maximisation (EM) algorithm \cite{dempster1977maximum} attempts
to fit a GMM to a dataset $X = (x_1, \cdots, x_n) \in \R^{n\times d}$, with a
fixed number of components $K$. We introduce the (hidden) quantities $Y \in
\llbracket 1, K\rrbracket^n$ which encode the component index of each sample
$x_i$. The GMM parameters $\theta := \left(w, (m_k)_k, (\Sigma_k)_k\right)$ lie
in the space $\Theta := \simplex{K}\times (\R^d)^K \times (\PD)^K$, where
$\simplex{K}$ is the $K$-simplex defined as
\begin{equation}\label{eqn:simplex}
    \simplex{K} := \left\{w \in (0, 1)^K\;|\; \Sum{k=1}{K}w_k = 1\right\},
\end{equation}
and $\PD$ is the set of symmetric positive definite matrices. We will denote by
$\mu(\theta)$ the GMM probability measure of parameters $\theta$. Given $X \in
\R^{n\times d}$ and $Y \in \llbracket 1, K \rrbracket^n$, the complete
likelihood and its logarithm are respectively
\begin{equation}\label{eqn:likelihood}
    L_\theta(X, Y) =
    \Prod{i=1}{n}\Prod{k=1}{K}\left(w_k g_{m_k, \Sigma_k}(x_i)
    \right)^{\mathbbold{1}(Y_i = k)},
    \quad \ell_\theta(X, Y) =
    \Sum{i=1}{n}\Sum{k=1}{K}\mathbbold{1}(Y_i = k)
    \log\left(w_k g_{m_k, \Sigma_k}(x_i)\right),
\end{equation}
where $g_{m_k, \Sigma_k}$ is the Gaussian density with mean $m_k$ and covariance
$\Sigma_k$ (recalled in \cref{eqn:gaussian_density_g_def}). Note that
$\ell_\theta$ cannot be optimised in $\theta$ directly since we do not know the
hidden variables $Y$. The EM algorithm \cite{dempster1977maximum} maximises the
log-likelihood by iterating two steps over $\theta_t$, first computing the
``responsibilities'' $\gamma_{ik}(\theta_t)$ which are the posterior
probabilities of the hidden quantities $Y_i$ given the data $X$ and the current
parameters $\theta_t = (w^{(t)}, (m_k^{(t)})_k, (\Sigma_k^{(t)})_k) \in \Theta$:
\begin{equation}\label{eqn:gamma_ik}
    \gamma_{ik}(\theta_t) = \mathbb{P}_{(\bfX, \bfY) \sim 
    \mu(\theta_t)^{\otimes n}} \left[\bfY_i = k | \bfX = 
    X\right] = \left[w_k^{(t)} g_{m_k^{(t)}, \Sigma_k^{(t)}}(x_i)\right] / 
    \left[\Sum{\ell=1}{K}w_\ell^{(t)} 
    g_{m_\ell^{(t)}, \Sigma_\ell^{(t)}}(x_i)\right].
\end{equation}
The next iteration $\theta_{t+1}$ corresponds to a maximisation of
$\ell_\theta(X, Y)$ with the unknown variables $Y$ replaced by the posterior
probabilities $\gamma_{ik}(\theta_t)$, which leads to the following closed-form
expressions for the parameters:
\begin{equation}\label{eqn:theta_next}
    w_k^{(t+1)} = \frac{1}{n}\Sum{i=1}{n}\gamma_{ik}(\theta_t),\quad
    m_k^{(t+1)} = \frac{\Sum{i=1}{n}\gamma_{ik}(\theta_t)x_i}{\Sum{j=1}{n}\gamma_{jk}(\theta_t)},\quad 
    \Sigma_k^{(t+1)} = \frac{\Sum{i=1}{n}\gamma_{ik}(\theta_t)(x_i-m_k^{(t+1)})(x_i-m_k^{(t+1)})^\top}{\Sum{j=1}{n}\gamma_{jk}(\theta_t)}.
\end{equation}
It is a standard result (see \cite{moon1996expectation}) that the log-likelihood
$\ell_\theta(X) = \sum_{i=1}^{n}\log\left(\sum_{k=1}^{K}w_k g_{m_k,
\Sigma_k}(x_i)\right)$ increases with respect to $\theta = (w, (m_k),
(\Sigma_k))$ with each EM iteration. In practice, we shall see that it is
sometimes preferable to tweak the standard EM algorithm by not updating the
weights $w$ and keeping the weights $w_0$ of the initialisation $\theta_0$ (we
refer to this as fixing the weights). We summarise the two algorithms in
\cref{alg:EM,alg:EM_fw}, with the difference highlighted in red.

\begin{center}
    \begin{minipage}{0.49\textwidth}
        \centering
        \begin{algorithm}[H]
            \caption{EM Algorithm}
            \label{alg:EM}
            \KwIn{$\theta_0 \in \Theta,\; X \in \R^{n\times d},\; T,K$.} 
            \For{$t \in \llbracket 0, T - 1 \rrbracket$}{ 
                \textbf{Expectation:} Compute the responsibilities
                $\gamma(\theta_t)$ 
                using \cref{eqn:gamma_ik}\;
                \textbf{Maximisation:} Update
                $\theta_{t+1} = (\redhighlight{w^{(t+1)}}, (m_k^{(t+1)})_k, 
                (\Sigma_k^{(t+1)})_k)$ using \cref{eqn:theta_next}\;         
                }
        \end{algorithm}
    \end{minipage}
    \hfill
    \begin{minipage}{0.49\textwidth}
        \centering
        \begin{algorithm}[H]
            \caption{Fixed-Weights EM}
            \label{alg:EM_fw}
            \KwIn{$\theta_0 \in \Theta,\; X \in \R^{n\times d},\; T,K$.} 
            \For{$t \in \llbracket 0, T - 1 \rrbracket$}{ 
                \textbf{Expectation:} Compute the responsibilities
                $\gamma(\theta_t)$ 
                using \cref{eqn:gamma_ik}\;
                \textbf{Maximisation:} Update
                $\theta_{t+1} = (\redhighlight{w_0}, (m_k^{(t+1)})_k, 
                (\Sigma_k^{(t+1)})_k)$ using \cref{eqn:theta_next}\;         
                }
        \end{algorithm}
    \end{minipage}
\end{center}

For applications minimising a loss involving the output of the EM algorithm with
respect to the data $X$, it is important to highlight that even in the
fixed-weights version (\cref{alg:EM_fw}), the responsibilities $\gamma$ evolve
with the EM steps and with the updates of $X$. As a result, running the EM algorithm
along with $X$ updates is paramount, and it is not sufficient to keep an initial
responsibility assignment $\gamma$.

\subsection{Fixed-Point Formulation and Differentiability}

The goal of this section is to express an EM step as a fixed-point operation.
For this, a technical condition is required to ensure that the term
$\Sigma_k^{(t+1)}$ is invertible (symmetry and non-negativity of the eigenvalues
is immediate), which requires a slightly stronger condition than assuming that
the points $(x_i)$ span $\R^d$. Since the terms $\gamma_{i,k}$ are positive, we
can write the condition as:
\begin{equation}\label{eqn:data_general_position} 
    X = (x_1, \cdots, x_n) \in \X := \left\{(x_1, \cdots, x_n) \in (\R^d)^n\ |\ 
    \forall y \in \R^d,\; \Span((x_i-y)_{i\in \llbracket 1, n \rrbracket}) 
    = \R^d\right\}.
\end{equation}
For $X\in \X$ and $\theta_0\in \Theta$, the next iteration $\theta_1$ obtained
using \cref{eqn:theta_next} satisfies $\theta_1\in \Theta$. Note that if $\nu$ is
an absolutely continuous probability measure on $\R^d$, and if $n\geq d+1$, then
\cref{eqn:data_general_position} is verified almost surely for
$\bfX\sim\nu^{\otimes n}$. This condition can be seen as a weaker variant of
being in a ``standard configuration''. It can be shown that $\X$ is an open subset
of $(\R^d)^n\simeq \R^{n\times d}$.

We study the map $\FEM: \Theta \times \X \longrightarrow \Theta$ that maps a
parameter $\theta$ to the value of the next M-step using \cref{eqn:theta_next}.
For convenience, we also write $\FEM_X := \FEM(\cdot, X)$ and $\FEM_X^t$ the
$t$-th iteration $\FEM_X \circ \cdots \circ \FEM_X$ for $t\in \N$. An optimal
solution to the EM algorithm can be seen as a fixed point of $\FEM$, i.e.
$\theta^* = \FEM_X(\theta^*)$. Numerically, one takes a final iterate
$\theta_T$ of the iteration scheme
$$\forall t \in \llbracket 0, T-1 \rrbracket,\; \theta_{t+1} = \FEM(\theta_{t},
X),
$$ with an arbitrary initialisation $\theta_0 \in \Theta$. Due to the definition
of $\Theta$ as the set of parameters with weights $w_k \in (0, 1)$ and
\textit{positive} definite matrices $\Sigma_k$, the map $\FEM$ is of class
$\mathcal{C}^\infty$ (jointly in $(\theta, X)$), by virtue of the explicit
expressions \cref{eqn:theta_next}.

We now aim to give meaning to the gradient with respect to the data of a
``true'' solution of the EM algorithm. To this end, we need to work under the
assumption of convergence of EM to a non-degenerate fixed point:

\begin{assumption}\label{ass:EM_cv} There exists $(\theta_0, X_0) \in
    \Theta\times\X$ such that $\theta^*(\theta_0, X_0) :=
    \underset{t\longrightarrow +\infty}{\lim}\FEM_{X_0}^t(\theta_0)$ exists in
    $\Theta$, with $\tfrac{\partial \FEM}{\partial \theta}(\theta^*(\theta_0,
    X_0), X_0) - I$ invertible.
\end{assumption}

While convergence is typically observed in practice numerically, from a
theoretical standpoint, it is a delicate matter. See \cite[Chapter
3]{mclachlan2007algorithm} for a reference on this field of research. We are now
ready to formulate \cref{prop:existence_partial_theta_star}, which shows that
the gradient of an EM solution with respect to the data is well-defined, using
the implicit function theorem. 

\begin{prop}\label{prop:existence_partial_theta_star} Under \cref{ass:EM_cv},
    there exist open neighbourhoods $\Theta^*,\X_0$ such that $\theta^*(\theta_0,
    X_0)\in\Theta^*\subset \Theta$ and $X_0\in\X_0\subset\X$, where there exists
    $\theta^*(\theta_0, \cdot) \in \mathcal{C}^\infty(\X_0, \Theta^*)$ with:
    \begin{equation}\label{eqn:theta_star_map_def}
        \forall (\theta, X) \in \Theta^*\times\X_0,\; \FEM(\theta, X) = \theta
        \Longleftrightarrow \theta = \theta^*(\theta_0, X).
    \end{equation}
\end{prop}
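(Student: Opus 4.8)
The plan is to apply the implicit function theorem to the defect map $G := \FEM - \mathrm{id}_\Theta$. The first step is to fix the right ambient vector space: because of the affine constraint $\sum_k w_k = 1$ defining $\simplex{K}$, the set $\Theta$ is an open subset of the affine space $\{w\in\R^K : \sum_k w_k = 1\}\times(\R^d)^K\times(\Sym)^K$, hence a $\mathcal{C}^\infty$ manifold whose tangent space at every point is the fixed vector space $E_0 := \{v\in\R^K : \sum_k v_k = 0\}\times(\R^d)^K\times(\Sym)^K$. Since, for $(\theta, X)\in\Theta\times\X$, both $\FEM(\theta,X)$ and $\theta$ have weight-components summing to $1$, the map
\begin{equation*}
  G : \Theta\times\X \longrightarrow E_0, \qquad G(\theta,X) := \FEM(\theta, X) - \theta
\end{equation*}
is well defined and, by the $\mathcal{C}^\infty$ regularity of $\FEM$ recalled above, of class $\mathcal{C}^\infty$.

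Next I would check the two hypotheses of the implicit function theorem at the point $(\theta^*, X_0)$ with $\theta^* := \theta^*(\theta_0, X_0)$. For the vanishing condition, passing to the limit $t\to+\infty$ in the recursion $\FEM_{X_0}^{t+1}(\theta_0) = \FEM_{X_0}(\FEM_{X_0}^t(\theta_0))$ and using continuity of $\FEM_{X_0}$ gives $\FEM(\theta^*, X_0) = \theta^*$, i.e.\ $G(\theta^*, X_0) = 0$; this is where \cref{ass:EM_cv} is used to guarantee the limit exists in $\Theta$. For the invertibility condition, the partial differential $\partial_\theta G(\theta^*, X_0) = \partial_\theta \FEM(\theta^*, X_0) - I$ is an endomorphism of $E_0$ (the differential of a self-map of $\Theta$ at a point acts on the tangent space $E_0$), and it is invertible precisely by the second clause of \cref{ass:EM_cv}.

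Then the implicit function theorem supplies open neighbourhoods $\Theta^*\ni\theta^*$ and $\X_0\ni X_0$, which I would shrink so that $\Theta^*\subset\Theta$ and $\X_0\subset\X$ (possible since both are open), together with a map $\varphi\in\mathcal{C}^\infty(\X_0, \Theta^*)$ satisfying $\varphi(X_0) = \theta^*$ and the local equivalence: for every $(\theta, X)\in\Theta^*\times\X_0$, $G(\theta, X) = 0$ if and only if $\theta = \varphi(X)$. Setting $\theta^*(\theta_0, \cdot) := \varphi$ extends the notation of \cref{ass:EM_cv} consistently, since $\varphi(X_0) = \theta^* = \theta^*(\theta_0, X_0)$, and the displayed equivalence is exactly \eqref{eqn:theta_star_map_def}.

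I do not expect a genuine obstacle here: the statement is essentially a packaging of the implicit function theorem. The only points deserving care are (i) using the correct model space $E_0$, so that $\partial_\theta\FEM - I$ is a bona fide endomorphism rather than a map between spaces of mismatched dimension — which is the whole reason the weight constraint $\sum_k w_k = 1$ must be carried along — and (ii) resisting the temptation to claim more than is asserted: for $X\neq X_0$, the object $\theta^*(\theta_0, X)$ produced above is characterised only as the unique fixed point of $\FEM_X$ lying in $\Theta^*$, and need not equal $\lim_t\FEM_X^t(\theta_0)$; the proposition rightly makes no such claim.
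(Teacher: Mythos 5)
Your proposal is correct and follows exactly the paper's argument: apply the implicit function theorem to $G(\theta,X) := \FEM(\theta,X)-\theta$, using \cref{ass:EM_cv} both to get $G(\theta^*,X_0)=0$ (by continuity of $\FEM_{X_0}$ along the convergent iterates) and to get invertibility of $\partial_\theta G$. Your added remarks on the correct model space for the simplex constraint and on not over-claiming that $\varphi(X)=\lim_t \FEM_X^t(\theta_0)$ for $X\neq X_0$ are sound refinements of the same proof, not a different route.
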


\begin{proof}
    Let $G := \app{\Theta\times\X}{\Theta}{(\theta, X)}{\FEM(\theta, X) -
    \theta}$. Thanks to the regularity of $\FEM$, $G$ is of class
    $\mathcal{C}^\infty$. \cref{ass:EM_cv} implies that $G(\theta^*(X_0),
    X_0)=0$, with $\tfrac{\partial G}{\partial \theta}(\theta^*(X_0), X_0)$
    invertible.

    By the implicit function theorem, there exist open neighbourhoods
    $\Theta^*,\X_0$ such that $\theta^*(\theta_0, X_0)\in\Theta^*\subset \Theta$
    and $X_0\in\X_0\subset\X$, and there exists $g: \X_0 \longrightarrow
    \Theta^*$ of class $\mathcal{C}^\infty$ such that $g(X_0) =
    \theta^*(\theta_0, X_0)$, and:
    $$\forall (\theta, X) \in \Theta^*\times\X_0,\; \FEM(\theta, X) = \theta
    \Longleftrightarrow \theta = g(X).$$ For $X \in \X_0$, we define
    $\theta^*(\theta_0, X) := g(X)$.
\end{proof}

To alleviate notation, we will write simply $\theta^*(X) := \theta^*(\theta_0,
X)$ and continue with \cref{ass:EM_cv} and the map $\theta^*$ from
\cref{prop:existence_partial_theta_star}. For the sake of completeness and to pave
the way for future theoretical study, we provide the explicit expressions of the
partial differentials of $F$ in \cref{sec:explicit_euclidean_differentials}.
Note that from a statistical viewpoint, the parameter $\theta^*$ is not a
Maximum-Likelihood Estimator (which, in fact, does not exist for GMMs), it is
only the output of the EM algorithm which is often a local maximum of the
likelihood.

\subsection{Gradient Computation Methods}\label{sec:em_grad_methods}

In this section, we are concerned with practical implementation of the
computation of the gradient of the EM algorithm with respect to the data $
\partial_X[F^T_X(\theta_0)],$ given some initialisation $\theta_0 \in \Theta$
and number of iterations $T \geq 1$. In addition to the automatic differentiation
method, we present two alternative approximate strategies which rely only on the
last parameter $\theta_T$. These methods work under the assumption that
$\theta_T(X) \approx \theta^*(X)$, where $\theta^*$ is defined in
\cref{prop:existence_partial_theta_star} and refers to the fixed point
$\lim_{t\longrightarrow +\infty}F_X^t(\theta_0)$, assuming convergence.

\paragraph{Full Automatic Differentiation (AD)} The most naïve approach consists
in computing the gradient through all iterations using the backpropagation
algorithm (for instance, using PyTorch's automatic differentiation
\cite{pytorch}). In other words, the ``full automatic gradient'' corresponds to
letting automatic differentiation compute $\partial_X[F^T_X(\theta_0)]$
directly, using an automatically differentiable implementation of the EM
algorithm. For a large number of iterations $T$, AD can be considered as a
natural baseline for computing the exact gradient, up to numerical precision. It
is still an approximation, due to  propagation of numerical errors, but is to
the best of our knowledge the best available approximation of the true gradient,
as discussed in \cref{sec:em_grad_gt}. Nevertheless, we use AD as a natural
baseline method for comparisons. The AD method may be costly (both in time and
memory) if the number of iterations $T$ is very large.

\paragraph{Approximate Implicit Gradient (AI)} Our goal is to approximate the
gradient $\tfrac{\partial \theta^*}{\partial X}(X)$ at a fixed point $\theta^*$
of $\FEM(\cdot, X)$. Thanks to the differentiability property of
\cref{prop:existence_partial_theta_star} and under \cref{ass:EM_cv}, we can
differentiate with respect to $X$ using the chain rule:
\begin{equation}\label{eqn:partial_theta_star_chain_rule}
    \cfrac{\partial \theta^*}{\partial X}(X) 
    = \cfrac{\partial}{\partial X}\left[\FEM(\theta^*, X)\right] 
    = \cfrac{\partial \FEM}{\partial \theta}(\theta^*, X)
    \cfrac{\partial \theta^*}{\partial X}(X) 
    + \cfrac{\partial \FEM}{\partial X}(\theta^*, X).
\end{equation}
We deduce the following equation on $\cfrac{\partial \theta^*}{\partial X}(X)$:
\begin{equation}\label{eqn:partial_theta_star_implicit}
    \left(I - \cfrac{\partial \FEM}{\partial \theta}(\theta^*, X)\right)
    \cfrac{\partial \theta^*}{\partial X}(X)
    = \cfrac{\partial \FEM }{\partial X}(\theta^*, X),
\end{equation}
using \cref{ass:EM_cv} again, we can invert the matrix on the left hand-side,
yielding 
\begin{equation}\label{eqn:partial_theta_star_explicit}
    \cfrac{\partial \theta^*}{\partial X}(X) 
    = \left(I - \cfrac{\partial \FEM}{\partial \theta}(\theta^*, X)\right)^{-1}
    \cfrac{\partial \FEM }{\partial X}(\theta^*, X).
\end{equation}
We define the approximate implicit gradient by approximating $\theta^* \approx
\theta_T$:
\begin{equation}\label{eqn:def_AI}
    \JAI := \left(I - \cfrac{\partial \FEM}{\partial \theta}(
    \theta_{T}, X)\right)^{-1}\cfrac{\partial \FEM }{\partial X}(\theta_{T}, X)
\end{equation}
The implicit approximation is theoretically exact (barring numerical imprecision
in the inversion in particular) when $\theta_T = \theta^*$, however it requires
additional costly computations: first evaluate the differential $\partial_\theta
\FEM(\theta_T, X)$, and then solve a large linear system to compute $(I -
\partial_\theta \FEM( \theta^*, X))^{-1}\partial_X \FEM(\theta^*, X)$.

\paragraph{One-Step Gradient Approximation (OS)}
The One-Step method (OS) studied in \cite{bolte2024one} (within a particular
framework of bi-level optimisation), works under the following condition:
\begin{condition}\label{cond:OS_spectrum} $\opn{\tfrac{\partial \FEM}{\partial
    \theta}(\theta, X)} \leq \rho \ll 1$ for any $\theta, X$.
\end{condition}
In the case of the EM algorithm, \cref{cond:OS_spectrum} is not verified, since
the eigenvalues of the partial differential $\tfrac{\partial \FEM}{\partial
\theta}(\theta, X)$ are commonly larger than 1, even in the neighbourhood of a fixed
point. Under \cref{cond:OS_spectrum}, the OS approximation further neglects the
term $(I - \partial_\theta \FEM( \theta^*, X))^{-1}$ in \cref{eqn:def_AI},
yielding the following expression:
\begin{equation}\label{eqn:def_OS}
    \JOS := \cfrac{\partial \FEM }{\partial X}(\theta_{T-1}, X) 
    \approx \cfrac{\partial \FEM }{\partial X}(\theta^*, X) 
    \approx \left(I - \cfrac{\partial \FEM}{\partial \theta}
    (\theta^*, X)\right)^{-1}\cfrac{\partial \FEM }{\partial X}(\theta^*, X) 
    = \cfrac{\partial \theta^*}{\partial X}(X).
\end{equation}
In practice, the OS method corresponds to computing the gradient of the EM
output $\theta_T(X) = F_X^T(\theta_0)$ with respect to the data $X$ only through
the last iteration $\theta_T(X) = F_X(\theta_{T-1})$, neglecting the dependence
of the penultimate iteration $\theta_{T-1}$ on $X$. Using automatic
differentiation (for example with PyTorch \cite{pytorch}), this is done
conveniently by computing $\theta_{T-1} = F_X^{T-1}(\theta_0)$ without gradient
computation (e.g. \texttt{with torch.no\_grad()}), and performing the last step
with gradient computation. Due to this computation method, the OS gradient is
numerically inexpensive compared to the others, albeit at the expense of
precision. See \cite{bolte2024one} for a detailed presentation.

\paragraph{Time complexity and memory footprint} \begin{table}[ht]
    \centering
    \begin{tabular}{lcc}
        \toprule
        Method & Time & Memory \\
        \midrule
        Full Automatic Differentiation (AD) 
        & $\O\bigl(T(nKd^{2}+Kd^{3})\bigr)$ &
        $\O(TKd^{2}+nd)$ \\
        Approximate Implicit Gradient (AI) & $\O\bigl(TnKd^2 + K^3d^6 +
        nK^2d^5\bigr)$ & $\O(nK^2d^4)$ \\
        One-Step gradient (OS) & $\O\bigl(T(nKd^{2}+Kd^{3})\bigr)$ &
        $\O(Kd^{2}+nd)$ \\
        \bottomrule
    \end{tabular}
    \caption{Complexities of the \texttt{backward} passes (gradient
    computations) for $T$ EM iterations on $n$ points in $\mathbb{R}^{d}$ with
    $K$ components.}
    \label{tab:complexities}
\end{table}

The complexities of the gradient computation approaches are summarised in
\cref{tab:complexities}. As a baseline, the time complexity of the
\texttt{forward} pass (EM algorithm without gradients) is
$\O\bigl(T(nKd^{2}+Kd^{3})\bigr)$, while its memory footprint is
$\O(Kd^{2}+nd)$. The complexities are deduced from the differential expressions
in \cref{sec:explicit_euclidean_differentials}. The $\O(Kd^{3})$ factor
corresponds to inverting the $K$ covariance matrices of size $d\times d$ during
each E-step. The $\O(nKd^{2})$ factor comes from the M-step parameter updates
(weights, means, covariances) and the differentiation of these updates with
respect to $X$ or $\theta$.

\paragraph{The Warm-Start Method for Iteration of Differentiable EM} In many
practical applications, we are interested in minimising a certain loss function
$\mathcal{L}$ applied to the output $\theta_T(X)$ of the EM
algorithm\footnote{for example $\mathcal{L}(\theta_T(X)) =
\MW_2^2(\mu(\FEM(\theta_T, X)), \nu)$, where $\nu$ is a target GMM, see
\cref{sec:gmmot}.}. In this case, after a (small) gradient descent step
$X_{t+1}$ computed from $X_t$, the output of the EM algorithm with data $X_t$
will often be a good initialisation for the EM algorithm with data $X_{t+1}$. As
a result, we suggest operating the EM algorithm with only one iteration and
using the output of the previous step as an initialisation. This leads to the
following algorithm, which we refer to as the Warm-Start EM Flow of a loss
$\mathcal{L}: \Theta \longrightarrow \R$.

\begin{center}
    \begin{minipage}{0.55\textwidth}
        \centering
        \begin{algorithm}[H]
            \KwIn{$\theta_0 \in \Theta,\; X_0 \in \X,\; T_\GD\in \N$,
            $\mathcal{L}: \Theta\longrightarrow \R$.} \For{$t \in \llbracket 0,
            T_\GD - 1 \rrbracket$}{ $\theta_{t+1} = \FEM(\theta_{t}, X_t)$\;
            $X_{t+1} = X_t - \eta_t \tfrac{\partial \mathcal{L}}{\partial
            \theta}(\theta_t) \tfrac{\partial \FEM}{\partial X}(\theta_t,
            X_t)$\; }
            \caption{Warm-Start EM Flow}
            \label{alg:warm_start_EM_flow}
        \end{algorithm}
    \end{minipage}
\end{center}

The gradient step at line 3 means that we perform automatic differentiation of
the expression $\mathcal{L}(\FEM(\theta_t, X_t))$ with respect to $X$ at $X_t$,
seeing $\theta_t$ as a constant and storing the value $\theta_{t+1} =
F(\theta_t, X_t)$ for the next iteration. In essence, the Warm-Start method
corresponds to an online OS gradient, which does not suffer from the
approximation error of the OS method (since only one step is performed), yet
benefits from the low memory footprint of the OS method.
\section{Gaussian Mixture Model Optimal Transport}\label{sec:gmmot}

\subsection{Reminders on GMM-OT}

This section summarises the main results from~\cite{delon2020wasserstein}. The
quadratic Wasserstein distance between two probability measures $\mu_0$ and
$\mu_1$ on $\mathbb{R}^d$ with finite second moments is defined by
\begin{equation}
    \label{OTgeneral:eq}
    \W_2^2(\mu_0,\mu_1) := \inf_{\pi \in \Pi(\mu_0,\mu_1)} \int_{\mathbb{R}^d\times\mathbb{R}^d} \|y_0-y_1\|_2^2 \dd\pi(y_0,y_1),
\end{equation} 
where $\Pi(\mu_0,\mu_1)$ denotes the set of probability measures with finite
second moments on $\mathbb{R}^d\times \mathbb{R}^d$ whose marginals are $\mu_0$
and $\mu_1$. A solution $\pi^*$ to~\cref{OTgeneral:eq} is called an optimal
transport plan between $\mu_0$ and $\mu_1$. This distance has been widely used
over the past fifteen years for various applications in data science. Let
$\GMM_d$ denote the set of probability measures that can be written as finite
Gaussian Mixture Models (GMMs) on~$\mathbb{R}^d$. Transport plans and
barycentres between GMMs with respect to $\W_2$ are generally not
GMMs, which is a limitation when such representations are used for data analysis
or generation. For this reason, the authors of~\cite{delon2020wasserstein}
propose to modify the $\W_2$ formulation by restricting the couplings to be GMMs
on $\R^d \times \R^d$. More precisely, given $\mu_0, \mu_1 \in \GMM_d$, one can
define
\begin{equation}\label{OTGMM:eq}
    \MW_2^2(\mu_0,\mu_1) := \inf_{\pi \in \Pi^{\text{GMM}}(\mu_0,\mu_1)} 
    \int_{\mathbb{R}^{2d}} \|y_0-y_1\|_2^2 \dd\pi(y_0,y_1),
\end{equation}
where $\Pi^{\text{GMM}}(\mu_0,\mu_1)$ denotes the set of probability measures in
$\GMM_{2d}$ with marginals $\mu_0$ and $\mu_1$. The problem is well-defined
since this set contains the product measure ${\mu_0 \otimes \mu_1}$. The authors
show that $\MW_2$ defines a distance between elements of $\GMM_d$. Moreover, if
$\mu_0 = \sum_{k=1}^{K_0} w_k^{(0)} \mu_k^{(0)}$ and $\mu_1 =
\sum_{\ell=1}^{K_1} w_\ell^{(1)} \mu_\ell^{(1)}$, where $(w_k^{(0)})_k \in
\simplex{K_0}$ and $(w_\ell^{(1)})_\ell \in \simplex{K_1}$, and $\mu_k^{(0)},
\mu_\ell^{(1)}$ are Gaussian measures, then it can be shown (\cite[Proposition
4]{delon2020wasserstein}) that
\begin{equation}\label{eq:mw2}
    \MW_2^2(\mu_0,\mu_1) = \min_{P \in \Pi(w_0,w_1)} \sum_{k,\ell} P_{kl} \W_2^2(\mu_k^{(0)},\mu_\ell^{(1)}),
\end{equation}
where $\Pi(w_0,w_1)$ is the set of $K_0 \times K_1$ matrices with non-negative
entries and marginals $w_0$ and $w_1$:
\[
\Pi(w_0,w_1) = \left\{P \in \mathcal{M}_{K_0,K_1}(\mathbb{R}^+);\; 
\forall k,\sum_j P_{kj} = w_k^{(0)} \text{ and } \forall j,\;
\sum_k P_{kj} = w_j^{(1)} \right\}.
\]
This discrete formulation makes $\MW_2$ very easy to compute in practice, even
in high dimensions. Indeed, the $\W_2$ distance between two Gaussian measures
$\mu = \mathcal{N}(m,\Sigma)$ and $\tilde{\mu} =
\mathcal{N}(\tilde{m},\tilde{\Sigma})$ admits a closed-form expression:
\begin{equation}
    \label{eq:wasserstein_gaussian}
    \W_2^2(\mu,\tilde{\mu}) = \|m - \tilde{m}\|_2^2 + \mathrm{tr}\left( \Sigma + \tilde{\Sigma} - 2 \left( \Sigma^{\frac 1 2} \tilde{\Sigma} \Sigma^{\frac 1 2} \right)^{\frac 1 2} \right),
\end{equation}
where $M^{\frac 1 2}$ denotes the unique positive semidefinite square root of
the positive semidefinite matrix $M$. If the parameters of the GMMs $\mu_0$ and
$\mu_1$ are known, computing~\cref{eq:mw2} amounts to evaluating $K_0 \times
K_1$ Wasserstein distances between Gaussians and solving a discrete transport
problem of size $K_0 \times K_1$. It is also possible to define barycentres for
$\MW_2$ \cite{delon2020wasserstein,tanguy2024computing}, which leads to a
similar discrete formulation. Given point clouds, \cite{delon2020wasserstein}
suggest using EM to fit GMMs to the data, allowing comparison of the point
clouds with EM-$\MW_2^2$.

\subsection{Stability of \texorpdfstring{$\MW_2^2$}{MW2} With Respect to GMM Parameters}\label{sec:sample_complexity}

To study the stability of the $\MW_2$ distance with respect to the GMM
parameters, we leverage a discrete OT stability result from
\cite{tanguy2023discrete_sw_losses}. To relate this problem to discrete OT
stability, we see $\MW_2^2(\mu_0, \mu_1)$ as a particular discrete Kantorovich
problem with cost matrix $C_{k_0, k_1} := \|m_{k_0}^{(0)} - m_{k_1}^{(1)}\|_2^2
+ \dBW^2(\Sigma_{k_0}^{(0)}, \Sigma_{k_1}^{(1)})$, where we recall the
expression of the Bures-Wasserstein distance on $\PSD$:
\begin{equation}\label{eqn:bures_wasserstein}
    \forall \Sigma, \Sigma' \in \PSD,\; \dBW(\Sigma, \Sigma') := 
    \sqrt{\Tr(\Sigma + \Sigma' - 2(\Sigma^{1/2}\Sigma'\Sigma^{1/2})^{1/2})}.
\end{equation}
We show that if the GMM parameters are sufficiently close (thanks to EM
convergence for instance), then the $\MW_2^2$ costs will also be close thanks to
the stability result from \cite{tanguy2023discrete_sw_losses}. While general
sample complexity results for the EM algorithm are not available, assuming a
certain rate of convergence towards the true parameters, this result shows that the
precision obtained using EM translates into a precision on the $\MW_2^2$
distance. This key observation is a first step towards guaranteeing the quality
of $\MW_2^2$ as a loss function with respect to the data. We formulate two
results: first, \cref{prop:sample_complexity_mw2_one_sample} quantifies the
decrease of $\MW_2^2(\hat\mu, \mu)$ when $\hat\mu$ is an estimator of $\mu$;
then, \cref{prop:sample_complexity_mw2_two_sample} quantifies the decrease of
$|\MW_2^2(\hat\mu_0, \hat\mu_1) - \MW_2^2(\mu_0, \mu_1)|$ when $\hat\mu_i$ are
estimators of $\mu_i$ for $i \in \{0, 1\}$. We defer the proofs to
\cref{sec:proofs_sample_complexity_mw2}.

\begin{prop}\label{prop:sample_complexity_mw2_one_sample} Consider
    GMM parameters $(\hat w, \hat m, \hat \Sigma) \in \simplex{K}\times
    \R^{K \times d} \times \PD^{K}$ of a GMM $\hat\mu$ as estimators of
    $(w, m, \Sigma) \in \simplex{K}\times \R^{K \times d} \times
    \PD^{K}$ which are parameters of a target GMM $\mu$. Assume
    ``convergence rates'' on the parameter estimations for
    $k \in \llbracket 1, K\rrbracket$:
    $$\E{\|w - \hat w\|_1} \leq \rho_w,\; \E{\W_2^2(\mathcal{N}(\hat m_k,
    \hat\Sigma_k), \mathcal{N}(m_k, \Sigma_k))} \leq \rho_\mathcal{N},$$ then
    the following stability bound holds:
    \begin{equation}\label{eqn:mw2_sample_complexity_one_sample}
        \E{\MW_2^2(\hat\mu, \mu)} 
        \leq \rho_{\mathcal{N}} + \frac{\rho_w}{2}
        \max_{k,\ell}\E{\W_2^2(\mathcal{N}(\hat m_k, \hat\Sigma_k), 
        \mathcal{N}(m_\ell, \Sigma_\ell))}.
    \end{equation}
\end{prop}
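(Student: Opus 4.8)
The plan is to reduce $\MW_2^2(\hat\mu,\mu)$ to the finite Kantorovich problem of \cref{eq:mw2} and then bound its value by exhibiting one explicit, nearly diagonal transport plan. By \cref{eq:mw2} together with the closed form \cref{eq:wasserstein_gaussian}, we have $\MW_2^2(\hat\mu,\mu)=\min_{P\in\Pi(\hat w,w)}\sum_{k,\ell}P_{k\ell}C_{k\ell}$ with $C_{k\ell}:=\W_2^2(\mathcal N(\hat m_k,\hat\Sigma_k),\mathcal N(m_\ell,\Sigma_\ell))$, so it suffices to produce one feasible $P$ and estimate $\sum_{k,\ell}P_{k\ell}C_{k\ell}$. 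First I would set $p_k:=\min(w_k,\hat w_k)$ and observe that the residual mass vectors $(\hat w_k-p_k)_k$ and $(w_\ell-p_\ell)_\ell$ are nonnegative, have the same total mass $s:=\tfrac12\|w-\hat w\|_1$, and have disjoint supports (for each index, at most one of the two residuals is nonzero). Picking any coupling $R$ between them — for instance the product coupling renormalised by $s$ — the matrix $P:=\diag(p)+R$ lies in $\Pi(\hat w,w)$, and since $R$ is automatically supported off the diagonal, this yields the deterministic estimate
\[
\MW_2^2(\hat\mu,\mu)\ \le\ \sum_k p_k\, C_{kk}\ +\ \sum_{k,\ell}R_{k\ell}\, C_{k\ell}\ \le\ \sum_k w_k\, C_{kk}\ +\ s\,\max_{k,\ell}C_{k\ell},
\]
using $p_k\le w_k$, $R_{k\ell}\ge 0$ and $\sum_{k,\ell}R_{k\ell}=s$. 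This is exactly the discrete optimal transport stability estimate of \cite{tanguy2023discrete_sw_losses} specialised to the cost matrix $C$, which is the form I would invoke.

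It then remains to take expectations. The diagonal contribution is immediate: since $w$ is deterministic with $\sum_k w_k=1$ and $C_{kk}=\W_2^2(\mathcal N(\hat m_k,\hat\Sigma_k),\mathcal N(m_k,\Sigma_k))$,
\[
\E{\textstyle\sum_k w_k\, C_{kk}}=\sum_k w_k\,\E{\W_2^2(\mathcal N(\hat m_k,\hat\Sigma_k),\mathcal N(m_k,\Sigma_k))}\le\rho_{\mathcal N}.
\]
For the residual contribution one has $s=\tfrac12\|w-\hat w\|_1$, so taking expectations and pulling the factor $\tfrac12$ out should give $\E{s\,\max_{k,\ell}C_{k\ell}}\le\tfrac12\,\rho_w\,\max_{k,\ell}\E{\W_2^2(\mathcal N(\hat m_k,\hat\Sigma_k),\mathcal N(m_\ell,\Sigma_\ell))}$; adding the two pieces then produces \cref{eqn:mw2_sample_complexity_one_sample}.

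The only genuinely delicate point is this last bound on the residual term, which asks to separate the expectation of the product $\|w-\hat w\|_1\cdot\max_{k,\ell}C_{k\ell}$ into a product of expectations — the weight error on one side and the Gaussian costs on the other. The clean way to justify it is to use independence between the weight-estimation error and the component-estimation errors, or a uniform almost-sure bound on the Gaussian Wasserstein costs, as available in the estimation settings under consideration; failing that, one keeps $\max_{k,\ell}\W_2^2$ inside the expectation at the price of a slightly weaker statement. Everything else — feasibility and disjoint-support of $P$, the identity $\sum_k(\hat w_k-p_k)_+=\tfrac12\|w-\hat w\|_1$, and the two one-line estimates above — is routine.
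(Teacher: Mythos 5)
Your construction is essentially the paper's own proof: the same near-diagonal coupling $P=\diag\bigl((\min(w_k,\hat w_k))_k\bigr)+R$ with $R$ the (renormalised) product coupling of the residual masses, the same identity $\sum_k\bigl(\hat w_k-\min(w_k,\hat w_k)\bigr)=\tfrac12\|w-\hat w\|_1$, and the same two-term bound on $\langle P,C\rangle$. Your handling of the diagonal term via $\sum_k w_k C_{kk}$ is in fact slightly cleaner than the paper's $\max_k C_{kk}$ (it commutes with the expectation since $w$ is deterministic), and the delicate point you flag --- interchanging the expectation with the product $\|w-\hat w\|_1\cdot\max_{k,\ell}C_{k\ell}$ --- is passed over silently in the paper's proof as well, so you are not missing any ingredient the authors actually supply.
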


\begin{prop}\label{prop:sample_complexity_mw2_two_sample} For $i\in \{0, 1\}$,
    consider GMM parameters $(\hat w_i, \hat m_i, \hat \Sigma_i) \in
    \simplex{K_i}\times \R^{K_i \times d} \times \PD^{K_i}$ of a GMM $\hat\mu_i$
    as estimators of $(w_i, m_i, \Sigma_i) \in \simplex{K_i}\times \R^{K_i
    \times d} \times \PD^{K_i}$ which are parameters of a target GMM $\mu_i$.
    Assume that the means and covariances are bounded, namely that there exists
    $R_m>0,\; R_\Sigma>0$ such that:
    $$\forall i \in \{0, 1\},\; \forall k \in \llbracket 1, K_i\rrbracket,\;
    \|m_{k}^{(i)}\|_2 \leq R_m,\; \|\hat m_{k}^{(i)}\|_2 \leq R_m,\; \sqrt{\Tr
    \Sigma_{k}^{(i)}} \leq R_\Sigma,\; \sqrt{\Tr \hat\Sigma_{k}^{(i)}} \leq
    R_\Sigma.$$ Further assume ``convergence rates'' on the parameter
    estimations $k \in \llbracket 1, K_i\rrbracket$:
    $$ \forall i \in \{0, 1\},\; \E{\|w_i - \hat w_i\|_1} \leq \rho_w,\;
    \E{\|m_k^{(i)} - \hat m_k^{(i)}\|_2} \leq \rho_m,\;
    \E{\dBW\left(\Sigma_k^{(i)}, \hat\Sigma_k^{(i)}\right)} \leq \rho_\Sigma,$$
    then the following stability bound holds:
    \begin{equation}\label{eqn:sample_complexity_mw2_two_sample}
        \E{\left|\MW_2^2(\hat\mu_0, \hat\mu_1) - \MW_2^2(\mu_0, \mu_1)\right|} 
        \leq 8R_m \rho_m + 8R_\Sigma \rho_\Sigma + 8(R_m^2 + R_\Sigma^2)\rho_w.
    \end{equation}
\end{prop}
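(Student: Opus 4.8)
The plan is to view both sides through the discrete reformulation \cref{eq:mw2}: writing $T(C;a,b):=\min_{P\in\Pi(a,b)}\langle C,P\rangle$, we have $\MW_2^2(\mu_0,\mu_1)=T(C;w_0,w_1)$ with $C_{k\ell}:=\|m_k^{(0)}-m_\ell^{(1)}\|_2^2+\dBW^2(\Sigma_k^{(0)},\Sigma_\ell^{(1)})$ (using \cref{eq:wasserstein_gaussian}), and $\MW_2^2(\hat\mu_0,\hat\mu_1)=T(\hat C;\hat w_0,\hat w_1)$ with $\hat C$ built from the hatted parameters. First I would insert an intermediate term and split by the triangle inequality,
\begin{equation*}
|\MW_2^2(\hat\mu_0,\hat\mu_1)-\MW_2^2(\mu_0,\mu_1)| \le |T(\hat C;\hat w_0,\hat w_1)-T(C;\hat w_0,\hat w_1)| + |T(C;\hat w_0,\hat w_1)-T(C;w_0,w_1)|,
\end{equation*}
isolating the perturbation of the cost matrix from that of the marginals, and bound the two terms separately before taking expectations.

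For the cost term, the standard estimate $|T(\hat C;a,b)-T(C;a,b)|\le\langle|\hat C-C|,P^\star\rangle$, where $P^\star$ is an optimal plan of one of the two problems (hence with marginals $a=\hat w_0$, $b=\hat w_1$), reduces matters to an entrywise bound on $|\hat C_{k\ell}-C_{k\ell}|$. Here I would use $|\,\|u\|^2-\|v\|^2|\le(\|u\|+\|v\|)\|u-v\|$ for the mean part, and for the covariance part the analogous identity $|\dBW^2(S,S')-\dBW^2(R,R')|\le(\dBW(S,S')+\dBW(R,R'))(\dBW(S,R)+\dBW(S',R'))$ together with the triangle inequality for the metric $\dBW$. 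The magnitude factors are controlled uniformly: $\|m_k^{(i)}\|_2,\|\hat m_k^{(i)}\|_2\le R_m$ gives $\|u\|+\|v\|\le 4R_m$, while dropping the nonnegative cross-term in \cref{eqn:bures_wasserstein} and using subadditivity of the square root gives $\dBW(\Sigma,\Sigma')\le\sqrt{\Tr\Sigma}+\sqrt{\Tr\Sigma'}\le 2R_\Sigma$, hence a factor $\le 4R_\Sigma$. Pairing $P^\star$ against these entrywise bounds collapses the double sum into single sums $\sum_k\hat w_k^{(i)}(\cdots)$ via the marginal constraints on $P^\star$; taking expectations and inserting $\E{\|\hat m_k^{(i)}-m_k^{(i)}\|_2}\le\rho_m$ and $\E{\dBW(\hat\Sigma_k^{(i)},\Sigma_k^{(i)})}\le\rho_\Sigma$ then gives $8R_m\rho_m+8R_\Sigma\rho_\Sigma$ for this term.

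For the marginal term, I would apply the discrete OT stability estimate of \cite{tanguy2023discrete_sw_losses}, which bounds $|T(C;\hat w_0,\hat w_1)-T(C;w_0,w_1)|$ by $\|C\|_\infty$ times $\|\hat w_0-w_0\|_1+\|\hat w_1-w_1\|_1$ (up to the precise constant of that reference). The same magnitude bounds give $\|C\|_\infty\le(2R_m)^2+(2R_\Sigma)^2=4(R_m^2+R_\Sigma^2)$, so taking expectations with $\E{\|\hat w_i-w_i\|_1}\le\rho_w$ yields $8(R_m^2+R_\Sigma^2)\rho_w$. Summing the two contributions gives \cref{eqn:sample_complexity_mw2_two_sample}.

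I expect the main obstacle to be the constant bookkeeping rather than any conceptual step: one must verify that routing the per-component errors through the transport plan's marginals (which sum to one) prevents a spurious factor $K_i$ from appearing, and that the uniform bounds on means and covariances are precisely what convert the Lipschitz-in-cost and Lipschitz-in-marginal estimates into the $R_m\rho_m$, $R_\Sigma\rho_\Sigma$ and $(R_m^2+R_\Sigma^2)\rho_w$ terms with the stated constant $8$. A secondary point is to invoke the stability lemma of \cite{tanguy2023discrete_sw_losses} in the precise form used above; the two-step triangle split lets one apply its cost-stability and marginal-stability parts in turn rather than jointly.
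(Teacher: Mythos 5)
Your proposal is correct and takes essentially the same route as the paper: both reduce $\MW_2^2$ to a discrete Kantorovich problem with cost $C_{k\ell}=\|m_k^{(0)}-m_\ell^{(1)}\|_2^2+\dBW^2(\Sigma_k^{(0)},\Sigma_\ell^{(1)})$, bound $\|C\|_\infty\le 4(R_m^2+R_\Sigma^2)$ and $|\hat C_{k\ell}-C_{k\ell}|$ entrywise via $|s^2-t^2|\le(|s|+|t|)\,|s-t|$ together with the triangle inequality for $\dBW$, and conclude by stability of discrete OT in the marginals, yielding the same constants. The only cosmetic difference is that you split the perturbation into a cost step (handled with an optimal plan) and a marginal step, whereas the paper applies the combined estimate of \cite[Lemma 2.1]{tanguy2023discrete_sw_losses} in one shot; note that in the appendix the paper's argument for this statement appears under the heading ``Proof of \cref{prop:sample_complexity_mw2_one_sample}'', the two proof paragraphs having been swapped.
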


We complement the stability bounds of
\cref{prop:sample_complexity_mw2_one_sample} and
\cref{prop:sample_complexity_mw2_two_sample} with an empirical study. We fix
$K=3$, $d=2$, vary the sample size $n$ between $10^3$ and $2\cdot 10^4$, and for
each $n$ run $40$ repetitions of EM with $200$ iterations (with \emph{k-means++}
initialisation), noting the resulting estimates $\hat{\mu}_n$ and $\hat{\nu}_n$.
Three regimes of component separation (low, medium, high) are considered,
controlled by a scale parameter $\sigma$ applied to the covariances. The
mixtures $\mu$, $\nu$ are fixed, and we provide a visualisation in
\cref{sec:grad_comparison_gmms}. For each $n$, we report the median error and
interquartile range across repetitions.
\cref{fig:mw2_sample_complexity}\subref{fig:mw2-single} shows
$\MW_2^2(\hat\mu_n,\mu)$. With medium and high component separation, the error
decreases regularly with $n$, while with low separation it remains flat,
indicating that EM does not improve the estimation significantly in this regime.
\cref{fig:mw2_sample_complexity}\subref{fig:mw2-pairs} shows the relative error
$|\MW_2^2(\hat\mu_n,\hat\nu_n)-\MW_2^2(\mu,\nu)|/\MW_2^2(\mu,\nu)$. Medium and
high separation again yield decay with $n$, while low separation plateaus. The
results above translate parameter error rates of EM into rates for $\MW_2^2$,
but do not specify a universal value. In well-separated special cases, EM is
known to achieve a rate of $\O(n^{-1/2})$ (see \cite{kwon_em_2020} for spherical
covariances). When separation is weak, EM appears not to converge reliably,
hence the $\MW_2^2$ error does not decrease.

\begin{figure}[ht]
    \centering
    \noindent\hspace*{\fill}
    \begin{subfigure}[t]{0.4\linewidth}
        \centering
        \includegraphics[width=\linewidth]{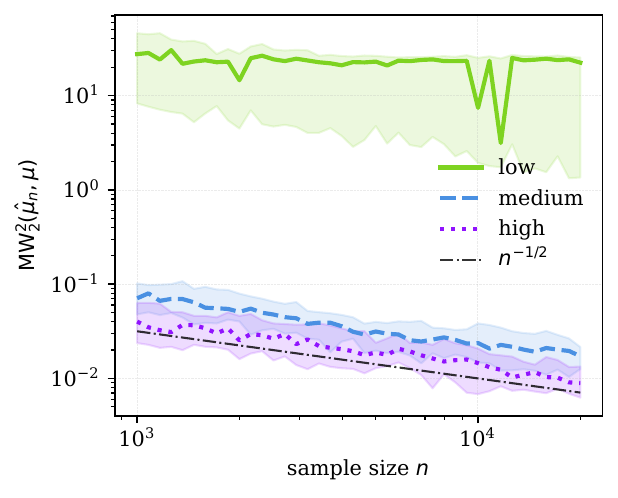}
        \caption{One-sample: $\MW_2^2(\hat\mu_n,\mu)$}
        \label{fig:mw2-single}
    \end{subfigure}
    \hfill
    \begin{subfigure}[t]{0.4\linewidth}
    \centering
    \includegraphics[width=\linewidth]{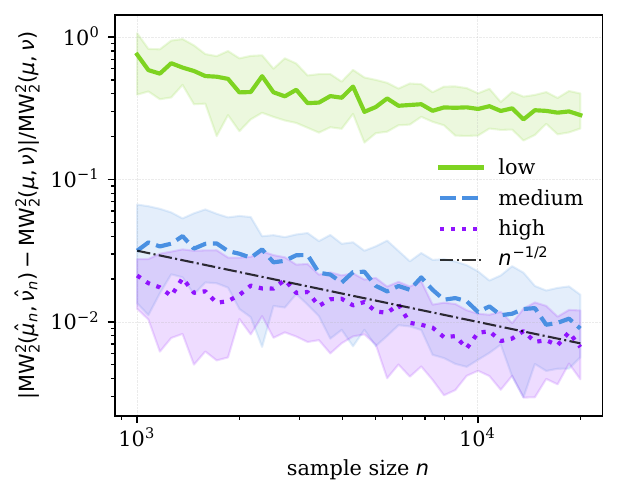}
    \caption{Two-sample: relative error}
    \label{fig:mw2-pairs}
    \end{subfigure}
    \hspace*{\fill}
    \caption{Sample complexity of $\MW_2^2$ for three specific GMMs of ``low''
    to ``high'' mode separation. Curves show the median across $40$ repetitions,
    with shaded interquartile ranges.}
    \label{fig:mw2_sample_complexity}
\end{figure}

\subsection{Minimisation of \texorpdfstring{$\EM-\MW_2^2$}{EM-MW2}: Local Optima
and Weight Fixing}\label{sec:fixweights}

In practice, the minimisation of the energy $X \longmapsto
\MW_2^2(F_X^T(\theta_0), \nu)$ for some initialisation $\theta_0 \in \Theta$ and
a target GMM $\nu$ comes with numerous challenges. The first hurdle is the
``outer'' minimisation of the $\MW_2^2$ cost. To illustrate this difficulty, we
begin with a study of the simpler energy $\mu \longmapsto \W_2^2(\mu, \nu)$ for
a fixed (discrete) measure $\nu \in \mathcal{P}_2(\R^d)$ with respect to the
weights and support of the discrete measure $\mu = \sum_{i=1}^na_i\delta_{x_i}$.
This setting corresponds to the optimisation of $\MW_2^2$ with known
covariances, and thus highlights practical bottlenecks for the minimisation of
the complete energy at stake, $\EM-\MW_2^2$. The objective of this section is to
provide a theoretical rationale for fixing the mixture weights in practical
applications, which is to say using \cref{alg:EM_fw} instead of standard EM
(\cref{alg:EM}).

\paragraph{Local Minima of the Discrete 2-Wasserstein Distance} We focus on a
particular instance of the minimisation of $\mu \longmapsto \W_2^2(\mu, \nu)$,
and show the existence of a strict local minimum. We parametrise a discrete
measure $\mu \in \mathcal{P}(\R)$ with a support size of 3 as follows:
$$\forall \alpha \in [-\tfrac{1}{6}, \tfrac{1}{6}]^2,\; \forall \eta \in
(-\tfrac{1}{2}, \tfrac{1}{2})^3,\; \mu_{\alpha, \eta} := (\tfrac{1}{6} +
\alpha_1)\delta_{\eta_1} + (\tfrac{1}{6} + \alpha_2)\delta_{\eta_2} +
(\tfrac{2}{3} - \alpha_1 - \alpha_2)\delta_{1+\eta_3}, $$ and we fix a target
measure $\nu := \frac{1}{3}(\delta_0 + \delta_{1-\varepsilon} +
\delta_{1+\varepsilon})$ for a fixed $\varepsilon \in (0, \frac{1}{2})$. The
energy to minimise is then:
\begin{equation}\label{eqn:W2_local_minimum_energy}
    \Ethree := \app{[-\frac{1}{6}, \frac{1}{6}]^2\times 
    (-\frac{1}{2}, \frac{1}{2})^3}{\R}{(\alpha, \eta)}
    {\W_2^2(\mu_{\alpha, \eta}, \nu)}.
\end{equation}
Obviously, the energy $(\alpha, \eta) \longmapsto \W_2^2(\mu_{\alpha, \eta},
\nu)$ has a global minimum with value 0 at all $(\alpha, \eta)$ such that
$\mu_{\alpha, \eta} = \nu$. However, on the region with $(\alpha, \eta) \in
[-\tfrac{1}{6}, \tfrac{1}{6}]^2 \times (-\tfrac{1}{2}, \tfrac{1}{2})^3$, we show
in \cref{sec:local_minima_w2_computations} that the energy $\Ethree$ has a
minimum at $\alpha = 0_{\R^2}$ and $\eta = 0_{\R^3}$, with value
$\Ethree(0_{\R^2}, 0_{\R^3}) > 0$. Note that for the case $n=2$ we can show that
there is a unique local minimum, see \cref{sec:unique_local_min_w2_n2}.

\paragraph{Essential Stationary Points for the
\texorpdfstring{$\EM-\MW_2^2$}{EM-MW2}
Loss} We have seen in the previous paragraph that optimising $\mu \longmapsto
\W_2^2(\mu, \nu)$ with respect to the weights and support of $\mu$ can lead to
local minima, which are not global minima. An additional difficulty arises when
optimising the energy
\begin{equation}\label{eqn:MW2_EM_loss}
    \EEMMW:= X \longmapsto \MW_2^2(\mu(F(\theta, X)), \nu),
\end{equation}
with one iteration $F$ of the EM algorithm, due to the update on the weights.
The issue is that at some problematic points $X$ to which the algorithm often
converges in practice, the gradient $\partial_X \EEMMW(X)$ becomes extremely
small, in particular when the covariances are highly localised. This leads in
practice to undesirable convergence to an essential local minimum, as
illustrated by an example in \cref{sec:xp_em_mw_flow_weights}. We provide a
theoretical explanation in a simple case in
\cref{sec:vanishing_gradients_EM_MW_computations}. To avoid these numerical
issues, we propose fixing the weights of the GMMs in the EM steps by using
\cref{alg:EM_fw}.

Our theoretical observations suggest that considering GMMs with uniform weights
and using fixed-weights EM (\cref{alg:EM_fw}) is a more stable alternative to
standard EM (\cref{alg:EM}). In practice, we believe it is also preferable to
keep the same number of components $K$ between the compared GMMs for additional
stability. Note that an identifiability issue remains with GMMs: if the means
and covariances of two modes coincide, then the GMM can also be written by
fusing both components and adding their weights. At this stage, it remains
unclear whether this phenomenon has an impact on the optimisation behaviour
(note that we never observed it in our experiments).

\subsection{Unbalanced GMM-OT}\label{sec:unbalanced}

Starting from the discrete formulation of the $\MW_2$ distance, we relax the
constraints on the transport plan $\pi$, penalising the marginal conditions
instead of enforcing them in the optimisation problem. The resulting
optimisation problem defines an unbalanced GMM-OT distance on the set
$\GMM_d^+(\infty)$ of GMMs with positive weights on $\R^d$, as in
\cite{liero2018optimal}. Given two Gaussian mixtures $ \mu =
\sum_{k_0=1}^{K_0}w_{k_0}^{(0)} g_{k_0},\quad \nu =
\sum_{k_1=1}^{K_1}w_{k_1}^{(1)} g_{k_1}\; \in\GMM_d^+(\infty),$ and
regularisation parameters $(\lambda_0, \lambda_1) \in (0, +\infty)^2$, the
unbalanced GMM-OT cost is defined as:
\begin{equation}\label{eqn:unbalanced_gmmot}
    \UMW_2^2(\mu, \nu; \lambda_0, \lambda_1) 
    := \underset{\pi \in \R_+^{K_0\times K_1}}{\min}\ 
    \sum_{k_0, k_1} \pi_{k_0, k_1} \W_2^2(g_{k_0}, g_{k_1})
    + \lambda_0 \KL(\pi\mathbf{1} | w_0)
    + \lambda_1 \KL(\pi^\top\mathbf{1} | w_1),
\end{equation}
where we recall that for $a, b \in (0, +\infty)^K$, the Kullback-Leibler
divergence is $\KL(a|b) := \sum_k a_k \log(\tfrac{a_k}{b_k})$. We have seen that
$\MW_2$ is a particular discrete Kantorovich problem, and likewise the
unbalanced GMM-OT distance $\UMW_2$ is simply an unbalanced discrete OT problem
with a particular cost matrix.

Given the numerical challenges of optimising the weights in the balanced
formulation (see the discussion in \cref{sec:fixweights}), we introduce this
variant as a possibly more stable alternative. We suspect that the underlying
geometry on the weights induced by unbalanced OT
\cite{liero2018optimal,chizat2018unbalanced} is more amenable to optimisation.
Furthermore, unbalanced OT has been shown by \cite{fatras2021unbalanced} to be
stable with respect to minibatch sampling, which is paramount for large-scale
machine learning applications.
\section{Illustrations and Quantitative Study of Gradient
Methods}\label{sec:diff_em_illustrations}
\subsection{Practical Implementation}

For practical implementation of the EM algorithm, some specific implementation
strategies are required to ensure numerical stability, in particular when
computing gradients. The first technique is applied in the E step, and consists
in computing the responsibilities $\gamma_{ik}(\theta_t)$ in logarithmic space
and using the so-called ``log-sum-exp trick''\footnote{for instance, see
\href{https://gregorygundersen.com/blog/2020/02/09/log-sum-exp/}{this blog post}
for an explanation of this well-known trick.} to compute the normalisation in
\cref{eqn:gamma_ik}. Furthermore, to stabilise the (differentiable) expression
of the Gaussian density $g_{m, \Sigma}(x)$, we leverage the Cholesky
decomposition of the covariance matrix $\Sigma$, which uniquely decomposes
$\Sigma = LL^\top$ where $L\in \R^{d\times d}$ is a lower triangular matrix. In
particular, the computation of the inverse is simplified by solving triangular
systems, and the determinant of $\Sigma$ is simply $\det \Sigma = (\prod_a
L_{aa})^2$ (which we compute in logarithmic space as well).

Another important implementation aspect concerns a differentiable implementation
of the matrix square root of symmetric positive semidefinite matrices. This is
required in the computation of the Bures distance \cref{eqn:bures_wasserstein}
for the $\MW_2$ distance. Unfortunately, the naive implementation using the
spectral decomposition suffers from numerical instability when eigenvalues
are very close in value\footnote{as explained in the
\href{https://docs.pytorch.org/docs/stable/generated/torch.linalg.eigh.html}{PyTorch
documentation} for \texttt{torch.linalg.eigh()}.}. Leveraging an explicit
formula for the gradient of the matrix square root (detailed in
\cref{sec:diff_matrix_sqrt}), we circumvent these numerical issues by
implementing our own differentiable square root function with an explicit
gradient.

As is done in
\href{https://scikit-learn.org/stable/modules/mixture.html}{scikit-learn's
implementation} of the EM algorithm, we have an optional regularisation term
$\varepsilon_r \geq 0$ for the covariance matrices $\Sigma_k$ to ensure
positive-definiteness. The idea is to replace the update $\Sigma_k^{(t+1)}$ with
$\Sigma_k^{(t+1)} + \varepsilon_r I_d$ to enforce a minimum eigenvalue of
$\varepsilon_r$. This regularisation was particularly crucial for numerical
stability in higher-dimensional cases where the covariances were almost
singular, which led to exploding gradients. In the larger-scale examples from
\cref{sec:diff_em_applications}, we chose a heuristic which sets $\varepsilon_r
:= 10^{-4} \times s_{\max}$, where $s_{\max}$ is the largest eigenvalue of the
covariances of a GMM fitted on the target data.

\subsection{Flow of \texorpdfstring{$\mathrm{EM}-\MW_2^2$ with Fixed Weights}{EM
MW2} in 2D}\label{sec:xp_em_mw_flow_fixed_weights}

In this section, we illustrate the use of differentiable EM for OT by
numerically computing the flow (i.e. gradient descent) of the following energy:
\begin{equation}
    \calE_T := X \in \R^{n\times 2} \longmapsto 
    \MW_2^2(\mu(\FEM^T_X(\theta_0)), \nu),
\end{equation}
for a fixed target GMM $\nu$, an initialisation $\theta_0$ and a number of EM
steps $T$. We use a variant of EM presented in \cref{alg:EM_fw} that
\textbf{fixes the mixture weights} in this experiment. We will compare three
gradient computation methods to compute (or approximate) the gradient of
$\FEM^T_X(\theta_0)$ with respect to $X$, within the gradient descent of
$\calE_T$, performed using automatic differentiation. The setup is as follows:
the initial dataset $X \in \R^{200\times 2}$ corresponds to samples of a GMM
$\mu_0$ with 3 components, and we want to displace this point cloud to match a
target GMM $\nu$ with 3 components. We represent the setup in
\cref{fig:xp_em_mw_flow_setup} and the flow for AD method in
\cref{fig:xp_em_mw_flow_full_auto}. 

The results for the AI method are both visually and quantitatively very close,
however the experiment took six times longer to run for AI. We observe
satisfactory convergence of the flow of $\calE_T$ towards the target GMM $\nu$.
In many applications involving fixed EM weights (\cref{alg:EM_fw}), we observe
that particles follow rectilinear trajectories, which is a similar
behaviour to Wasserstein flows of $\W_2^2$ (see \cite[Section
5.3]{chewi2024statistical}). We interpret this phenomenon as a consequence of
the fixed weights, which translate to a Lagrangian viewpoint on the GMMs. In
simple cases, the $\MW_2^2$-optimal plans between the GMMs may not change during
the flow, and thus the particles are moved along the induced (rectilinear)
trajectories between each GMM component (see \cite[Proposition
4]{delon2020wasserstein}). In \cref{fig:xp_em_mw_flow_OS}, we show the flow with
the OS method, which converges more slowly and to an unsatisfactory stationary point.
This is due to the fact that OS requires a contraction assumption that is not
verified for EM. OS was comparable in computation time to AD. We also
experimented with the Warm-Start flow from \cref{alg:warm_start_EM_flow}, which
is a different minimisation method to minimise $\calE_T$, yet yielded almost
identical results to the AD, with a 40\% lower computation time.

\begin{figure}[ht]
    \centering
    \begin{subfigure}[t]{0.32\textwidth}
        \centering
        \includegraphics[width=\textwidth]{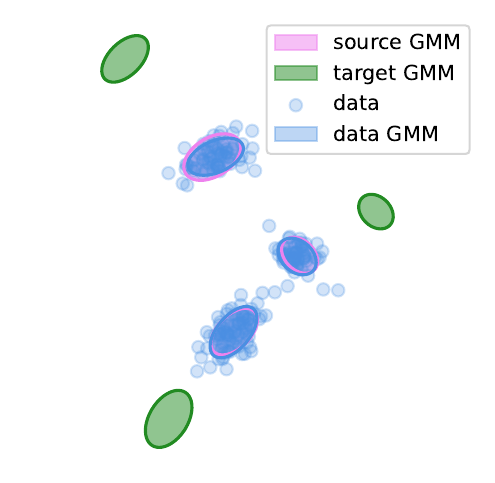}
        \caption{$\calE_T$ flow setup.}
        \label{fig:xp_em_mw_flow_setup}
    \end{subfigure}
    \hfill
    \begin{subfigure}[t]{0.32\textwidth}
        \centering
        \includegraphics[width=\textwidth]{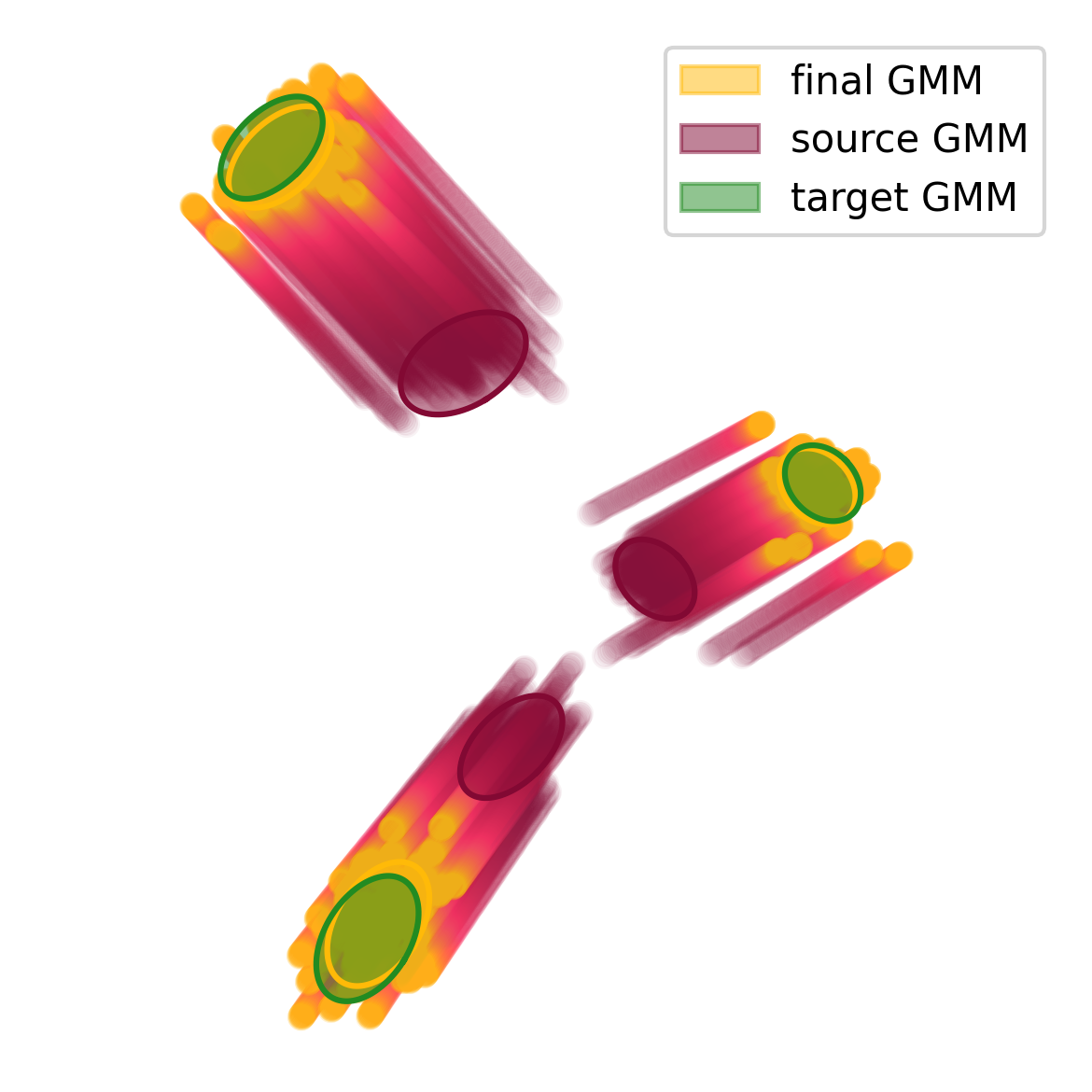}
        \caption{AD method.}
        \label{fig:xp_em_mw_flow_full_auto}
    \end{subfigure}
    \hfill
    \begin{subfigure}[t]{0.32\textwidth}
        \centering
        \includegraphics[width=\textwidth]{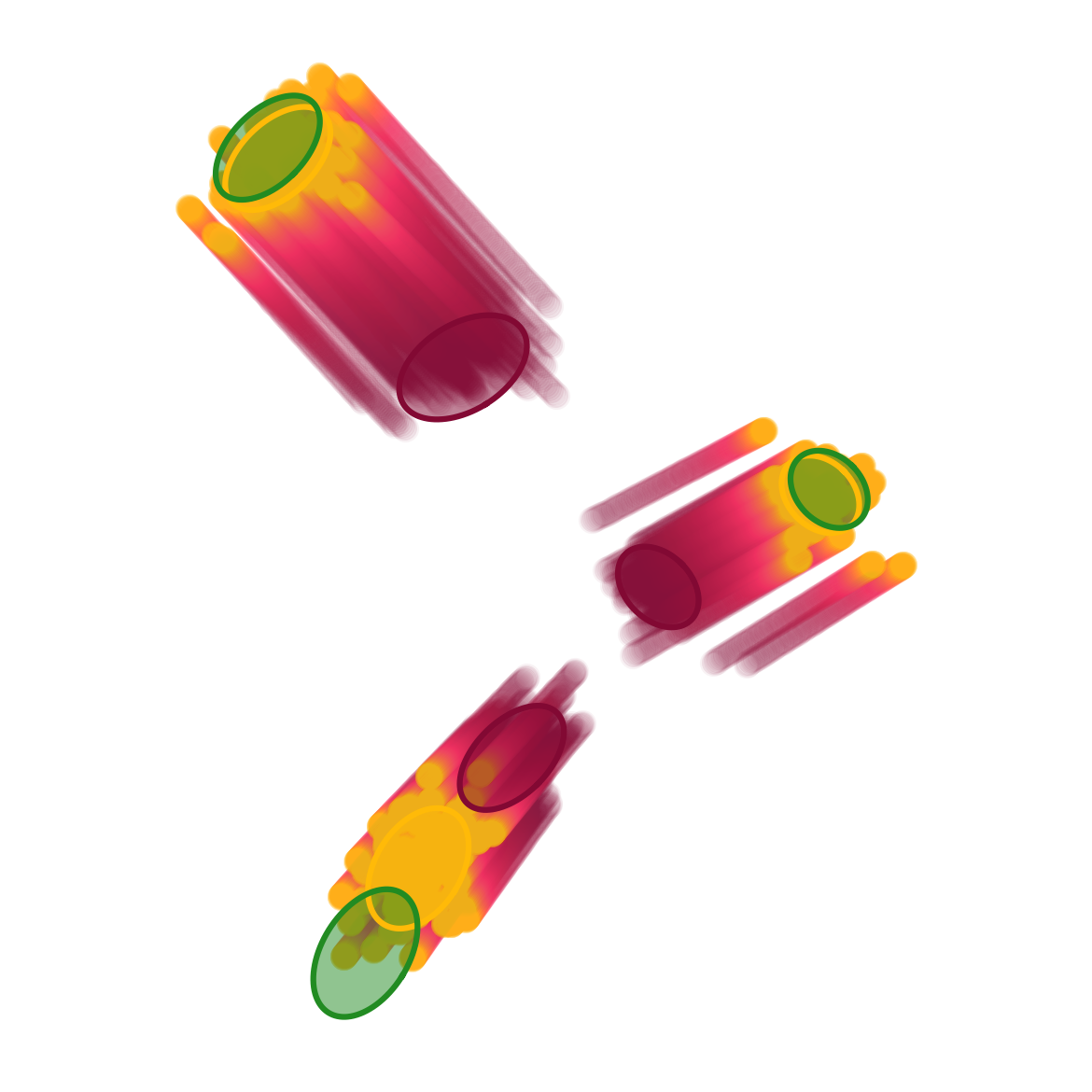}
        \caption{OS method.}
        \label{fig:xp_em_mw_flow_OS}
    \end{subfigure}
    \caption{Comparison of experimental setup and flows of $\calE_T$ using
    different methods. The dark shades of purple correspond to earlier
    iterations, and the yellow shades to later iterations. (Warm-Start and AI
    are almost identical to AD)}
\end{figure}

\subsection{Flow of \texorpdfstring{$\mathrm{EM}-\MW_2^2$}{EM MW2} in 2D: Discussion on Uniform Weights}\label{sec:xp_em_mw_flow_weights}

We now consider a similar setting to \cref{sec:xp_em_mw_flow_fixed_weights}, but
without fixing the weights in the EM algorithm, i.e. using standard EM
\cref{alg:EM}. We compare two settings: the first with non-uniform weights $w_0
:= (\tfrac{1}{5}, \tfrac{1}{5}, \tfrac{3}{5})$ for the initial GMM, and weights
$w_1 := (\tfrac{1}{2}, \tfrac{3}{10}, \tfrac{1}{5})$ for the target GMM; and the
second with uniform weights for both. In \cref{fig:EM_GD_full_auto_weights}, we
show the flow of $\calE_T$ with the AD method. We observe in
\cref{fig:non_uniform_weights_EM_flow} that the flow for non-uniform weights
converges to an unsatisfactory local minimum, with a final GMM weight of
\texttt{[0.41039274, 0.2030173, 0.38658995]} instead of the target \texttt{[0.5,
0.3, 0.2]}, as shown on the simplex in
\cref{fig:non_uniform_weights_EM_simplex}. In contrast, the flow for uniform
weights presented in \cref{fig:uniform_weights_EM_flow} converges to the target
GMM and achieves a substantially lower energy, as reported in
\cref{fig:unif_vs_non_unif_EM_loss}. The weights stay close to uniform, with a
final GMM weight of \texttt{[0.33314218, 0.30985782, 0.357]}.

The optimisation failure in the non-uniform case is due to the essential
stationary point problem illustrated in \cref{sec:fixweights}: intuitively, to
change the weights of the current GMM, the particles need to change components,
but this is not possible if the components are too distant. While our framework
encompasses the case of non-uniform weights, as illustrated theoretically and
experimentally, it appears that the non-uniform weight setting is impractical.
As a result, we recommend using uniform weights, in particular using the
fixed-weights EM approach (\cref{alg:EM_fw}) for speed and stability.

\begin{figure}[ht]
    \centering
    \begin{subfigure}[t]{0.24\textwidth}
        \centering
        \includegraphics[width=\textwidth]{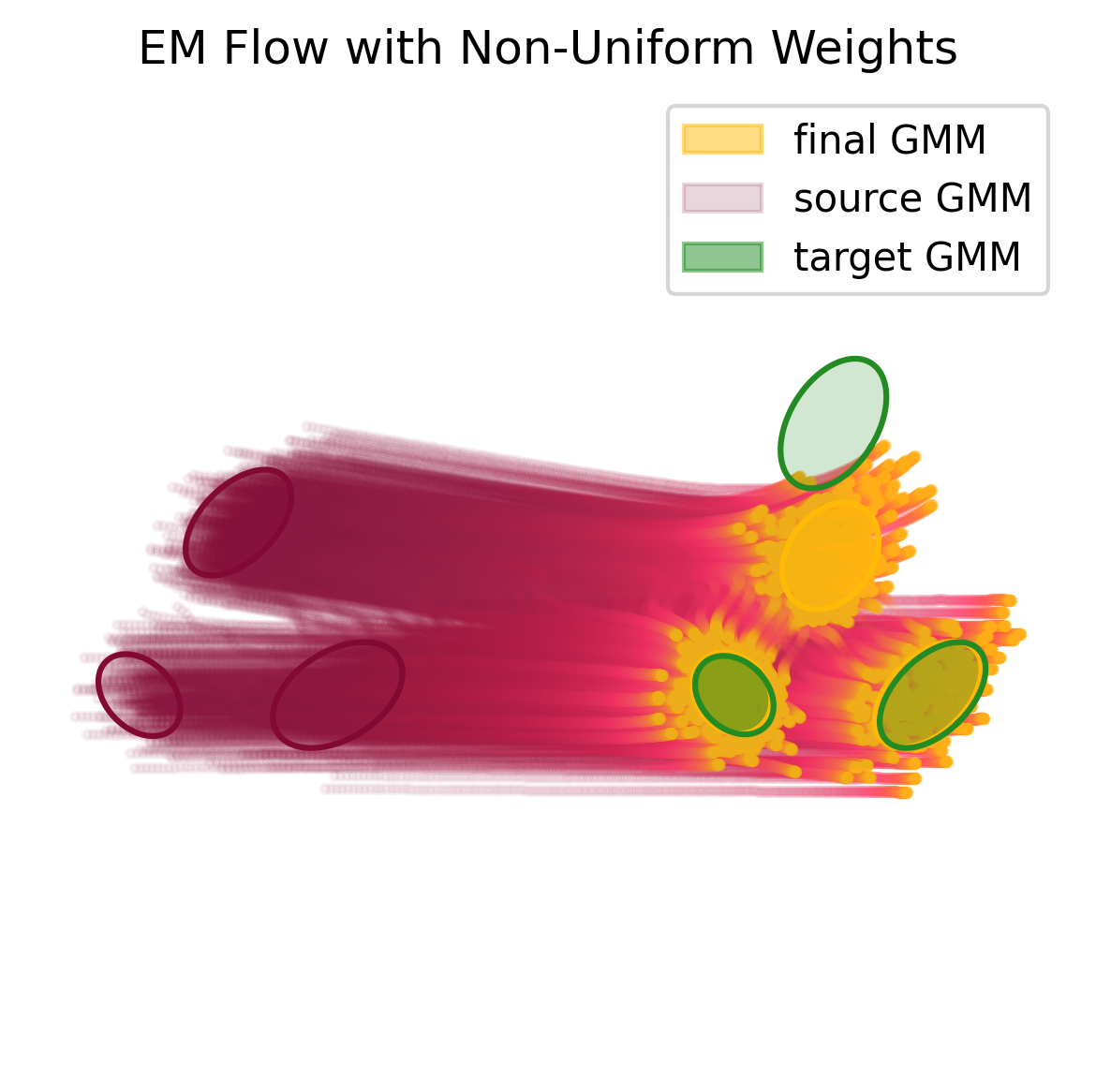}
        \caption{Particle flow (NU).}
        \label{fig:non_uniform_weights_EM_flow}
    \end{subfigure}
    \hfill
    \begin{subfigure}[t]{0.24\textwidth}
        \centering
        \includegraphics[width=\textwidth]{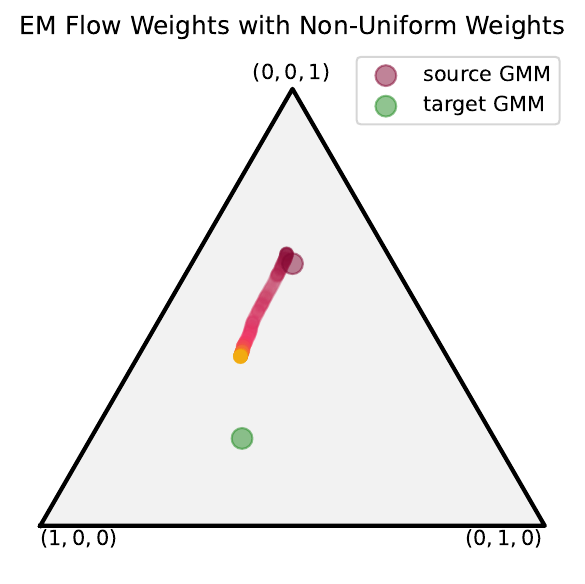}
        \caption{Weight evolution on the simplex $\simplex{3}$ (NU).}
        \label{fig:non_uniform_weights_EM_simplex}
    \end{subfigure}
    \hfill
    \begin{subfigure}[t]{0.24\textwidth}
        \centering
        \includegraphics[width=\textwidth]{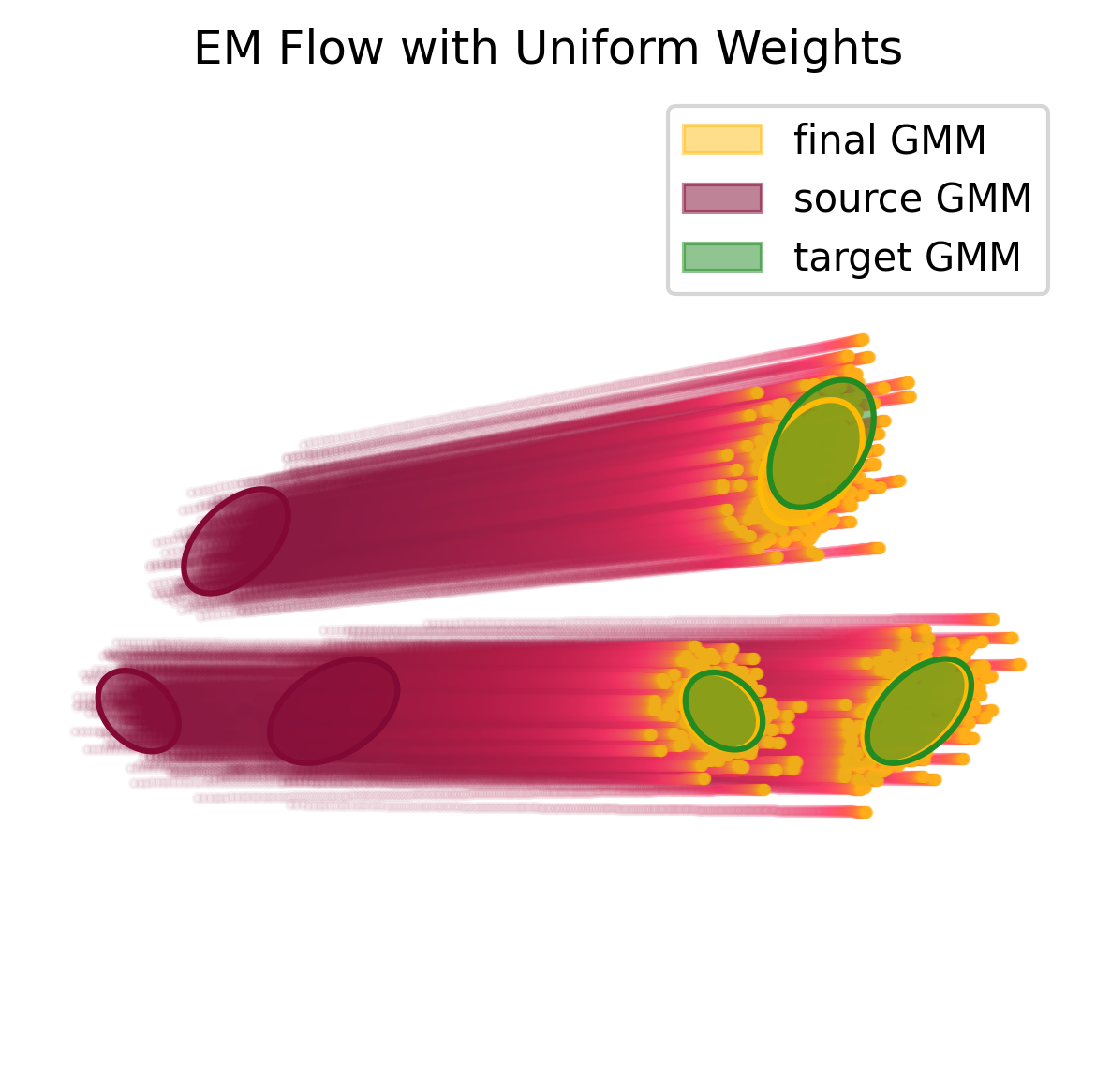}
        \caption{Particle flow (U).}
        \label{fig:uniform_weights_EM_flow}
    \end{subfigure}
    \hfill
    \begin{subfigure}[t]{0.24\textwidth}
        \centering
        \includegraphics[width=\textwidth]{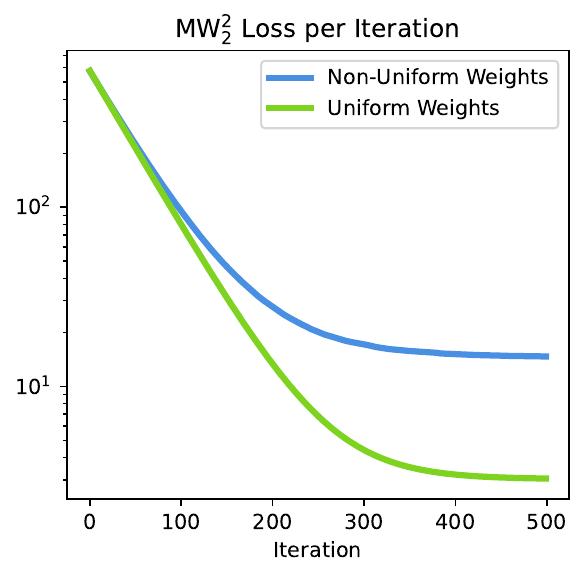}
        \caption{Energy evolutions.}
        \label{fig:unif_vs_non_unif_EM_loss}
    \end{subfigure}
    \caption{Flow of $\calE_T$ with the AD method and standard EM \cref{alg:EM}.
    We compare two settings: one with non-uniform GMM weights (NU) and one with
    uniform weights (U).}
    \label{fig:EM_GD_full_auto_weights}
\end{figure}

\subsection{Stochastic \texorpdfstring{$\mathrm{EM}-\MW_2^2$}{EM MW2} Flow with
Fixed Weights}\label{sec:xp_em_mw_sgd}

We consider a similar setting to \cref{sec:xp_em_mw_flow_fixed_weights} but
introduce stochasticity in the flow at each step by performing EM only on a
subsample of the optimised source point cloud and of the target point cloud.
While we illustrate the technique on a toy example here, this ``minibatch''
stochastic gradient descent method is useful in practice when the dataset size
is too large for simultaneous optimisation
\cite{fatras2020learning,fatras2021minibatch,fatras2021unbalanced,tong2024improving}.
The same principle is applied to the image generation task in
\cref{sec:em_mw2_gan}. We observe in \cref{fig:EM_SGD} that the general
trajectory remains similar to the deterministic case. Notice that in this
setting, the components are sufficiently close together to interact, yielding
non-rectilinear trajectories when points are influenced by multiple components.
This is amplified by the stochasticity of the method.

\begin{figure}[ht]
    \centering
    \includegraphics[width=\textwidth]{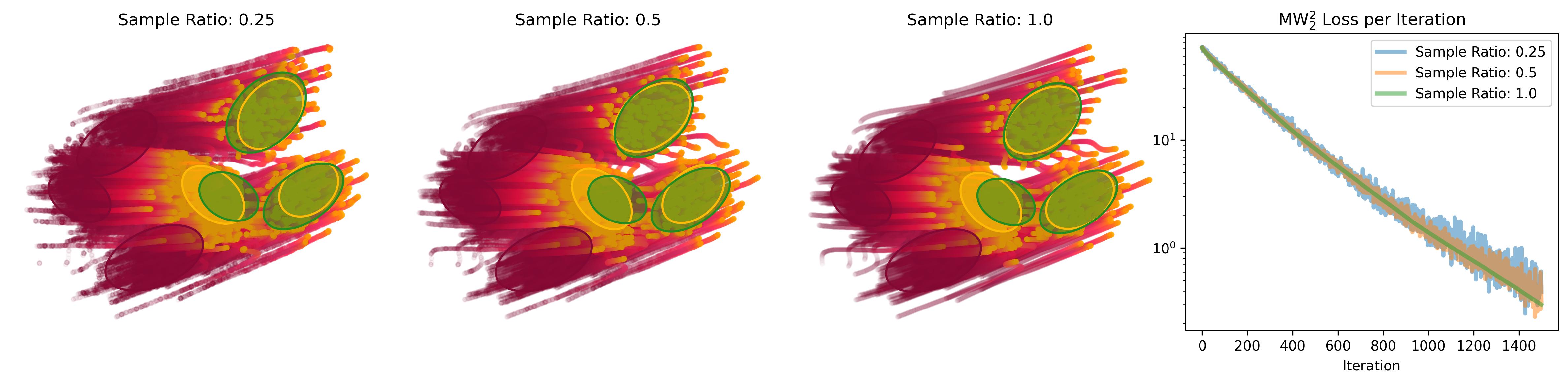}
    \caption{Stochastic Flow of $\calE_T$ for \cref{alg:EM_fw} with the full
    automatic differentiation method. We vary the sub-sampling ratio $r \in
    (0,1]$, which corresponds to performing EM on only $[r\times n]$ random
    points from the current point cloud at each step.}
    \label{fig:EM_SGD}
\end{figure}
\subsection{Quantitative Study of EM Convergence and
Gradients}\label{sec:xp_gradients_comparison}

We study the impact of the number of points $n$, components $K$ and EM
iterations $T$ on the convergence of EM iterations (to a fixed point of $\FEM$),
the local contractivity of $\FEM$ around the fixed point, and the gradient
approximation methods introduced in \cref{sec:em_grad_methods}. The experimental
setting is as follows: for each of the three parameters $n, K, T$ separately
(say $n$), we consider a range of values (say $n \in \{100, 200, 500, 1000,
1500, 2000\}$) with the others fixed. For each value of the parameter in this
range, and for three different GMMs in $\GMM_d(K)$ with $d=3$ (shown in
\cref{sec:grad_comparison_gmms}), we sample the data from $X \sim \mu_0^{\otimes
n}$ 60 times, then run the EM algorithm for $T$ iterations. We run the
experiments on three different GMMs taken with random parameters (adding a small
term $10^{-14} I_d$ to the covariances to avoid vanishing eigenvalues). Note
that GMM$\#$ 1 is better conditioned than GMM$\#$ 2, and GMM$\#$ 3 is the
worst-conditioned. We observe the mean squared error of the fixed point property
$F(\theta_T, X)\approx \theta_T$ by evaluating $\tfrac{1}{p}\|\theta_T -
F(\theta_T, X)\|_2^2$ with $p := K + Kd + Kd^2$, measuring the quality of
convergence of the EM algorithm. To study the local contractivity of $\FEM$, we
compute the spectral norm $\opn{\partial_\theta \FEM(\theta_T, X)}$: if this is
close to 0, then locally the iterated function $\FEM_X$ has a tame behaviour and
the OS method is expected to work well, while if it is close to or larger than
1, the local landscape is difficult and the OS method is expected to fail.
Finally, we compare the OS and AI gradients (from \cref{eqn:def_OS,eqn:def_AI})
to the reference AD gradient by computing the relative MSEs $\tfrac{1}{p}\|\JOS
- \JAD\|_2^2 / (\tfrac{1}{p}\|\JAD\|_2^2)$ and $\tfrac{1}{p}\|\JAI - \JAD\|_2^2
/ (\tfrac{1}{p}\|\JAD\|_2^2)$, where $\JAD$ is the AD gradient, which serves as
a baseline (see \cref{sec:em_grad_gt} for a discussion on this choice).
Concerning the impact of the number of components $K$, we defer to
\cref{sec:impact_K_grad_comp}, since the findings are less conclusive.

\paragraph{Impact of the number of samples $n$} We begin by fixing $d=3$, $K=3$
and $T=30$ and varying $n \in \{100, 200, 500, 1000, 1500, 2000\}$. The results
are shown in row 1 of \cref{fig:grad_comparison_fixed_gmm}, and we observe that
EM appears to converge to a fixed point for all $n$, albeit with a large
variance in the MSE depending on the sampled GMMs. The spectral norm of the
Jacobian is often close to 0.6 and has no clear trend with $n$, hence we expect
the OS method to be a very coarse approximation of the true gradient. The
quality of the OS gradient is relatively poor, and substantially worse than the
AI gradient, whose median MSE is much smaller, but suffers from very high
variance (in log space). Comparing GMMs shows that a precise EM convergence
leads to high precision for the AI gradient.
\paragraph{Impact of the number of EM iterations $T$} Finally, we fix $n=200,
d=3$ and $K=3$, and vary $T \in \{1, 2, 5, 10, 15, 20, 30, 40\}$ in row 3 of
\cref{fig:grad_comparison_fixed_gmm}. Reassuringly, increasing the number of
iterations $T$ leads to improved convergence of the EM algorithm to
better-conditioned points. The convergence speed seems heavily dependent on the
GMM, with an additional variance caused by the dataset sampling. In the
favourable settings for larger $T$, the AI approximation substantially
outperforms the OS approximation, but suffers from higher variance. Since the
spectral norm of the Jacobian stabilises to values of the order of 0.5, the OS
method plateaus at coarse MSEs, even for larger $T$.

\begin{figure}[ht]
    \centering
    \includegraphics[width=\textwidth]{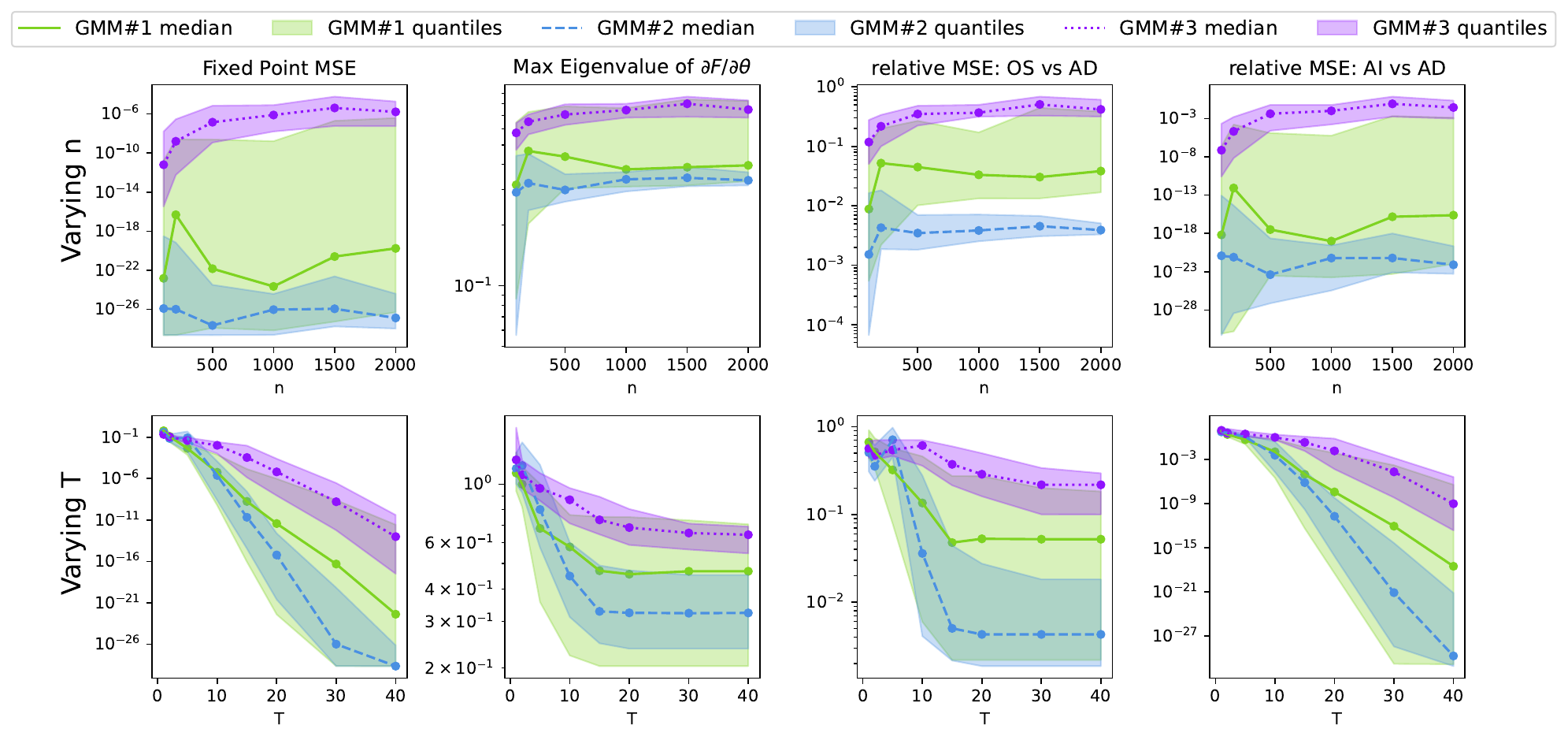}
    \caption{Varying the number of samples $n$ and the number of iterations $T$,
    we study the convergence of EM, the local contractivity of $\FEM$, and the
    MSEs of the OS and AI gradients against the AD gradient.}
    \label{fig:grad_comparison_fixed_gmm}
\end{figure}

\section{Applications of Differentiable EM}\label{sec:diff_em_applications}
In this section, we propose numerous larger-scale applications of the
differentiation of the EM algorithm presented in \cref{sec:diff_em}. Our goal is
to illustrate the versatility of the proposed approach, rather than to achieve
state-of-the-art results on these applications. In \cref{sec:em_mw2_gan}, we
also present an EM-$\MW_2^2$-regularised generative model.
\subsection{Barycentre Flow in 2D}\label{sec:bary_flow}

Wasserstein barycentres \cite{agueh2011barycenter} and their notoriously
challenging computation
\cite{cuturi14fast,alvarez2016fixed,wb_np_hard,tanguy2024computing} are active
fields of research. In this section, we illustrate the use of differentiable EM
to flow a point cloud towards a barycentre of GMMs. Given $M$ point clouds $Y_i
\in \mathbb{R}^{n\times 2}$, our goal is to optimise a point cloud $X\in
\mathbb{R}^{n\times 2}$, initialised as random normal noise, towards a
barycentre (with uniform weights) of GMMs $(\nu_i)$ fitted from $(Y_i)$.
Specifically, we solve
\[
\min_{X\in \mathbb{R}^{n\times 2}} \sum_{i=1}^M 
\MW_2^2\left(\mu(F_X^T(\theta_0)), \nu_i\right)
\]
with respect to $X$. The GMMs $\nu_i$ are fitted beforehand, and
$\mu(F_X^T(\theta_0))$ is the current EM estimate of the optimised cloud $X$.
We illustrate the results in \cref{fig:bary}, where $M:=3,\; K:=2$ and
$n:=500$. This method can be adapted to compute more general barycentres, as
presented in \cref{app:bary}.

\begin{figure}[htb]
    \centering
    \includegraphics[width=0.65\textwidth, trim=0 60 0 60, clip]{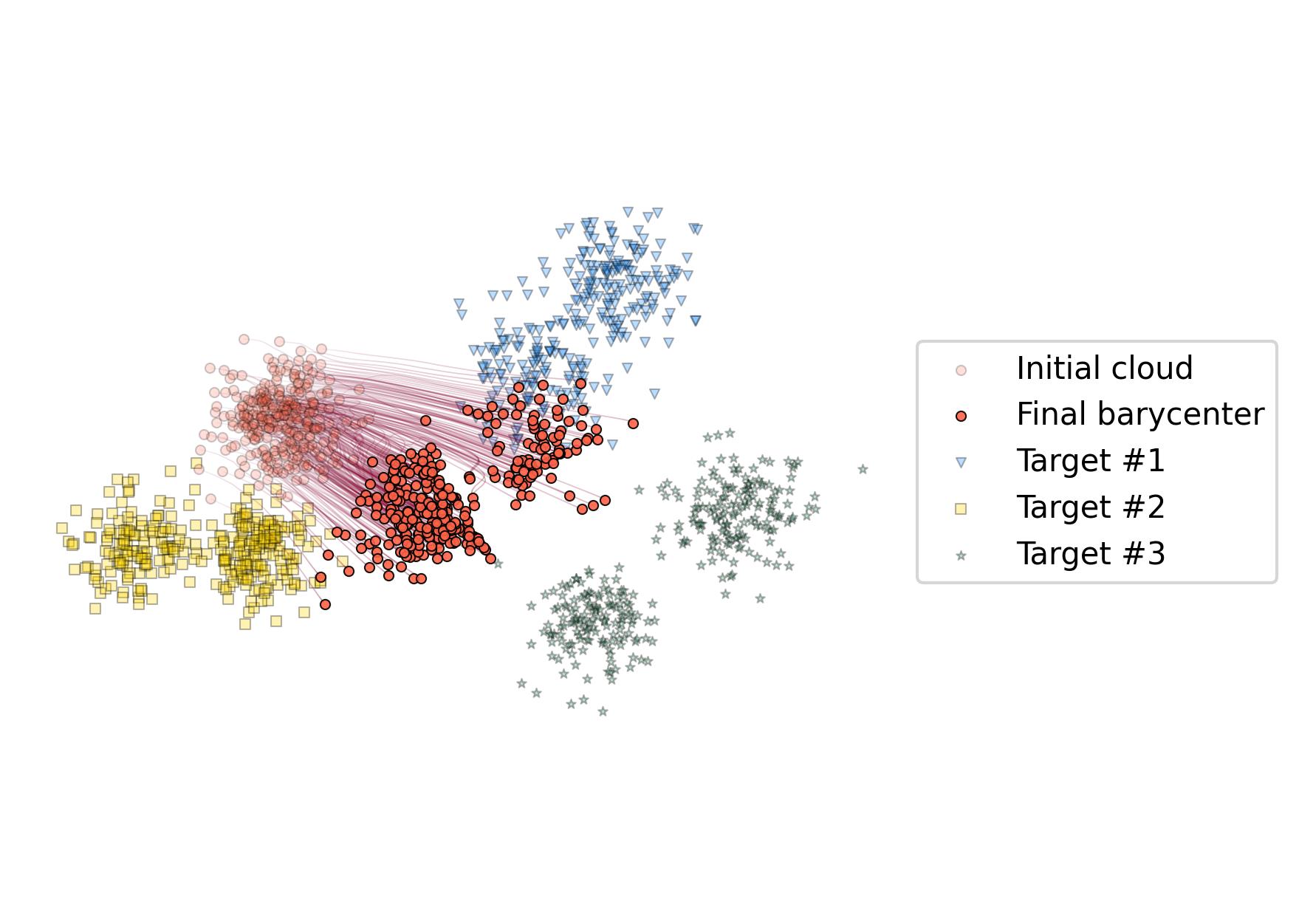}
    \caption{$\mathrm{EM}-\MW_2^2$ flow displacing particles in order to make
    their EM output approach a barycentre of three target GMMs.}
    \label{fig:bary}
\end{figure}

\subsection{Colour Transfer}\label{subsec:color_transfer}

Colour transfer is a well-known imaging task where OT techniques have been used
extensively
\cite{reinhard2002color,delon2004midway,pitie2007linear,pitie2007automated,papadakis2010variational,rabin2012wasserstein,rabin2014adaptive},
it consists in transforming the RGB colour distribution of a source image to
match that of a target image. We propose the following approach: we initialise
an image $X$ as the source image, and optimise it to minimise the $\MW_2^2$ cost
between a GMM fitted on $X$ (seen as an RGB point cloud), and a target GMM $\nu
\in \GMM_3(K)$ fitted on the colour distribution of the target image.
Specifically, we minimise $X \longmapsto \MW_2^2(\mu(F_X^T(\theta_0)), \nu)$ for
some initialisation $\theta_0$. \textit{Warm-Start EM} method from
\cref{alg:warm_start_EM_flow}, choose $K:=10$ components and use fixed uniform
GMM weights (\cref{alg:EM_fw} for EM) to avoid being trapped in a local minimum,
as seen in \cref{sec:fixweights}. We present some results in
\cref{fig:color_transfer_combined}, and provide additional discussions about the
optimisation choices in \cref{sec:colour_transfer_details}. Even with only
$K=10$ components, the colour transfer preserves both details and global
consistency, and in the resulting colour scheme, the source image appears to
exhibit stronger contrast in this example.

Balanced OT methods may be sensitive to outliers in the distributions, leading
to artifacts in the colour transfer results if colour aberrations are present in
the target. Unbalanced OT methods \cite{liero2018optimal} can mitigate this
issue \cite{chizat2018scaling,bonet2024slicing}. We now consider the unbalanced
variant of the Mixture Loss defined in \cref{sec:unbalanced}: by relaxing the
constraints on marginals, it can ignore outliers in the input distributions.
Specifically, in \cref{fig:color_transfer_combined}, we consider an illustration
with a corrupted target image where a patch of red has been added, and observe
that the unbalanced approach is more robust to this aberration, showing no
propagation of the red artifact.

\begin{figure}[htb]
    \centering
    \noindent\hspace*{\fill}
    \begin{subfigure}{0.22\textwidth}\centering
        {\scriptsize Source}
        \includegraphics[width=\textwidth]{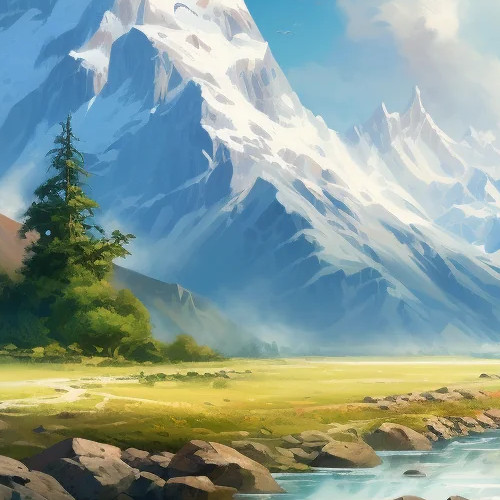}\\[-0.3em]
    \end{subfigure}
    \hfill
    \begin{subfigure}{0.22\textwidth}\centering
        {\scriptsize Result}
        \includegraphics[width=\textwidth]{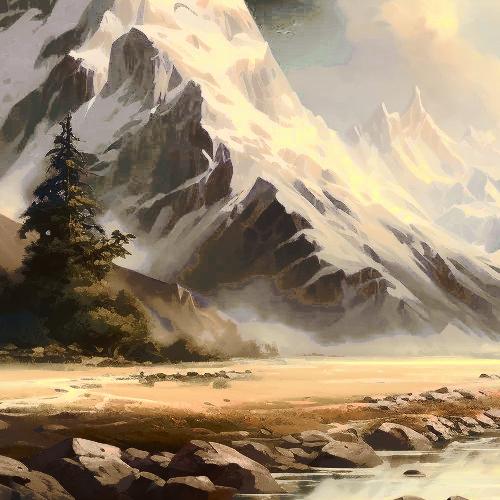}\\[-0.3em]
    \end{subfigure}
    \hfill
    \begin{subfigure}{0.22\textwidth}\centering
        {\scriptsize Target}
        \includegraphics[width=\textwidth]{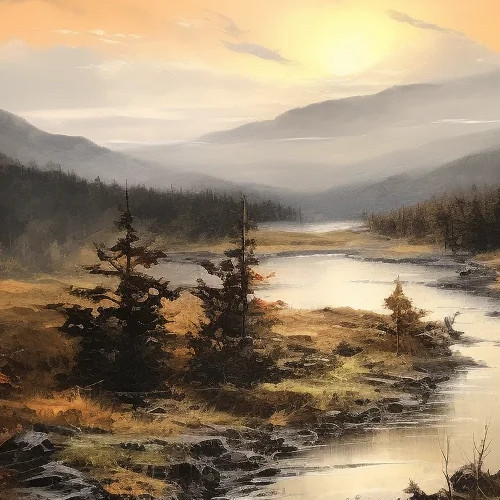}\\[-0.3em]
    \end{subfigure}
    \hspace*{\fill}

    \vspace{0.8em} %

    \noindent\hspace*{\fill}
    \begin{subfigure}{0.22\textwidth}\centering
        \includegraphics[width=\textwidth]{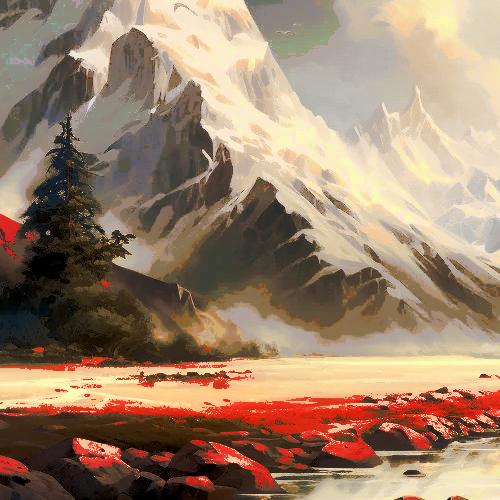}
        \\[-0.5em]
        {\scriptsize Balanced result}
    \end{subfigure}
    \hfill
    \begin{subfigure}{0.22\textwidth}\centering
        \includegraphics[width=\textwidth]{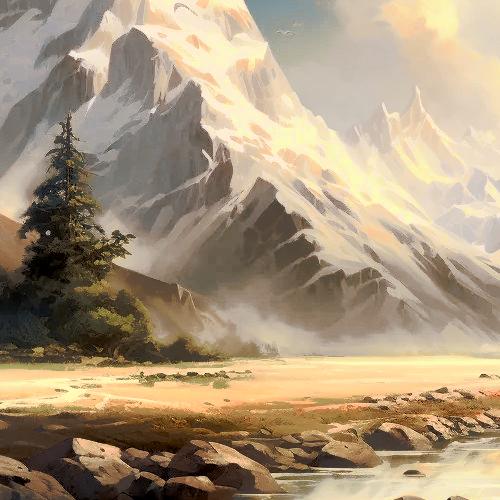}
        \\[-0.5em]
        {\scriptsize Unbalanced result}
    \end{subfigure}
    \hfill
    \begin{subfigure}{0.22\textwidth}\centering
        \includegraphics[width=\textwidth]{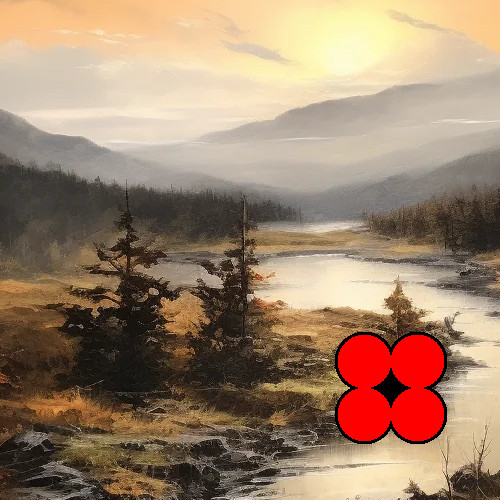}
        \\[-0.5em]
        {\scriptsize Corrupted target}
    \end{subfigure}
    \hspace*{\fill}
    \caption{Colour transfer experiments. Top row: $\EM-\MW_2^2$ from source to
    target. Bottom row: unbalanced colour transfer with regularisations
    $\lambda_1=10$ (source) and $\lambda_2=0.1$ (target).}
    \label{fig:color_transfer_combined}
\end{figure}

\subsection{Neural Style Transfer}

We apply our distance within Gatys et al.'s neural style transfer framework \cite{gatys}. The
goal is to generate an image that combines the content of one image $X_0$ with
the artistic style of another image $Y$. We rely on a pre-trained VGG-19 network
\cite{simonyan2015deep} (see \cref{fig:vgg19_encoder}) to encode our image and
extract relevant features from three specific layers $\ell$. Given an image $X$
in $\mathbb{R}^{3\times H\times W}$, we denote these features
$\text{VGG}_{1\cdots\ell}(X)$. Starting with the content image as $X_0$, we
optimise $X$ such that its features progressively match those of the style image
$Y$. The target mixtures $\overline{\mu}_{\ell}^{\text{style}}$ are fitted at
the beginning of the procedure on style features $\text{VGG}_{1\cdots\ell}(Y)$.
Notably, the target style is encoded as a low-dimensional GMM, and the reference
image is not needed during training, unlike in \cite{gatys}. Our objective is a
weighted sum of Mixture Losses between the optimised features
$\text{VGG}_{1\cdots\ell}(X)$ and the target
$\overline{\mu}_{\ell}^{\text{style}}$, for each layer $\ell$ in $\{1,2,3\}$:
we solve
\begin{equation}\label{eq:style_loss}
    \min_{X\in \mathbb{R}^{3\times H\times W}} \sum_{l=1}^{3}\lambda_{\ell}\operatorname{MW}_{2}^{2}\left(F(\theta_\text{init},\text{VGG}_{1\cdots\ell}(X)),\overline{\mu}_{\ell}^{\text{style}}\right).
\end{equation}
The weights $\lambda_{\ell}$ follow Gatys et al.'s scheme
(\cite{gatys2015texture}) of $1/d_{\ell}^{2}$ where $d_{\ell}$ is the dimension
of features in layer $\ell$. We fit $K=3$ Gaussian components at each layer. The
chosen procedure for fitting Gaussian mixtures is still \emph{Warm-Start EM}. We
optimise using 100 iterations of Adam with learning rate 0.01, which takes
approximately 20 seconds using CUDA on an RTX 4000. The example results shown in
\cref{fig:style_transfer} illustrate the ability of GMM to encode (and store) an
image style which, to the best of our knowledge, has never been shown. The
experimental setup is further detailed in \cref{sec:style_transfer_details},
along with additional examples.

\begin{figure}[ht]
    \centering
    \begin{adjustbox}{valign=c}
        \begin{subfigure}{0.3\textwidth} 
        \centering
        \includegraphics[width=\textwidth]{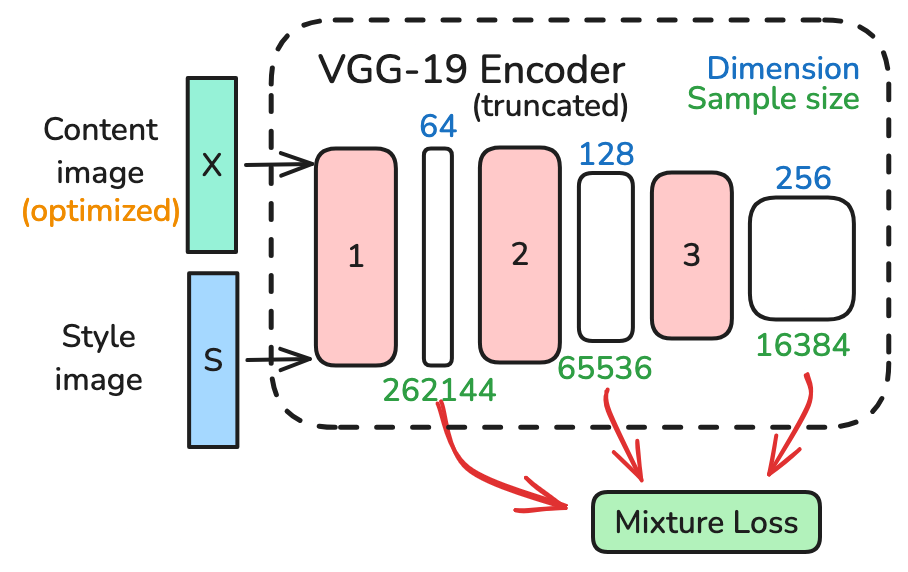}
        \caption{Model architecture}
        \label{fig:vgg19_encoder}
        \end{subfigure}
    \end{adjustbox}
    \hfill
    \begin{adjustbox}{valign=c}
        \begin{subfigure}{0.19\textwidth} 
            \centering
            \includegraphics[width=\textwidth]{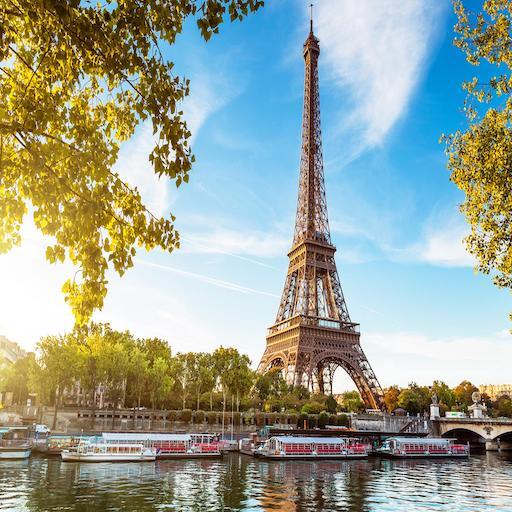}
            \caption{Source im.}
            \label{fig:style_transfer_source}
        \end{subfigure}
    \end{adjustbox}
    \hfill
    \begin{adjustbox}{valign=c}
        \begin{subfigure}{0.19\textwidth} 
            \centering
            \includegraphics[width=\textwidth]{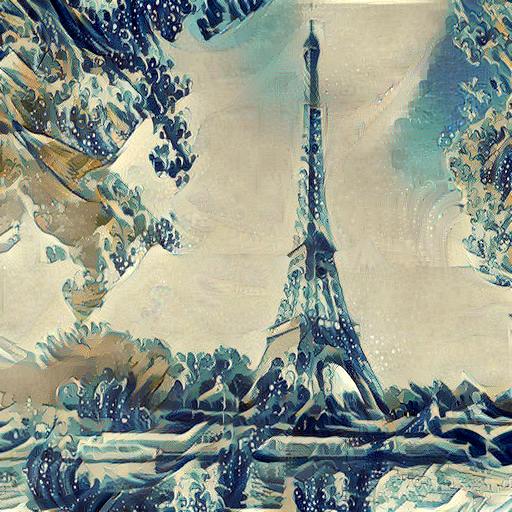}
            \caption{Generated im.}
            \label{fig:style_transfer_result}
        \end{subfigure}
    \end{adjustbox}
    \hfill
    \begin{adjustbox}{valign=c}
        \begin{subfigure}{0.19\textwidth} 
            \centering
            \includegraphics[width=\textwidth]{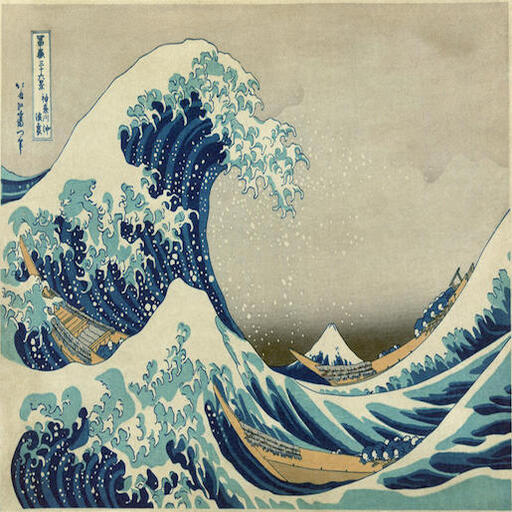}
            \caption{Style im.}
            \label{fig:style_transfer_target}
        \end{subfigure}
    \end{adjustbox}
    \caption{Style transfer method inspired by \cite{gatys2015texture}:
    setup and example result.}
    \label{fig:style_transfer}
\end{figure}

\subsection{Texture Synthesis}

We perform texture synthesis using a novel method inspired by
\cite{galerne2018texture,leclaire2023optimal,houdard2023generative}. We
initialise the synthesised texture using a stationary Gaussian field of the same
mean and covariance as the target texture. We then optimise a weighted sum of
Mixture Losses over different scales in the patch space (we refer the reader to
\cref{sec:texture_synthesis_details} for a full explanation). When doing
multi-scale synthesis, we simply choose to downscale the images by a factor
$2^s$ for $s$ between $0$ and $S$, so that the image downscaled by a factor
$2^S$ has size at least $16\times 16$. In our experiments, we choose to fit
$K=4$ Gaussian components in our mixtures. As illustrated in
\cref{fig:text_K_comp}, this corresponds (roughly) to the elbow of the model's
log-likelihood and gives more convincing results. As for patch sizes, notice
that choosing a patch size of $1$ amounts to optimising the colour intensities
directly, i.e. performing standard colour transfer. In
\cref{fig:texture_comp_patch_size}, we compare the results of taking $4\times 4$
and $8\times 8$ patches. The mono-scale variant is functional for simple
textures, as presented in \cref{sec:texture_synthesis_details}. A strength of
this approach is that it deals with multiple scales at once, whereas
\cite{leclaire2023optimal,houdard2023generative} go through each scale
successively. This provides a simpler approach that is less sensitive to the
transition method between scales.

\begin{figure}[ht]
  \centering
  \noindent\hspace*{\fill}
  \begin{subfigure}[t]{.25\linewidth}
    \centering
    \includegraphics[width=\linewidth]{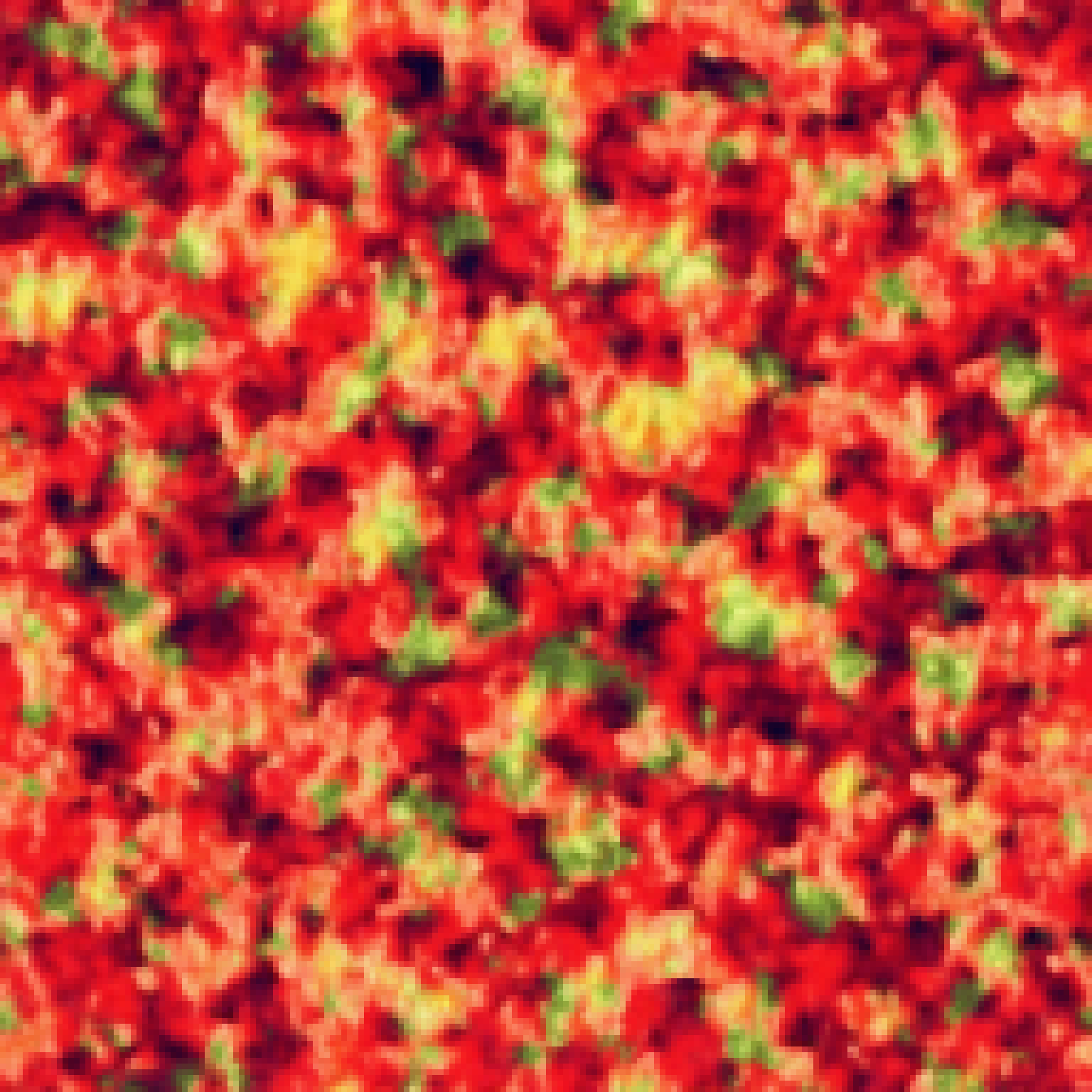}
    \caption{$K=1$ components}
  \end{subfigure}
  \hfill
  \begin{subfigure}[t]{.25\linewidth}
    \centering
    \includegraphics[width=\linewidth]{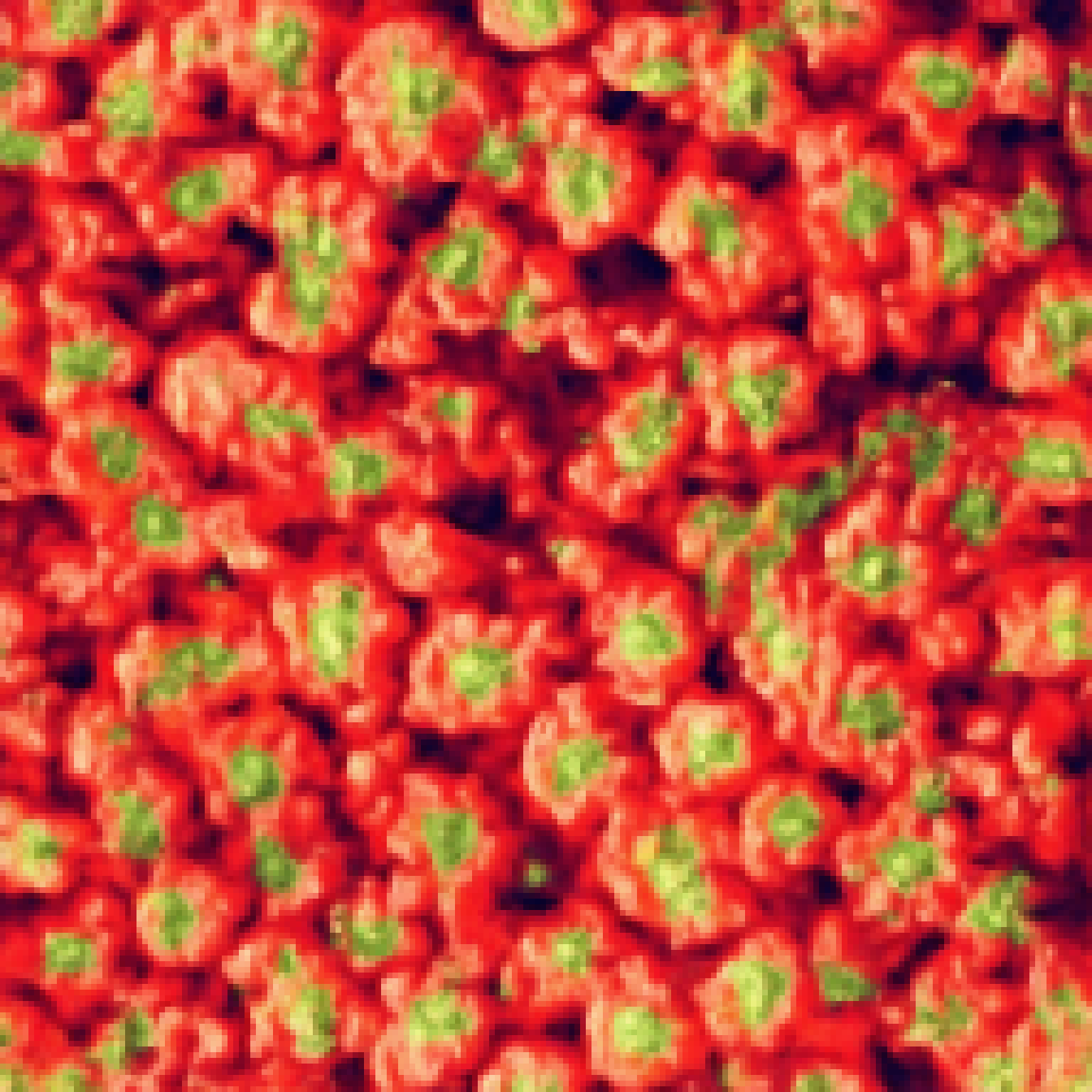}
    \caption{$K=4$ components}
  \end{subfigure}
  \hfill
  \begin{subfigure}[t]{.25\linewidth}
    \centering
    \includegraphics[width=\linewidth]{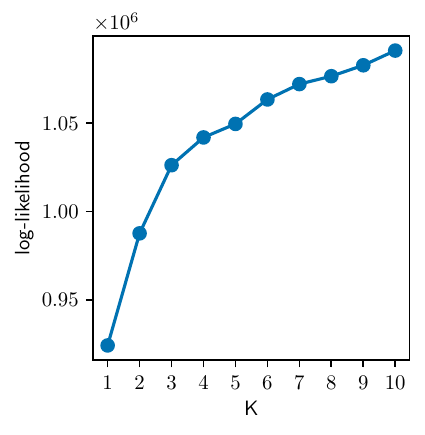}
    \caption{Log-likelihood of fit}
  \end{subfigure}
  \hspace*{\fill}
  \caption{Choosing the number of components $K$ for texture synthesis.}
  \label{fig:text_K_comp}
\end{figure}

\begin{figure}[ht]
  \centering
  \begingroup
  \setlength{\tabcolsep}{2pt}
  \renewcommand{\arraystretch}{0}
  \begin{tabular}{c c c c}
    \rotatebox{90}{\hspace{0.5cm} Reference} &
    \includegraphics[width=.18\linewidth]{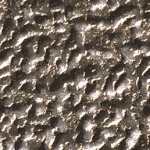} &
    \includegraphics[width=.18\linewidth]{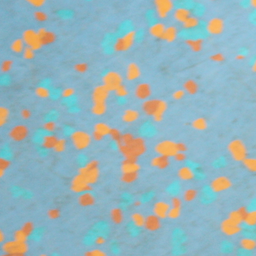} &
    \includegraphics[width=.18\linewidth]{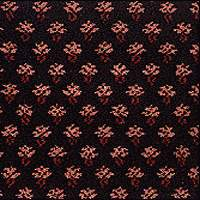} \\[2pt]
    \rotatebox{90}{Generated} &
    \includegraphics[width=.27\linewidth]{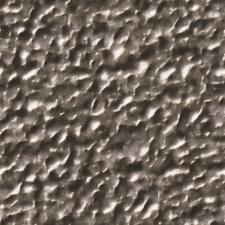} &
    \includegraphics[width=.27\linewidth]{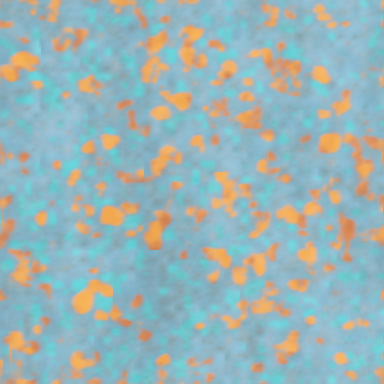} &
    \includegraphics[width=.27\linewidth]{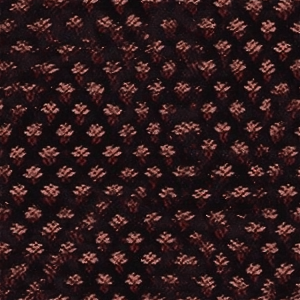}
  \end{tabular}
  \caption{Multi-scale texture synthesis with $K=4$ components for $8\times8$ 
  patches}
  \label{fig:texture_comp_patch_size}
  \endgroup
\end{figure}

\subsection*{Conclusion and Outlook}

In this work, we have provided multiple strategies to differentiate the
Expectation-Maximisation algorithm, and illustrated their use in several
applications. The overall practical message is that when possible, the
Warm-Start method is to be preferred, and when not applicable, the Automatic
Differentiation method is a good off-the-shelf solution. A core limitation of
our methods (and EM to a lesser extent) is the computational cost when the
dimension is large: our applications were limited to $d \approx 1000$. For very
high dimensional cases, one could consider sparse covariance representations
such as \cite{bouveyron2025scaling,szwagier2025parsimonious}. While we recommend
using fixed uniform GMM weights, learnable weights remain an option. Further
numerical and algorithmic tweaks could be considered to avoid issues with local optima.
Finally, we focused on Gaussian Mixtures, but any mixture with
differentiable densities could be considered. Note that for a seamless
extension, one still requires closed-form expressions for the E and M steps. 
\subsection*{Acknowledgements}

This research was funded in part by the Agence nationale de la
recherche (ANR), Grant ANR-23-CE40-0017 and by the
France 2030 program, with the reference ANR-23-PEIA-0004.

\FloatBarrier
\newpage
\printbibliography

\newpage
\appendix
\section{Supplementary Material}

\subsection{Postponed Proofs}\label{sec:proofs_sample_complexity_mw2}

\paragraph{Proof of \cref{prop:sample_complexity_mw2_one_sample}} Consider the
cost matrices $C, \hat C \in \R_+^{K_0\times K_1}$ defined by their entries at
$(k_0, k_1) \in \llbracket 1, K_0 \rrbracket \times \llbracket 1, K_1
\rrbracket$:
$$C_{k_0, k_1} := \|m_{k_0}^{(0)} - m_{k_1}^{(1)}\|_2^2 +
\dBW^2(\Sigma_{k_0}^{(0)}, \Sigma_{k_1}^{(1)}),\; \hat C_{k_0, k_1} := \|\hat
m_{k_0}^{(0)} - \hat m_{k_1}^{(1)}\|_2^2 + \dBW^2(\hat\Sigma_{k_0}^{(0)},
\hat\Sigma_{k_1}^{(1)}),$$ and apply \cite[Lemma
2.1]{tanguy2023discrete_sw_losses} with weights $(w_i, \oll{w_i}) := (w_i, \hat
w_i)$ for $i\in \{0, 1\}$, and cost matrices $(M, \oll{M}) := (C, \hat C)$: we
obtain:
$$\left|\MW_2^2(\hat\mu_1, \hat\mu_2) - \MW_2^2(\mu_1, \mu_2)\right| \leq \|C -
\hat C\|_\infty + \|C\|_\infty\left(\|w_0 - \hat w_0\|_1 + \|w_1 - \hat
w_1\|_1\right). $$ First, noticing that $\forall \Sigma \in \PSD,\; \dBW(0,
\Sigma) = \sqrt{\Tr(\Sigma)}$, we apply the triangle inequality on $\|\cdot\|_2$
and $\dBW$ to obtain $\|C\|_\infty \leq 4(R_m^2 + R_\Sigma^2)$. We now turn to
upper-bounding $\|C - \hat C\|_\infty$. We will use the inequality $\forall (s,
t) \in \R^2,\; |s^2 - t^2|\leq 2\max(|s|, |t|)|s-t|$. First, for $x, \hat x, y,
\hat y \in B_{\R^d}(0, R_m)$, we have by the triangle inequality:
$$\left|\|x - y\|_2^2 + \|\hat x - \hat y\|_2^2\right| \leq 2\max(\|x-y\|_2,
\|\hat x - \hat y\|_2)\|(x-y)-(\hat x - \hat y)\|_2 \leq 4R_m \left(\|x-\hat
x\|_2 + \|y-\hat y\|_2\right).$$ Similarly, for $A, \hat A, B, \hat B \in \{S
\in \PSD : \sqrt{\Tr(S)} \leq R_\sigma\}$, we compute:
$$\left|\dBW(A, B) - \dBW(\hat A, \hat B)\right| \leq 4R_\Sigma\left(\dBW(A,
\hat A) + \dBW(B, \hat B)\right).$$ Applying our two inequalities to the entries
of $C - \hat C$ and combining with the inequality on $\MW_2^2$ gives
\cref{eqn:sample_complexity_mw2_two_sample} in expectation.

\paragraph{Proof of \cref{prop:sample_complexity_mw2_two_sample}} For notational
simplicity, introduce $g_k := \mathcal{N}(m_k, \Sigma_k)$ and $\hat g_k :=
\mathcal{N}(m_k, \hat\Sigma_k)$. Let $\eta_k := \min(\hat w_k, w_k)$ and the
diagonal matrix $D := \diag(\eta)$. Our objective is to complete the
non-negative matrix $D$ into a coupling $P \in \Pi(\hat w, w)$. First, if
$\sum_k \eta_k = 1$, then $w=\hat w$ and already $D \in \Pi(\hat w, w)$, so we
can set $P := D$. If $\sum_k \eta_k \neq 1$, we introduce $Q \in \R_+^{K \times
K}$ such that $Q_{k,\ell} = (\hat w_k - \eta_k)(w_\ell - \eta_\ell) / (1 -
\sum_{p}\eta_p)$ for all $(k,\ell)$. The matrix $Q$ is the independent coupling
between the non-negative vectors $\hat w - \eta$ and $w - \eta$. We set $P := D
+ Q$, straightforward computation shows that $P \in \Pi(\hat w, w)$. We denote
the pairwise cost matrix $C := (\W_2^2(\hat g_k, g_\ell))_{k,\ell}$ and use $P$
as a candidate coupling in the definition of $\MW_2^2(\hat\mu, \mu)$:
\[
\MW_2^2(\hat\mu, \mu) \leq \langle P, C \rangle 
= \langle D, C \rangle + \langle Q, C \rangle
\leq \max_{k}\W_2^2(\hat g_k, g_k) (\sum_\ell \eta_\ell) 
+ \|C\|_\infty \|Q\|_1.
\]
(Note that in the case $\sum_k \eta_k = 1$, the second term vanishes.) Then
since $\eta \leq w$ entry-wise, $\sum\eta_k \leq 1$, and we compute $\|Q\|_1
= 1 - \sum_k \eta_k$, and further re-write:
\[
\sum_{k=1}^K\eta_k = \sum_{k=1}^K\min(\hat w_k, w_k) 
= \frac{1}{2}\sum_{k=1}^K(\hat w_k + w_k - |\hat w_k - w_k|) 
= 1 - \frac{\|\hat w - w\|_1}{2},
\]
which finally yields:
\[
\MW_2^2(\hat\mu, \mu) \leq  \max_{k}\W_2^2(\hat g_k, g_k)
+ \frac{\|\hat w - w\|_1}{2}\max_{k,\ell}\W_2^2(\hat g_k, g_\ell).
\]
Taking expectations and using the assumptions on the parameter estimators
concludes the proof.

\subsection{Specific GMMs Used in \texorpdfstring{\cref{sec:sample_complexity}}{sample-complexity} and \texorpdfstring{\cref{sec:xp_gradients_comparison}}{xp-grad-comp}}\label{sec:grad_comparison_gmms}

In \cref{sec:sample_complexity}, we use two fixed mixtures $\mu$ and $\nu$ with
$K=3$ components in $d=2$, shown in \cref{fig:gmm_pairs_plot}. Separation is
controlled by the covariance scaling parameter $\sigma$: ``low'' uses
$\sigma=10$, ``medium'' $\sigma=0.5$, and ``high'' $\sigma=0.1$. Larger values
create strong overlap between components, while smaller values yield
well-separated mixtures. The same pairs $(\mu,\nu)$ are used across all
repetitions.

In \cref{fig:grad_comparison_gmms}, we show the three different GMMs used in our
experiments, which were sampled randomly with fixed seeds. We observe that GMM
$\#$1 has relatively large variances, allowing for less numerical difficulty,
while GMM $\#$2 and (to an even larger extent) GMM $\#$3 have some
almost-degenerate covariances, leading to increased numerical instability. Note
that GMM $\#2$ appears to be the best separated, which can yield better EM
convergence.

\begin{figure}[ht]
    \centering
    \includegraphics[width=.8\textwidth]{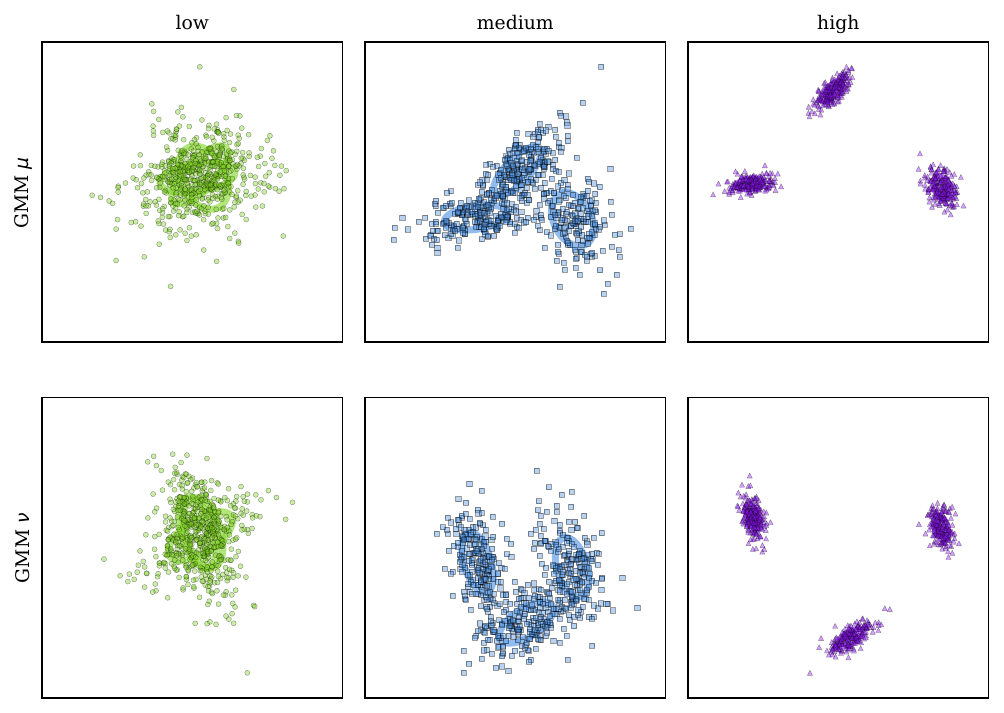}
    \caption{Two fixed GMMs $\mu$ (top) and $\nu$ (bottom) used in
    \cref{sec:sample_complexity}. Columns correspond to low ($\sigma=10$),
    medium ($\sigma=0.5$), and high ($\sigma=0.1$) separation.}
    \label{fig:gmm_pairs_plot}
\end{figure}

\begin{figure}[ht]
    \centering
    \includegraphics[width=.7\textwidth]{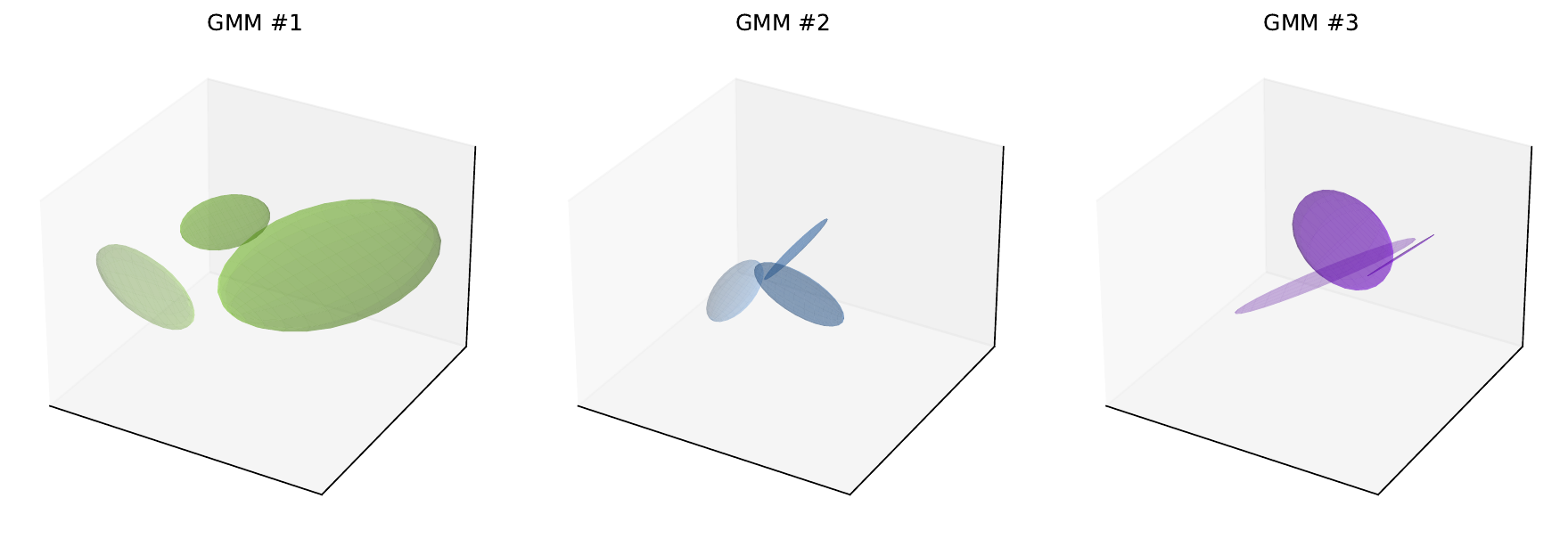}
    \caption{3D representation of the three GMMs used in
    \cref{sec:xp_gradients_comparison} to compare EM gradient methods.}
    \label{fig:grad_comparison_gmms}
\end{figure}

\subsection{Discussion on Gradient Ground Truths}\label{sec:em_grad_gt}

As discussed in \cref{sec:em_grad_methods}, the most natural baseline for the
computation of a gradient of EM is the full automatic differentiation method
(AD). Unfortunately, this strategy is only theoretically sound under strong
assumptions \cite{gilbert1992automatic,beck1994automatic,mehmood2020automatic},
and may be prone to the propagation of numerical errors across iterations. Due
to this latter issue in particular, one may not claim that the AD gradient is
exact. In order to discuss its use as a baseline (in particular in
\cref{sec:xp_gradients_comparison}), we compare it to the gradient computed
using the finite differences approximation (FD) with a step size $\FDstep$,
implemented using \texttt{torch.autograd.gradcheck.\_get\_numerical\_jacobian}
in \href{https://pytorch.org/}{Pytorch} \cite{pytorch}. The experimental setting
is as in \cref{sec:xp_gradients_comparison}: in
\cref{fig:multiple_eps_n300_d3_K3_T30}, we observe the relative MSEs $\|\JFD -
\JAD\|_2^2 / \|\JAD\|_2^2$ varying the FD step size $\FDstep \in \{10^{-5},
10^{-6}, 10^{-7}\}$ on three different GMMs, taking each time 60 samples for the
data $X$ with $n=300$, $d=3$ and running EM with $K=3$ and $T=30$. Note that the
FD approximation is extremely intensive numerically, prohibiting its use in
practice and even for extensive testing. First, we observe that the variability
between data samples is very high, indicating sensitivity of the methods to the
algorithm initialisation (and in turn, convergence). Furthermore, while AD is
close to FD at numerical precision for GMMs $\#$1 and $\#$2, the approximation
is substantially coarser with a relative error of the order of $10^{-5}$ for GMM
$\#$3, indicating numerical instability. Given that the FD method itself is a
sensitive approximation that depends strongly on $\FDstep$, we conclude that
there appears to be no reliable ground truth for an exact gradient, and we
resort to AD as a baseline in our experiments. We leave the challenging question
of quantifying the arising numerical errors for future work. 

\begin{figure}[ht]
    \centering
    \includegraphics[width=.4\textwidth]{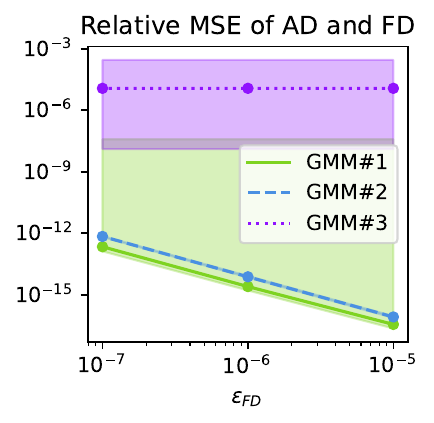}
    \caption{Relative MSE of the full automatic differentiation gradient (AD)
    against the finite differences approximation (FD) for three different GMMs
    and varying the FD step size $\FDstep$.}
    \label{fig:multiple_eps_n300_d3_K3_T30}
\end{figure}

\subsection{Explicit Differential
Expressions}\label{sec:explicit_euclidean_differentials}

\subsubsection{Differential of Gaussian Density}

First, we compute the explicit differential of
\begin{equation}\label{eqn:gaussian_density_g_def}
    g:\app{\R^d\times\PD\times\X}{\R^n}{(m, \Sigma, X)}{\left(\cfrac{1}{(2\pi)^{d/2}\det(\Sigma)^{1/2}}\exp\left(-\cfrac{1}{2}(x_i-m)^\top\Sigma^{-1}(x_i-m)\right)\right)_{i=1}^n,}
\end{equation}
where we wrote $X = (x_1, \cdots, x_n)$. We first compute the differential with
respect to $m$:
\begin{equation}\label{eqn:dgdm}
    \cfrac{\partial g}{\partial m}(m, \Sigma, X) \in \L(\R^d, \R^n)\simeq\R^{n\times d}:\; \left[\cfrac{\partial g}{\partial m}(m, \Sigma, X)\right]_{i,\cdot} = \Sigma^{-1}(x_i-m)g_{m, \Sigma}(x_i).
\end{equation}
For the differential with respect to $\Sigma$, we remark that
$T_{\Sigma}\PD=\Sym$, and use \cite[Equation 49]{matrix_cookbook} stating that
$\partial_A\det A = (\det A )A^{-\top}$ and \cite[Equation 61]{matrix_cookbook}
which states $\partial_A a^\top A^{-1}b = -A^{-\top}ab^\top A^{-\top}$, which
yields:
\begin{align}
    \cfrac{\partial g}{\partial \Sigma}(m, \Sigma, X) &\in \L(\Sym, \R^n)\hookrightarrow\R^{n\times d\times d}:\\
    \left[\cfrac{\partial g}{\partial \Sigma}(m, \Sigma, X)\right]_{i,\cdot,\cdot} &= \cfrac{g_{m, \Sigma}(x_i)}{2}\left(-\Sigma^{-1}+\Sigma^{-1}(x_i-m)(x_i-m)^\top\Sigma^{-1}\right). \label{eqn:dgdSigma}
\end{align}
Finally, the differential with respect to the data $X$ reads
\begin{equation}\label{eqn:dgdu}
    \cfrac{\partial g}{\partial X}(m, \Sigma, X) \in \L(\R^{n\times d}, \R^n)\simeq\R^{n\times n\times d}:\; \left[\cfrac{\partial g}{\partial \Sigma}(m, \Sigma, X)\right]_{i,j,\cdot}=-\delta_{i,j}g_{m, \Sigma}(x_i)\Sigma^{-1}(x_i-m).
\end{equation}

\subsubsection{Differential of \texorpdfstring{$\gamma$}{gamma}} We now compute
the differential of the function
\begin{equation}\label{eqn:gamma_def}
    \gamma:\app{\underbrace{\simplex{K}\times(\R^d)^K\times(\PD)^K}_{\Theta}\times\X}{\R^{n\times K}}{(\underbrace{w, (m_k), (\Sigma_k)}_{\theta}, X)}{\Biggl(\cfrac{w_k g_{m_k, \Sigma_k}(x_i)}{\sum_l w_l g_{m_l, \Sigma_l}(x_i)}\Biggr)_{i,k}}.  
\end{equation}
For notational convenience, we introduce
\begin{equation}\label{eqn:shorthands_g_Z}
    \forall (i,k) \in \llbracket 1, n \rrbracket\times \llbracket 1, K \rrbracket,\; g_{i,k} := g_{m_k, \Sigma_k}(x_i),\; Z_i := \sum_{k'} w_{k'} g_{i,k'}.
\end{equation}
First, the tangent space of the simplex is the same everywhere: $T_w\simplex{K}
= \simplex{K}^0 := (\mathbbold{1}_K)^\perp$, and the differential with respect
to $w$ is
\begin{equation}\label{eqn:dgammadw}
    \cfrac{\partial \gamma}{\partial w}(\theta, X) \in \L\left(\simplex{K}^0, \R^{n\times K}\right)\hookrightarrow \R^{n\times K \times K}:\; \left[\cfrac{\partial \gamma}{\partial w}(\theta, X)\right]_{i,k,l} = \cfrac{g_{i,k}}{Z_i}\left(\delta_{k,l} - \cfrac{w_kg_{i,l}}{Z_i}\right).
\end{equation}
Using \cref{eqn:dgdm}, we compute the differential of $\gamma$ w.r.t.
$\mathbf{m}=(m_k)$:
\begin{align}
    \cfrac{\partial \gamma}{\partial \mathbf{m}}(\theta, X) &\in \L\left((\R^{d})^K, \R^{n\times K}\right) \simeq \R^{n\times K \times K \times d}:\\
    \left[\cfrac{\partial \gamma}{\partial \mathbf{m}}(\theta, X)\right]_{i,k,l,\cdot} &= \cfrac{w_kg_{i,k}}{Z_i}\left(\delta_{k,l}-\cfrac{w_lg_{i,l}}{Z_i}\right)\Sigma_l^{-1}(x_i -m_l).\label{eqn:dgammadm}
\end{align}
Similarly, using \cref{eqn:dgdSigma}, we compute the differential of $\gamma$
w.r.t. $\mathbf{\Sigma}=(\Sigma_k)$:
\begin{align}
    \cfrac{\partial \gamma}{\partial \mathbf{\Sigma}}(\theta, X) &\in \L\left((\Sym)^K, \R^{n\times K}\right) \hookrightarrow \R^{n\times K \times K \times d\times d}: \\
    \left[\cfrac{\partial \gamma}{\partial \mathbf{\Sigma}}(\theta, X)\right]_{i,k,l,\cdot, \cdot} &= \cfrac{w_kg_{i,k}}{2Z_i}\left(\delta_{k,l}-\cfrac{w_lg_{i,l}}{Z_i}\right)\left(-\Sigma_l^{-1}+\Sigma_l^{-1}(x_i-m_l)(x_i-m_l)^\top\Sigma_l^{-1}\right). \label{eqn:dgammadSigma}
\end{align}
Finally, \cref{eqn:dgdu} allows the computation of the differential of $\gamma$
w.r.t.$X$:
\begin{align}
    \cfrac{\partial \gamma}{\partial X}(\theta, X) &\in \L\left(\R^{n\times d}, \R^{n\times K}\right) \simeq \R^{n\times K \times n \times d}: \\
    \left[\cfrac{\partial \gamma}{\partial X}(\theta, X)\right]_{i,k, j,\cdot} &= \cfrac{\delta_{i,j}w_kg_{i,k}}{Z_i}\left(-\Sigma_k^{-1}(x_i-m_k) + \cfrac{1}{Z_i}\sum_{l=1}^K w_lg_{i,l}\Sigma_l^{-1}(x_i-m_l)\right). \label{eqn:dgammadu}
\end{align}

\subsubsection{Differential of \texorpdfstring{$\FEM$}{F}}

We introduce the coordinate notation 
$$\FEM(\theta, X) = (\FEM_w(\theta, X), \FEM_{\mathbf{m}}(\theta, X),
\FEM_{\mathbf{\Sigma}}(\theta, X)) \in \simplex{K} \times (\R^d)^K \times
(\PD)^K,$$ hence the differentials of $\FEM$ with respect to $\theta$ and $X$
can be written ``block-wise'' as follows:
$$\cfrac{\partial \FEM}{\partial \theta} = \left(\begin{array}{ccc}
    \cfrac{\partial \FEM_w}{\partial w} & \cfrac{\partial \FEM_w}{\partial
    \mathbf{m}} & \cfrac{\partial \FEM_w}{\partial \mathbf{\Sigma}} \\
    \cfrac{\partial \FEM_{\mathbf{m}}}{\partial w} & \cfrac{\partial
    \FEM_{\mathbf{m}}}{\partial \mathbf{m}} & \cfrac{\partial
    \FEM_{\mathbf{m}}}{\partial \mathbf{\Sigma}} \\
    \cfrac{\partial \FEM_{\mathbf{\Sigma}}}{\partial w} & \cfrac{\partial
    \FEM_{\mathbf{\Sigma}}}{\partial \mathbf{m}} & \cfrac{\partial
    \FEM_{\mathbf{\Sigma}}}{\partial \mathbf{\Sigma}} \end{array}\right), \quad
    \cfrac{\partial \FEM}{\partial X} = \left(\begin{array}{c} \cfrac{\partial
    \FEM_w}{\partial X} \\
    \cfrac{\partial \FEM_{\mathbf{m}}}{\partial X} \\
    \cfrac{\partial \FEM_{\mathbf{\Sigma}}}{\partial X} \\
\end{array}\right).$$

\paragraph{Differentials of $\FEM_w$} We now compute the differentials of
$\FEM_w$ whose expression we remind from \cref{eqn:theta_next}:
\begin{equation}\label{eqn:def_F_w}
    \FEM_w: \app{\Theta\times\X}{\simplex{K}}{(\theta, X)}{\left(\cfrac{1}{n}\sum_{i=1}^n\gamma_{i,k}(\theta, X)\right)_{k=1}^K}.
\end{equation}
By \cref{eqn:dgammadw}, we have
\begin{equation}\label{eqn:dFwdw}
    \cfrac{\partial \FEM_w}{\partial w}(\theta, X) \in \L\left(\simplex{K}^0, \simplex{K}^0\right) \hookrightarrow \R^{K\times K}:\; \left[\cfrac{\partial \FEM_w}{\partial w}(\theta, X)\right]_{k,l} = \cfrac{1}{n}\sum_{i=1}^n\cfrac{g_{i,k}}{Z_i}\left(\delta_{k,l}-\cfrac{w_kg_{i,l}}{Z_i}\right).
\end{equation}
Likewise, \cref{eqn:dgammadm} yields
\begin{align}
    \cfrac{\partial \FEM_w}{\partial \mathbf{m}}(\theta, X) &\in \L\left((\R^d)^K, \simplex{K}^0\right) \hookrightarrow \R^{K\times K\times d}:\\
    \left[\cfrac{\partial \FEM_w}{\partial \mathbf{m}}(\theta, X)\right]_{k,l,\cdot} &= \cfrac{1}{n}\sum_{i=1}^n\cfrac{w_kg_{i,k}}{Z_i}\left(\delta_{k,l}-\cfrac{w_lg_{i,l}}{Z_i}\right)\Sigma_l^{-1}(x_i-m_l). \label{eqn:dFwdm}
\end{align}
\cref{eqn:dgammadSigma} gives
\begin{align}
    \cfrac{\partial \FEM_w}{\partial \mathbf{\Sigma}}(\theta, X) &\in \L\left((\Sym)^K, \simplex{K}^0\right) \hookrightarrow \R^{K\times K\times d\times d}:\\
    \left[\cfrac{\partial \FEM_w}{\partial \mathbf{\Sigma}}(\theta, X)\right]_{k,l,\cdot, \cdot} &= \cfrac{1}{n}\sum_{i=1}^n\cfrac{w_kg_{i,k}}{2Z_i}\left(\delta_{k,l}-\cfrac{w_lg_{i,l}}{Z_i}\right)\left(-\Sigma_l^{-1}+\Sigma_l^{-1}(x_i-m_l)(x_i-m_l)^\top\Sigma_l^{-1}\right). \label{eqn:dFwdSigma}
\end{align}
Finally, with \cref{eqn:dgammadu}, we obtain
\begin{align}
    \cfrac{\partial \FEM_w}{\partial X}(\theta, X) &\in \L\left(\R^{n\times d}, \simplex{K}^0\right) \hookrightarrow \R^{K\times n \times d}:\\
    \left[\cfrac{\partial \FEM_w}{\partial X}(\theta, X)\right]_{k,i,\cdot} &= \cfrac{w_kg_{i,k}}{nZ_i}\left(-\Sigma_k^{-1}(x_i-m_k) + \cfrac{1}{Z_i}\sum_{l=1}^K w_lg_{i,l}\Sigma_l^{-1}(x_i-m_l)\right). \label{eqn:dFwdu}
\end{align}

\paragraph{Differentials of $\FEM_{\mathbf{m}}$} We now turn to
$\FEM_{\mathbf{m}}$:
\begin{equation}\label{eqn:def_F_m}
    \FEM_{\mathbf{m}}: \app{\Theta\times\X}{\R^{K\times d}}{(\theta, X)}{\left(\cfrac{\sum_{i=1}^n\gamma_{i,k}(\theta, X)x_i}{\sum_{j=1}^n\gamma_{j,k}(\theta, X)}\right)_{k=1}^K}.
\end{equation}
For convenience we introduce $\Gamma_k := \sum_{i=1}^n\gamma_{i,k}$. The chain
rule gives
\begin{align}
    \cfrac{\partial \FEM_\mathbf{m}}{\partial w}(\theta, X) &\in \L\left(\simplex{K}^0, \R^{K\times d}\right) \hookrightarrow \R^{K\times d\times K}:\\
    \left[\cfrac{\partial \FEM_\mathbf{m}}{\partial w}(\theta, X)\right]_{k,\cdot, l} &= \cfrac{1}{\Gamma_k}\left(\sum_{i=1}^n\left[\cfrac{\partial \gamma}{\partial w}\right]_{i,k,l}x_i - \cfrac{1}{\Gamma_k}\sum_{j=1}^n\left[\cfrac{\partial \gamma}{\partial w}\right]_{j,k,l}\sum_{i=1}^n\gamma_{i,k}x_i\right). \label{eqn:dFmdw}
\end{align}
We continue with the differential with respect to $\mathbf{m}$:
\begin{align}
    \cfrac{\partial \FEM_\mathbf{m}}{\partial \mathbf{m}}(\theta, X) &\in \L\left(\R^{K\times d}, \R^{K\times d}\right) \simeq \R^{K\times d\times K\times d}:\\
    \left[\cfrac{\partial \FEM_\mathbf{m}}{\partial \mathbf{m}}(\theta, X)\right]_{k,\cdot, l,\cdot} &= \cfrac{1}{\Gamma_k}\left(\sum_{i=1}^nx_i\left[\cfrac{\partial \gamma}{\partial \mathbf{m}}\right]_{i,k,l,\cdot}^\top - \cfrac{1}{\Gamma_k}\left(\sum_{i=1}^n\gamma_{i,k}x_i\right)\sum_{j=1}^n\left[\cfrac{\partial \gamma}{\partial \mathbf{m}}\right]_{j,k,l,\cdot}^\top\right). \label{eqn:dFmdm}
\end{align}
For the differential with respect to $\mathbf{\Sigma}$, we require the notion of
\textit{outer product} between two tensors, which we define below:
\begin{equation}\label{eqn:outer_prod}
    \forall A \in \R^{n_1\times \cdots\times n_N},\; B \in \R^{m_1\times \cdots\times m_M},\; A\otimes B := \left(A_{i_1, \cdots, i_N}B_{j_1, \cdots, j_M}\right) \in \R^{n_1\times\cdots\times n_N\times m_1 \times \cdots \times m_M}.
\end{equation}
Note that in \cref{eqn:dFmdm}, we could have written
$x_i\otimes\left[\cfrac{\partial \gamma}{\partial
\mathbf{m}}\right]_{i,k,l,\cdot}$. We will need the following differentiation
rule for the outer product:
\begin{lemma}\label{lemma:outer_prod_diff} Let $A: \R^{p_1\times\cdots\times
    p_P} \longrightarrow \R^{n_1\times\cdots\times n_N}$ and $B:
    \R^{k_1\times\cdots\times k_P} \longrightarrow \R^{m_1\times\cdots\times
    m_M}$ be differentiable. Then the differential of $C := A\otimes B$ is:
    \begin{align}
        \cfrac{\partial C}{\partial X}(X) \in \L(\R^{p_1\times\cdots\times p_P},\; &\R^{n_1\times\cdots\times n_N \times m_1 \times \cdots \times m_M}) \simeq \R^{n_1\times\cdots\times n_N \times m_1 \times \cdots \times m_M \times p_1\times\cdots\times p_P}: \\
        \left[\cfrac{\partial C}{\partial X}(X)\right]_{i_1, \cdots, i_N, j_1, \cdots, j_M, k_1,\cdots, k_P} &= \left[\cfrac{\partial A}{\partial X}(X)\right]_{i_1, \cdots, i_N, k_1,\cdots, k_P}B_{j_1, \cdots, j_M}(X) \nonumber\\
        &+ A_{i_1, \cdots, i_N}\left[\cfrac{\partial B}{\partial X}(X)\right]_{j_1, \cdots, j_M, k_1,\cdots, k_P} \label{eqn:outer_prod_diff}, \\
        \cfrac{\partial C}{\partial X}(X) &= \tau\left(\cfrac{\partial A}{\partial X}(X)\otimes B(X)\right) + A(X)\otimes \cfrac{\partial B}{\partial X}(X), \label{eqn:outer_prod_diff_tensor}
    \end{align}
    where $\tau$ is the transposition $\left(\tau(T)\right)_{i_1, \cdots, i_N,
    j_1, \cdots, j_m, k_1,\cdots, k_P} = T_{i_1, \cdots, i_N, k_1, \cdots, k_P,
    j_1,\cdots, j_M}$. If $B = A$, then the formula can be written as
    \begin{equation}
        \cfrac{\partial C}{\partial X}(X) = \tau\left(A(X)\otimes\cfrac{\partial A}{\partial X}(X)\right) + A(X)\otimes\cfrac{\partial A}{\partial X}(X). \label{eqn:outer_prod_diff_same}
    \end{equation}
\end{lemma}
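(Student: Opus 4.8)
The plan is to reduce the statement to the ordinary scalar Leibniz rule applied componentwise, and then simply repackage the result in tensor notation. Since $C = A \otimes B$ only makes sense when $A$ and $B$ share a domain, I first fix the common variable $X \in \R^{p_1 \times \cdots \times p_P}$. Fix multi-indices $\underline{i} = (i_1, \dots, i_N)$ (for the $A$-slot), $\underline{j} = (j_1, \dots, j_M)$ (for the $B$-slot), and $\underline{k} = (k_1, \dots, k_P)$ (for the derivative slot). By the definition \cref{eqn:outer_prod} of the outer product, the scalar component of the map $C$ at $(\underline{i}, \underline{j})$ is $C_{\underline{i}, \underline{j}}(X) = A_{\underline{i}}(X)\, B_{\underline{j}}(X)$, a pointwise product of two real-valued differentiable functions of $X$; in particular $C$ is differentiable because each of its scalar components is.

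Next I would apply the product rule to this scalar function of the vector variable $X$, differentiating in the direction indexed by $\underline{k}$, to get
\[
\left[\cfrac{\partial C}{\partial X}(X)\right]_{\underline{i}, \underline{j}, \underline{k}}
= \left[\cfrac{\partial A}{\partial X}(X)\right]_{\underline{i}, \underline{k}} B_{\underline{j}}(X)
+ A_{\underline{i}}(X) \left[\cfrac{\partial B}{\partial X}(X)\right]_{\underline{j}, \underline{k}},
\]
which is exactly \cref{eqn:outer_prod_diff} once one reads off components with the stated index convention (derivative indices appended last). It then remains to recognise the two terms as components of honest tensors. The second term $A_{\underline{i}}(X)\,[\partial_X B(X)]_{\underline{j}, \underline{k}}$ is, by \cref{eqn:outer_prod}, the $(\underline{i}, \underline{j}, \underline{k})$-component of $A(X) \otimes \tfrac{\partial B}{\partial X}(X)$. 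The first term $[\partial_X A(X)]_{\underline{i}, \underline{k}}\, B_{\underline{j}}(X)$ is the $(\underline{i}, \underline{k}, \underline{j})$-component of $\tfrac{\partial A}{\partial X}(X) \otimes B(X)$; since the target tensor must carry its indices in the order $(\underline{i}, \underline{j}, \underline{k})$, one must swap the last two index blocks, which is precisely the transposition $\tau$ from the statement. Summing gives \cref{eqn:outer_prod_diff_tensor}, and the case $B = A$ follows by direct substitution into \cref{eqn:outer_prod_diff_tensor}, yielding \cref{eqn:outer_prod_diff_same}.

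There is no analytic difficulty here; the only point requiring care — and the one I would check most carefully — is the index bookkeeping: fixing once and for all the identification $\L(\R^{p_1 \times \cdots \times p_P}, \R^{n_1 \times \cdots \times n_N}) \simeq \R^{n_1 \times \cdots \times n_N \times p_1 \times \cdots \times p_P}$, and verifying that with this convention the repackaging of the $\partial_X A$ term genuinely requires the transposition $\tau$ and nothing else. Once the conventions are pinned down, all three displayed identities are immediate.
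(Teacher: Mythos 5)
Your proof is correct. The paper states this lemma without proof, treating it as a routine bookkeeping fact; your componentwise Leibniz argument, together with the check that the $\partial_X A$ term repackages as $\tau\bigl(\tfrac{\partial A}{\partial X}(X)\otimes B(X)\bigr)$ precisely because the derivative indices of $\tfrac{\partial A}{\partial X}\otimes B$ sit in the middle block and must be moved to the end, is exactly the justification the paper leaves implicit (and it also correctly explains the paper's remark that the naive formula $\partial(A\otimes B)=(\partial A)\otimes B + A\otimes(\partial B)$ fails).
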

Note that in particular, the intuitive formula $\partial(A\otimes B) = (\partial
A) \otimes B + A \otimes (\partial B)$ does not hold. We now compute the
differential with respect to $\mathbf{\Sigma}$:
\begin{align}
    \cfrac{\partial \FEM_\mathbf{m}}{\partial \mathbf{\Sigma}}(\theta, X) &\in \L\left((\Sym)^K, \R^{K\times d}\right) \hookrightarrow \R^{K\times d\times K\times d\times d}:\\
    \left[\cfrac{\partial \FEM_\mathbf{m}}{\partial\mathbf{\Sigma}}(\theta, X)\right]_{k,\cdot, l,\cdot, \cdot} &= \cfrac{1}{\Gamma_k}\left(\sum_{i=1}^nx_i\otimes\left[\cfrac{\partial \gamma}{\partial \mathbf{\Sigma}}\right]_{i,k,l,\cdot, \cdot} - \cfrac{1}{\Gamma_k}\left(\sum_{i=1}^n\gamma_{i,k}x_i\right)\otimes\sum_{j=1}^n\left[\cfrac{\partial \gamma}{\partial \mathbf{\Sigma}}\right]_{j,k,l,\cdot, \cdot}\right). \label{eqn:dFmdSigma}
\end{align}
The differential with respect to $X$ is slightly different due to the product
with $x_i$ in \cref{eqn:def_F_m}:
\begin{align}
    \cfrac{\partial \FEM_\mathbf{m}}{\partial X}(\theta, X) &\in \L\left(\R^{n\times d}, \R^{K\times d}\right) \simeq \R^{K\times d\times n\times d}:\\
    \left[\cfrac{\partial \FEM_\mathbf{m}}{\partial X}(\theta, X)\right]_{k,\cdot, i,\cdot} &= \cfrac{1}{\Gamma_k}\left(\gamma_{i,k}I_d+\sum_{h=1}^nx_h\left[\cfrac{\partial \gamma}{\partial X}\right]_{h,k,i,\cdot}^\top - \cfrac{1}{\Gamma_k}\left(\sum_{h=1}^n\gamma_{h,k}x_h\right)\sum_{j=1}^n\left[\cfrac{\partial \gamma}{\partial X}\right]_{j,k,i,\cdot}^\top\right). \label{eqn:dFmdu}
\end{align}

\paragraph{Differential of $\FEM_{\mathbf{\Sigma}}$} We finish with the
computation of the differentials of
\begin{equation}\label{eqn:def_F_Sigma}
    \FEM_{\mathbf{\Sigma}}: \app{\Theta\times\X}{(\PD)^K}{(\theta, X)}{\left(\cfrac{\sum_{i=1}^n\gamma_{i,k}(\theta, X)(x_i-\FEM_{\mathbf{m}}(\theta, X)_k)(x_i-\FEM_{\mathbf{m}}(\theta, X)_k)^\top}{\sum_{j=1}^n\gamma_{j,k}(\theta, X)}\right)_{k=1}^K}.
\end{equation}
For convenience, let $\FEM_{m_k} := \left[\FEM_{\mathbf{m}}(\theta,
X)\right]_{k, \cdot}$. We first compute the differential with respect to $w$:
\begin{align}
    \cfrac{\partial \FEM_\mathbf{\Sigma}}{\partial w}(\theta, X) &\in \L\left(\simplex{K}^0, (\Sym)^K\right) \hookrightarrow \R^{K\times d\times d\times K}:\\
    \left[\cfrac{\partial \FEM_\mathbf{\Sigma}}{\partial w}(\theta, X)\right]_{k,\cdot, \cdot, l} &= \cfrac{1}{\Gamma_k}\sum_{i=1}^n\left[\cfrac{\partial \gamma}{\partial w}\right]_{i,k,l}(x_i - \FEM_{m_k})(x_i - \FEM_{m_k})^\top \nonumber\\
    &- \cfrac{1}{\Gamma_k^2}\sum_{j=1}^n\left[\cfrac{\partial \gamma}{\partial w}\right]_{j,k,l}\sum_{i=1}^n\gamma_{i,k}(x_i - \FEM_{m_k})(x_i - \FEM_{m_k})^\top \nonumber\\
    &-\cfrac{1}{\Gamma_k}\sum_{i=1}^n\gamma_{i,k}\left(\left[\cfrac{\partial \FEM_{\mathbf{m}}}{\partial w}\right]_{k,\cdot,l}(x_i - \FEM_{m_k})^\top + (x_i - \FEM_{m_k})\left[\cfrac{\partial \FEM_{\mathbf{m}}}{\partial w}\right]_{k,\cdot,l}^\top\right). \label{eqn:dFSigmadw}
\end{align}
For the differential with respect to $\mathbf{m}$, we will use
\cref{lemma:outer_prod_diff}:
\begin{align}
    \cfrac{\partial \FEM_\mathbf{\Sigma}}{\partial \mathbf{m}}(\theta, X) &\in \L\left(\R^{K\times d}, (\Sym)^K\right) \hookrightarrow \R^{K\times d\times d\times K\times d}:\\
    \left[\cfrac{\partial \FEM_\mathbf{\Sigma}}{\partial \mathbf{m}}(\theta, X)\right]_{k,\cdot, \cdot, l, \cdot} &= \cfrac{1}{\Gamma_k}\sum_{i=1}^n(x_i - \FEM_{m_k})(x_i - \FEM_{m_k})^\top \otimes \left[\cfrac{\partial \gamma}{\partial \mathbf{m}}\right]_{i,k,l,\cdot} \nonumber\\
    &- \cfrac{1}{\Gamma_k^2}\left(\sum_{i=1}^n\gamma_{i,k}(x_i - \FEM_{m_k})(x_i - \FEM_{m_k})^\top\right)\otimes \sum_{j=1}^n\left[\cfrac{\partial \gamma}{\partial \mathbf{m}}\right]_{j,k,l,\cdot} \nonumber\\
    &-\cfrac{1}{\Gamma_k}\sum_{i=1}^n\gamma_{i,k}\left\{\tau_{1,2}\left((x_i - \FEM_{m_k})\otimes \left[\cfrac{\partial \FEM_{\mathbf{m}}}{\partial \mathbf{m}}\right]_{k,\cdot,l,\cdot}\right) + (x_i - \FEM_{m_k})\otimes\left[\cfrac{\partial \FEM_{\mathbf{m}}}{\partial \mathbf{m}}\right]_{k,\cdot,l,\cdot}\right\}. \label{eqn:dFSigmadm}
\end{align}
In \cref{eqn:dFSigmadm}, the transposed term comes from the differential of the
outer product \cref{eqn:outer_prod_diff_same}, where $\tau_{1,2}(A)_{i,j,\cdots}
= A_{j,i,\cdots}$. For the differential with respect to $\mathbf{\Sigma}$, we
use the same method as in \cref{eqn:dFSigmadm}:
\begin{align}
    \cfrac{\partial \FEM_\mathbf{\Sigma}}{\partial \mathbf{\Sigma}}(\theta, X) &\in \L\left((\Sym)^K, (\Sym)^K\right) \hookrightarrow \R^{K\times d\times d\times K\times d\times d}:\\
    \left[\cfrac{\partial \FEM_\mathbf{\Sigma}}{\partial \mathbf{\Sigma}}(\theta, X)\right]_{k,\cdot, \cdot, l, \cdot, \cdot} &= \cfrac{1}{\Gamma_k}\sum_{i=1}^n(x_i - \FEM_{m_k})(x_i - \FEM_{m_k})^\top \otimes \left[\cfrac{\partial \gamma}{\partial \mathbf{\Sigma}}\right]_{i,k,l,\cdot, \cdot} \nonumber\\
    &- \cfrac{1}{\Gamma_k^2}\left(\sum_{i=1}^n\gamma_{i,k}(x_i - \FEM_{m_k})(x_i - \FEM_{m_k})^\top\right)\otimes \sum_{j=1}^n\left[\cfrac{\partial \gamma}{\partial \mathbf{\Sigma}}\right]_{j,k,l,\cdot, \cdot} \nonumber\\
    &-\cfrac{1}{\Gamma_k}\sum_{i=1}^n\gamma_{i,k}\left\{\tau_{1,2}\left((x_i - \FEM_{m_k})\otimes \left[\cfrac{\partial \FEM_{\mathbf{m}}}{\partial \mathbf{\Sigma}}\right]_{k,\cdot,l,\cdot, \cdot}\right) + (x_i - \FEM_{m_k})\otimes\left[\cfrac{\partial \FEM_{\mathbf{m}}}{\partial \mathbf{\Sigma}}\right]_{k,\cdot,l,\cdot, \cdot}\right\}. \label{eqn:dFSigmadSigma}
\end{align}
Finally, for the differential with respect to $X$, the method is almost
identical, with an additional term due to the presence of $x_i$ in $(x_i -
\FEM_{m_k})(x_i - \FEM_{m_k})^\top$.
\begin{align}
    \cfrac{\partial \FEM_\mathbf{\Sigma}}{\partial X}(\theta, X) &\in \L\left(\R^{n\times d}, (\Sym)^K\right) \hookrightarrow \R^{K\times d\times d\times n\times d}:\\
    \left[\cfrac{\partial \FEM_\mathbf{\Sigma}}{\partial X}(\theta, X)\right]_{k,\cdot, \cdot, i, \cdot} &= \cfrac{1}{\Gamma_k}\sum_{h=1}^n(x_h - \FEM_{m_k})(x_h - \FEM_{m_k})^\top \otimes \left[\cfrac{\partial \gamma}{\partial X}\right]_{h,k,i,\cdot} \nonumber\\
    &- \cfrac{1}{\Gamma_k^2}\left(\sum_{h=1}^n\gamma_{h,k}(x_h - \FEM_{m_k})(x_h - \FEM_{m_k})^\top\right)\otimes \sum_{j=1}^n\left[\cfrac{\partial \gamma}{\partial X}\right]_{j,k,i,\cdot} \nonumber\\
    &-\cfrac{1}{\Gamma_k}\sum_{h=1}^n\gamma_{h,k}\left\{\tau_{1,2}\left((x_h - \FEM_{m_k})\otimes \left[\cfrac{\partial \FEM_{\mathbf{m}}}{\partial X}\right]_{k,\cdot,i,\cdot}\right) + (x_h - \FEM_{m_k})\otimes\left[\cfrac{\partial \FEM_{\mathbf{m}}}{\partial X}\right]_{k,\cdot,i,\cdot}\right\}\nonumber \\
    &+\cfrac{\gamma_{i,k}}{\Gamma_k}\Biggl\{\tau_{1,2}\Bigl((x_i - \FEM_{m_k})\otimes I_d\Bigr) + (x_i - \FEM_{m_k})\otimes I_d\Biggr\}. \label{eqn:dFSigmadu}
\end{align}

\subsection{Local Minima in (GMM)-OT}\label{sec:fixweights_computations}

\subsubsection{Computation of the Local Minima of the Discrete 2-Wasserstein
Distance}\label{sec:local_minima_w2_computations}

We provide an explicit local minimum of the energy $\Ethree$ defined in
\cref{eqn:W2_local_minimum_energy}. We illustrate the setup in
\cref{fig:w2_local_minimum_setup}.

\begin{figure}[ht]
    \centering
    \includegraphics[width=0.5\textwidth]{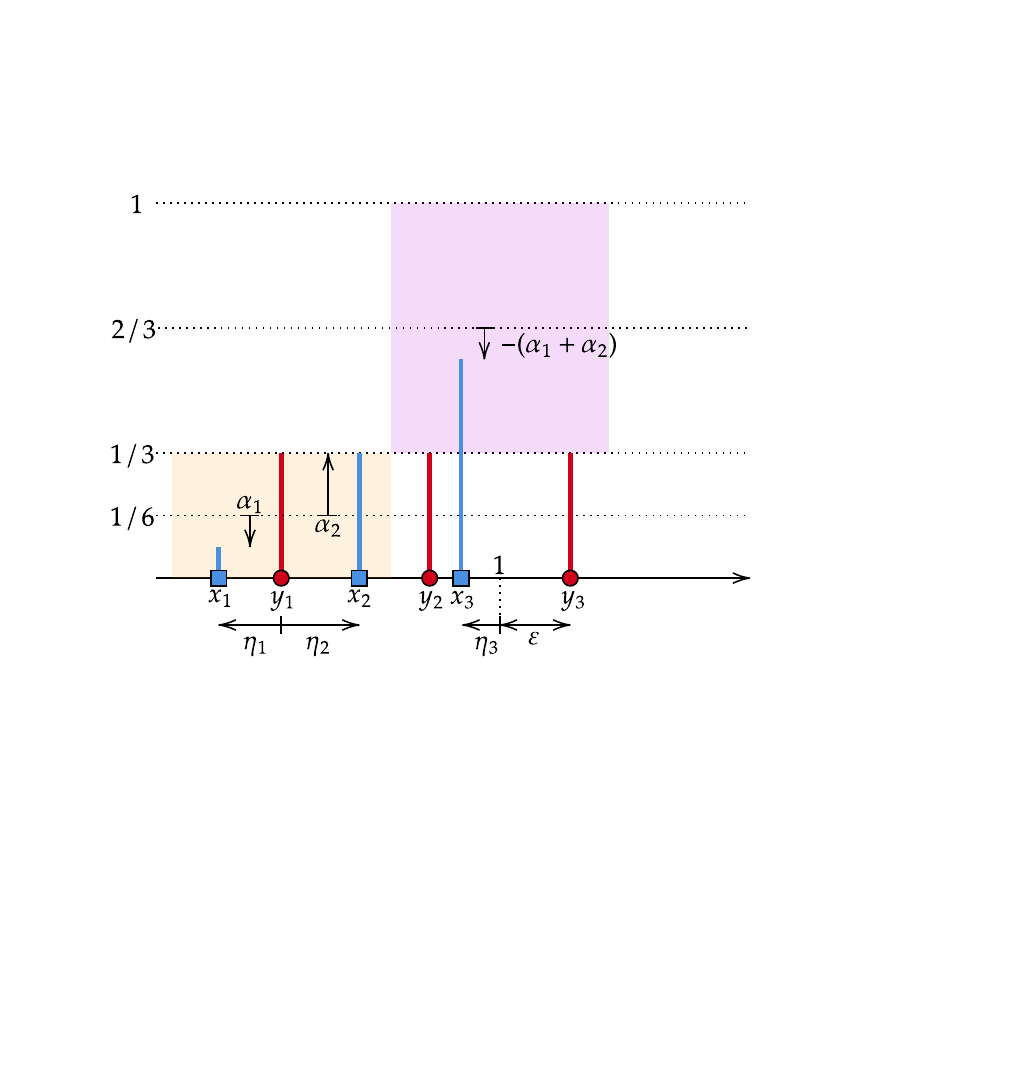}
    \caption{Setup for the local minimum of $\W_2^2(\mu_{\alpha, \eta}, \nu)$.
    The support of $\mu$ is represented with blue squares, and its weights with
    vertical blue lines. For the target $\nu$, its support is red squares and
    its weights red lines. We consider a specific region where the points $x_1 =
    \eta_1$ and $x_2 = \eta_2$ stay within $(-\frac{1}{2}, \frac{1}{2})$ and the
    weights $a_1 = \frac{1}{6} + \alpha_1$ and $a_2 = \frac{1}{6} + \alpha_2$
    stay within $[0, \frac{1}{3}]$, as represented by the orange rectangle.
    Likewise, the point $x_3 = 1+\eta_3$ must stay within $(\frac{1}{2},
    \frac{3}{2})$ and its weight $a_3 = \frac{2}{3} - (\alpha_1 + \alpha_2)$
    must stay in $[\frac{1}{3}, 1]$, as shown with the purple rectangle.}
    \label{fig:w2_local_minimum_setup}
\end{figure}

To compute the energy $\Ethree(\alpha,
\eta)$ at $[-\frac{1}{6}, \frac{1}{6}]^2\times (-\frac{1}{2}, \frac{1}{2})^3$,
we must distinguish the two cases $\alpha_1+\alpha_2 \geq 0$ and
$\alpha_1+\alpha_2 < 0$, which yield two different OT plans between
$\mu_{\alpha, \eta}$ and $\nu$. We will show separately that the energy
$\Ethree$ has a strict local minimum at $(0_{\R^2}, 0_{\R^3})$ in both
sub-regions. To break the symmetry in $(\alpha_1, \alpha_2, \eta_1, \eta_2)
\gets (\alpha_2, \alpha_1, \eta_2, \eta_1)$, we will impose the constraint
$\eta_1\leq\eta_2$ in the optimisation problem. To summarise, we split the
optimisation problem in two as follows:
\begin{equation}\label{eqn:W2_local_minimum_energy_split}
    \underset{\substack{\alpha \in [-\frac{1}{6}, \frac{1}{6}]^2 \\ 
    \eta \in (-\frac{1}{2}, \frac{1}{2})^3}}{\min}\ \Ethree(\alpha, \eta) 
    =  \min(\Ethree^+, \Ethree^-),\quad 
    \Ethree^+ := \underset{\substack{\alpha \in [-\frac{1}{6}, \frac{1}{6}]^2 \\ 
    \eta \in (-\frac{1}{2}, \frac{1}{2})^3 \\ \alpha_1 + \alpha_2 \geq 0 \\ 
    \eta_1 \leq \eta_2}}{\min}\ \Ethree(\alpha, \eta),\quad
    \Ethree^- := \underset{\substack{\alpha \in [-\frac{1}{6}, \frac{1}{6}]^2 \\ 
    \eta \in (-\frac{1}{2}, \frac{1}{2})^3 \\ \alpha_1 + \alpha_2 \leq 0 \\ 
    \eta_1 \leq \eta_2}}{\min}\ \Ethree(\alpha, \eta).
\end{equation}

\paragraph{Case $\alpha_1+\alpha_2 \geq 0$} For any $(\alpha, \eta) \in
[-\frac{1}{6}, \frac{1}{6}]^2\times (-\frac{1}{2}, \frac{1}{2})^3$ such that
$\alpha_1+\alpha_2 \geq 0$ and $\eta_1\leq\eta_2$, the optimal plan between
$\mu_{\alpha, \eta}$ and $\nu$ is given by:
$$\pi^+(\alpha, \eta) = \left(\begin{array}{ccc} \frac{1}{6} + \alpha_1 & 0 & 0
    \\
    \frac{1}{6} - \alpha_1 & \alpha_1 + \alpha_2 & 0 \\
    0 & \frac{1}{3} -\alpha_1 - \alpha_2 & \frac{1}{3} \end{array}\right),$$ and
thus the energy $\Ethree(\alpha, \eta)$ is given by:
\begin{equation}\label{eqn:EW_plus}
    \Ethree(\alpha, \eta) = \left(\tfrac{1}{6} + \alpha_1\right)\eta_1^2 
    + \left(\tfrac{1}{6} - \alpha_1\right)\eta_2^2 
    + \left(\alpha_1+\alpha_2\right)\left(1-\varepsilon-\eta_2\right)^2 
    + \left(\tfrac{1}{3} - \alpha_1 - \alpha_2\right)
    \left(\eta_3 +\varepsilon\right)^2
    + \tfrac{1}{3}(\eta_3-\varepsilon)^2.
\end{equation}
Taking dual variables $\lambda_1, \lambda_2 \in \R$, the Lagrangian of the
problem $\Ethree^+$ defined in \cref{eqn:W2_local_minimum_energy_split} is given
by:
\begin{equation}\label{eqn:W2_local_minimum_lagrangian_case1}
    \mathcal{L}^+(\alpha, \eta, \lambda_1, \lambda_2) =
    \Ethree(\alpha, \eta) + \lambda_1(-\alpha_1-\alpha_2) 
    + \lambda_2(\eta_1-\eta_2).
\end{equation}
To find solutions of the problem $\Ethree^+$, we study the necessary KKT system
(see \cite[Section 5.5.3]{boyd2004convex}, or \cite[Theorem
12.1]{nocedal2006numerical}). We begin with the stationarity condition,
expressed at the point $(\alpha, \eta) \in [-\frac{1}{6}, \frac{1}{6}]^2\times
(-\frac{1}{2}, \frac{1}{2})^3$:
\begin{align*}
    0 &= \cfrac{\partial \mathcal{L}^+}{\partial \alpha_1} 
    = \eta_1^2 - \eta_2^2 + (1-\varepsilon-\eta_2)^2 
    - (\eta_3+\varepsilon)^2 -\lambda_1;\\
    0 &= \cfrac{\partial \mathcal{L}^+}{\partial \alpha_2}
    = (1-\varepsilon-\eta_2)^2 - (\eta_3+\varepsilon)^2 - \lambda_1;\\
    0 &= \cfrac{\partial \mathcal{L}^+}{\partial \eta_1}
    = 2\left(\tfrac{1}{6} + \alpha_1\right)\eta_1 + \lambda_2;\\
    0 &= \cfrac{\partial \mathcal{L}^+}{\partial \eta_2}
    = 2\left(\tfrac{1}{6} - \alpha_1\right)\eta_2 - 2(\alpha_1+\alpha_2)
    (1-\varepsilon-\eta_2) - \lambda_2;\\
    0 &= \cfrac{\partial \mathcal{L}^+}{\partial \eta_3}
    = 2\left(\tfrac{1}{3} - \alpha_1 - \alpha_2\right)(\eta_3+\varepsilon) 
    + \tfrac{2}{3}(\eta_3-\varepsilon).
\end{align*}
The remaining KKT conditions are the primal / dual feasibility conditions, which
are given by:
$$\alpha_1+\alpha_2 \geq 0,\; \eta_1\leq \eta_2,\; \lambda_1 \geq 0,\;
\lambda_2\geq 0,$$ and the complementary slackness conditions:
$$\lambda_1(-\alpha_1-\alpha_2) = 0,\; \lambda_2(\eta_1-\eta_2) = 0.$$ It is
easy to see that the point $(\alpha, \eta, \lambda_1, \lambda_2) = (0_{\R^2},
0_{\R^3}, 1-2\varepsilon, 0)$ is a solution of the KKT system (note that
$\varepsilon < \tfrac{1}{2}$).

We now check that our critical point is a local minimum. For $\alpha \in
[-\frac{1}{6}, \frac{1}{6}]^2,\; \eta \in (-\frac{1}{2}, \frac{1}{2})^3$
verifying $\alpha_1+\alpha_2\geq 0$, we compute the Hessian of $\Ethree$:
\begin{equation}
    H^+(\alpha, \eta) = \left(\begin{matrix}0 & 0 & 2 \eta_{1} & 2 \varepsilon - 2 & - 2 \eta_{3} - 2 \varepsilon\\0 & 0 & 0 & 2 \eta_{2} + 2 \varepsilon - 2 & - 2 \eta_{3} - 2 \varepsilon\\2 \eta_{1} & 0 & 2 \alpha_{1} + \frac{1}{3} & 0 & 0\\2 \varepsilon - 2 & 2 \eta_{2} + 2 \varepsilon - 2 & 0 & 2 \alpha_{2} + \frac{1}{3} & 0\\- 2 \eta_{3} - 2 \varepsilon & - 2 \eta_{3} - 2 \varepsilon & 0 & 0 & - 2 \alpha_{1} - 2 \alpha_{2} + \frac{4}{3}\end{matrix}\right).
\end{equation}
Using numerical solvers, we obtain that at $(\alpha, \eta) = (0_{\R^2},
0_{\R^3})$, the Hessian verifies the following property:
\begin{equation}\label{eqn:prop_H_plus_spectrum}
    H^+(0_{\R^2}, 0_{\R^3}) v = 0,\; v:= (1, -1, 0, 0, 0),\; 
    \forall w \in v^\perp,\; w^\top H^+(0_{\R^2}, 0_{\R^3}) w > 0.
\end{equation}
Adding the vector $tv$ to $(\alpha, \eta)$ for $|t|$ small enough corresponds to
adding $t$ to $\alpha_1$ and subtracting $t$ from $\alpha_2$, and we notice that
at $(\alpha, \eta) = (0_{\R^2}, 0_{\R^3})$, since $\eta_1=\eta_2$, this
operation does not change the cost: $\Ethree((t, -t), 0_{\R^3}) =
\Ethree(0_{\R^2}, 0_{\R^3})$. We conclude that $(0_{\R^2}, 0_{\R^3})$ is a local
minimum for the problem $\Ethree^+$ defined in
\cref{eqn:W2_local_minimum_energy_split}.

\paragraph{Case $\alpha_1+\alpha_2 \leq 0$} For any $(\alpha, \eta) \in
[-\frac{1}{6}, \frac{1}{6}]^2\times (-\frac{1}{2}, \frac{1}{2})^3$ such that
$\alpha_1+\alpha_2 \leq 0$ and $\eta_1\leq\eta_2$, the optimal plan between
$\mu_{\alpha, \eta}$ and $\nu$ and the associated energy $\Ethree$ are given by:
$$\pi^-(\alpha, \eta) = \left(\begin{array}{ccc} \frac{1}{6} + \alpha_1 & 0 & 0
    \\
    \frac{1}{6} + \alpha_2 & 0 & 0 \\
    -\alpha_1-\alpha_2 & \frac{1}{3} & \frac{1}{3} \end{array}\right),$$
\begin{equation*}
    \Ethree(\alpha, \eta) = \left(\tfrac{1}{6} + \alpha_1\right)\eta_1^2 
    + \left(\tfrac{1}{6} + \alpha_2\right)\eta_2^2 
    - \left(\alpha_1+\alpha_2\right)\left(1+\eta_3\right)^2 
    + \tfrac{1}{3}\left(\eta_3 +\varepsilon\right)^2
    + \tfrac{1}{3}(\eta_3-\varepsilon)^2.
\end{equation*}
We deduce the expression of the Lagrangian of the problem $\Ethree^-$ defined in
\cref{eqn:W2_local_minimum_energy_split}:
\begin{equation}\label{eqn:W2_local_minimum_lagrangian_case2}
    \mathcal{L}^-(\alpha, \eta, \lambda_1, \lambda_2) =
    \Ethree(\alpha, \eta) + \lambda_1(\alpha_1+\alpha_2)
    + \lambda_2(\eta_1-\eta_2).
\end{equation}
The KKT stationarity condition writes then:
\begin{align*}
    0 &= \cfrac{\partial \mathcal{L}^-}{\partial \alpha_1} 
    = \eta_1^2 - (1+\eta_3)^2 +\lambda_1;\\
    0 &= \cfrac{\partial \mathcal{L}^-}{\partial \alpha_2}
    = \eta_2^2 - (1+\eta_3)^2 +\lambda_1;\\
    0 &= \cfrac{\partial \mathcal{L}^-}{\partial \eta_1}
    = 2\left(\tfrac{1}{6} + \alpha_1\right)\eta_1 + \lambda_2;\\
    0 &= \cfrac{\partial \mathcal{L}^-}{\partial \eta_2}
    = 2\left(\tfrac{1}{6} - \alpha_1\right)\eta_2 - \lambda_2;\\
    0 &= \cfrac{\partial \mathcal{L}^-}{\partial \eta_3}
    = -2\left(\alpha_1 + \alpha_2\right)(1+\eta_3) 
    + \tfrac{4}{3}\eta_3.
\end{align*}
Again the primal / dual feasibility conditions read:
$$\alpha_1+\alpha_2 \leq 0,\; \eta_1\leq \eta_2,\; \lambda_1 \geq 0,\;
\lambda_2\geq 0,$$ and the complementary slackness conditions:
$$\lambda_1(\alpha_1+\alpha_2) = 0,\; \lambda_2(\eta_1-\eta_2) = 0.$$ Likewise,
the point $(\alpha, \eta, \lambda_1, \lambda_2) = (0_{\R^2}, 0_{\R^3}, 1, 0)$ is
a solution of the KKT system.

To prove local minimality, we compute the Hessian of $\Ethree$ at $\alpha \in
[-\frac{1}{6}, \frac{1}{6}]^2,\; \eta \in (-\frac{1}{2}, \frac{1}{2})^3$
verifying $\alpha_1+\alpha_2\leq 0$:
\begin{equation*}
    H^-(\alpha, \eta) = \left(\begin{matrix}0 & 0 & 2 \eta_{1} & 0 & - 2 \eta_{3} - 2\\0 & 0 & 0 & 2 \eta_{2} & - 2 \eta_{3} - 2\\2 \eta_{1} & 0 & 2 \alpha_{1} + \frac{1}{3} & 0 & 0\\0 & 2 \eta_{2} & 0 & 2 \alpha_{2} + \frac{1}{3} & 0\\- 2 \eta_{3} - 2 & - 2 \eta_{3} - 2 & 0 & 0 & - 2 \alpha_{1} - 2 \alpha_{2} + \frac{4}{3}\end{matrix}\right).
\end{equation*}
The property of $H^+(0_{\R^2}, 0_{\R^3})$ from \cref{eqn:prop_H_plus_spectrum}
is also satisfied by $H^-(0_{\R^2}, 0_{\R^3})$, and we conclude likewise that
$(0_{\R^2}, 0_{\R^3})$ is a local minimum for the problem $\Ethree^-$ defined in
\cref{eqn:W2_local_minimum_energy_split}.

We conclude that $(\alpha, \eta) = (0_{\R^2}, 0_{\R^3})$ is a minimum of
$\Ethree$ on the set $[-\frac{1}{6}, \frac{1}{6}]^2\times (-\frac{1}{2},
\frac{1}{2})^3$, with value $\Ethree(0_{\R^2}, 0_{\R^3}) =
\tfrac{2}{3}\varepsilon^2 > 0$ (use \cref{eqn:EW_plus} for example).

\subsubsection{Discrete 2-Wasserstein Distance for \texorpdfstring{$n=2$}{n=2}:
Proof of Unique Local Minimum}\label{sec:unique_local_minimum_w2_computations}

\paragraph{Unique Local Minimum of the Discrete 2-Wasserstein Distance for
\texorpdfstring{$n=2$}{n=2}}\label{sec:unique_local_min_w2_n2} We consider the
minimisation of the quadratic Wasserstein cost $\mu\longmapsto\W_2^{2}(\mu,\nu)$
when $\mu$ is restricted to two Dirac atoms,
\[
\mu_{x, \alpha}=\alpha\delta_{x_1}+(1-\alpha)\delta_{x_2},
\qquad (x,\alpha)\in\R^{2d}\times [0,1],
\]
against the fixed target $\nu=\gamma\delta_{y_1}+(1-\gamma)\delta_{y_2}$ with
$\gamma\in (0, 1)$ and $y_1\neq y_2$.  
Denoting
\begin{equation}\label{eqn:W2_2_minima_energy}
    \Etwo(x,\alpha):=\W_2^{2}(\mu_{x, \alpha},\nu),
\end{equation}
we show  that inside the domain $\mathcal{D}:=\{(x,\alpha):x_1\neq
x_2,\;\alpha\in(0,1)\}$ every local minimiser of $\Etwo$ is such that $ \mu_{x,
\alpha} = \nu$, and is thus a global minimiser. We show this result more
generally, for costs $c(x, y):=\phi(x-y)$ where $\phi$ is convex,
$\mathcal{C}^1$, $\phi(v)=0\iff v=0$ and $\nabla\phi(v)=0\iff v=0$.
Additionally, boundary stationary points can occur when atoms merge, i.e.
$\alpha\in\{0,1\}$ or $x_1=x_2$; they are local but not global minima. When
these weights arise from EM, we can prevent such degeneracies by fixing weights
or imposing a hard lower bound on them throughout the iterations.

We now determine all the local minimisers of the energy $\Etwo$ defined in
\cref{eqn:W2_2_minima_energy} in the domain $\mathcal{D}:=\{(x,\alpha):x_1\neq
x_2,\;\alpha\in(0,1)\}$. Fix $\gamma\in(0, 1)$, and let $x_{1}$, $x_{2}$,
$y_{1}$ and $y_{2}$ in $\mathbb{R}^{d}$, with $y_{1}\ne y_{2}$. For $\alpha \in
(0, 1)$, we consider the measures:
\[
\mu(x_1, x_2, \alpha) := \alpha\delta_{x_{1}}+(1-\alpha)\delta_{x_{2}},
\quad\nu:=\gamma\delta_{y_{1}}+(1-\gamma)\delta_{y_{2}}.
\]
Consider the ground cost $c := (x, y) \in \R^d\times\R^d \longmapsto \phi(x-y)$,
with:
\begin{itemize}
\item (H1) $\phi:\mathbb{R}^{d}\rightarrow\mathbb{R}_{+}$ convex and
$\mathcal{C}^{1}$,
\item (H2) $\phi(v)=0\iff v=0$,
\item (H3) $\nabla\phi(v)=0\iff v=0$.
\end{itemize}
For instance, $\phi$ could be $\lVert\cdot\rVert_{p}^{p}$ with $p>1$. Every
feasible coupling between $\mu(x_1, x_2, \alpha)$ and $\nu$ can be written under
the form:
\[
\pi(\alpha, t) := \begin{pmatrix}\alpha-t & t\\
\gamma-\alpha+t & 1-\gamma-t
\end{pmatrix},\; t\in\left[t_-(\alpha),t_+(\alpha)\right],\; t_-(\alpha) := \max(0, \alpha-\gamma),\; t_+(\alpha) := \min(\alpha,1-\gamma).
\]
The interval $\left[t_-(\alpha),t_+(\alpha)\right]$ is non-empty for all $\alpha
\in (0, 1)$, since $\gamma \in (0, 1)$. The transport cost associated to a plan
$\pi(\alpha, t)$ writes:
\begin{equation}
    \mathcal{F}(x, \alpha, t) := (\alpha - t)c(x_1, y_2) 
    + (\gamma - \alpha + t) c(x_2, y_2) + tc(x_1, y_2) 
    + (1-\gamma-t)c(x_2, y_2).
\end{equation}
Since $\Etwo(x_1, x_2, \alpha) = \min_{t \in [t_-(\alpha),
t_+(\alpha)]}\mathcal{F}(x_1, x_2, \alpha, t)$ and that $\mathcal{F}$ is linear
in $t$, we obtain:
\begin{equation}
    \begin{gathered}
        \Etwo(x_1, x_2, \alpha) = \min\left(\mathcal{E}^{-}(x_1, x_2, \alpha), 
        \mathcal{E}^{+}(x_1, x_2, \alpha)\right), \\
        \mathcal{E}^{-}(x_1, x_2, \alpha) := 
        \mathcal{F}(x_1, x_2, \alpha, t_{-}(\alpha)),\;
        \mathcal{E}^{+}(x_1, x_2, \alpha) := 
        \mathcal{F}(x_1, x_2, \alpha, t_{+}(\alpha)).
    \end{gathered}
\end{equation}
In particular, any local optimum of $\Etwo$ in $\mathcal{D}:=\{(x_1, x_2,
\alpha) \in \R^d\times\R^d\times (0, 1):\; x_{1}\ne x_{2}\}$ must be a local
optimum of $\mathcal{E}^{-}$ or $\mathcal{E}^{+}$. Straightforward computation
yields the following symmetrical expression:
\[
\forall x_1, x_2 \in \R^d,\; \forall \alpha \in (0, 1),\; \forall t \in \R,\; 
\mathcal{F}(x_1, x_2, \alpha, t) = \mathcal{F}(x_2, x_1, 1-\alpha, 1-\gamma-t),
\]
which can be understood as exchanging the roles of $x_1$ and $x_2$. For any
$\alpha \in (0, 1)$, we have $1-\gamma-t_-(\alpha) = t_+(\alpha)$, concluding a
symmetrical relationship between $\mathcal{E}^-$ and $\mathcal{E}^+$:
\begin{equation}\label{eqn:twocomp_symmetry}
    \forall x_1, x_2 \in \R^d,\; \forall \alpha \in (0, 1),\; 
    \mathcal{E}^+(x_2, x_1, 1-\alpha) = \mathcal{E}^-(x_1, x_2, \alpha).
\end{equation}
As a consequence, any local optimum $(x_1, x_2, \alpha) \in \mathcal{D}$ of
$\mathcal{E}^+$ is such that $(x_2, x_1, 1-\alpha)$ is a local optimum of
$\mathcal{E}^-$, and conversely. To determine the local minima of $\Etwo$ in
$\mathcal{D}$, we can focus on the local minima of $\mathcal{E}^-$ in
$\mathcal{D}$. We split $\mathcal{D}$ intro three sub-regions wherein
$\mathcal{E}^-$ has an explicit expression: consider
\[
\mathcal{R}_{<}:=\{0<\alpha<\gamma\}\cap\mathcal{D},\quad 
\mathcal{R}_{=}:=\{\alpha=\gamma\}\cap\mathcal{D},\quad 
\mathcal{R}_{>}:=\{\gamma<\alpha<1\}\cap\mathcal{D},
\]
we have the following expressions for $\mathcal{E}^-$ at $(x_1, x_2, \alpha) \in
\mathcal{D}$:
\begin{equation}
    \mathcal{E}^{-}(x_1, x_2, \alpha)=
    \begin{cases}
        \alpha c(x_{1}, y_{1})+(\gamma-\alpha) c(x_{2}, y_{1})
        +(1-\gamma) c(x_{2}, y_{2}) 
        & \text{if } (x_1, x_2, \alpha) \in \mathcal{R}_< \cup \mathcal{R}_=\\
        \gamma c(x_{1}, y_{1})+(\alpha-\gamma) c(x_{1}, y_{2})
        +(1-\alpha) c(x_{2}, y_{2}) 
        & \text{if } (x_1, x_2, \alpha) \in \mathcal{R}_= \cup \mathcal{R}_>
    \end{cases}.\label{eq:twocomp}
\end{equation}

\paragraph{No local minimum in $\mathcal{R}_{<}$}

For $(x_1, x_2, \alpha) \in \mathcal{R}_{<}$, consider
\[
\mathcal{E}_{<}^{-}(x_{1}, x_{2}, \alpha):=\alpha c(x_{1}, y_{1})+(\gamma-\alpha) c(x_{2}, y_{1})+(1-\gamma) c(x_{2}, y_{2}).
\]
A local optimum $(x_1, x_2, \alpha) \in \mathcal{R}_{<}$ of $\mathcal{E}^{-}$
must satisfy the stationarity conditions:
\begin{align*}
    0 &= \frac{\partial\mathcal{E}_{<}^{-}}{\partial x_{1}}=
    \alpha \nabla \phi(x_{1} - y_{1})\qquad
    \underset{\text{(H3)}}{\implies}\qquad x_{1}=y_{1},\\
    0 &= \frac{\partial\mathcal{E}_{<}^{-}}{\partial\alpha}=
    \underbrace{c(x_{1}, y_{1})}_{=0\text{ by (H2)}}-c(x_{2}, y_{1})
    \qquad\underset{\text{(H2)}}{\implies}\qquad x_{2}=y_{1}.\\
    0 &= \frac{\partial\mathcal{E}_{<}^{-}}{\partial x_{2}}
    =(\gamma-\alpha)\underbrace{\nabla \phi(x_{2} - y_{1})}_{=0\text{ by (H2)}}
    +(1-\gamma) \nabla \phi(\underbrace{x_{2}}_{=y_{1}} - y_{2})
    =(1-\gamma) \nabla \phi(y_{1} - y_{2}).
\end{align*}
By (H3), since $y_{1}\ne y_{2}$, we have $\nabla c(y_{1}, y_{2})\ne0$, hence
$\mathcal{E}_{-}$ has no local minimum in $\mathcal{R}_{<}$.

\paragraph{No local minimum in $\mathcal{R}_{>}$}

For $(x_1, x_2, \alpha) \in \mathcal{R}_{>}$, consider
\[
\mathcal{E}_{>}^{-}(x_{1}, x_{2}, \alpha):=\gamma c (x_{1}, y_{1})+(\alpha-\gamma) c(x_{1}, y_{2})+(1-\alpha) c(x_{2}, y_{2}).
\]
Likewise, a local minimum $(x_1, x_2, \alpha) \in \mathcal{R}_{>}$ of
$\mathcal{E}^{-}$ must verify:
\begin{align*}
    0 &= \frac{\partial\mathcal{E}_{>}^{-}}{\partial x_{1}}=
    (1-\alpha) \nabla \phi(x_{2} - y_{2})\qquad
    \underset{\text{(H3)}}{\implies}\qquad x_{2}=y_{2},\\
    0 &= \frac{\partial\mathcal{E}_{>}^{-}}{\partial\alpha}=
    c(x_{1}, y_{2})-\underbrace{c(x_{2}, y_{2})}_{=0\text{ by (H2)}}
    \qquad\underset{\text{(H2)}}{\implies}\qquad x_{1}=y_{2}.\\
    0 &= \frac{\partial\mathcal{E}_{>}^{-}}{\partial x_{2}}
    =\gamma\nabla \phi(\underbrace{x_{1}}_{=y_{2}} - y_{1})
    +(\alpha-\gamma) \underbrace{\nabla \phi(x_{2} - y_{2})}_{=0\text{ by (H2)}}
    =\gamma \nabla \phi(y_{2} - y_{1}) \ne 0.
\end{align*}
As before, this system cannot hold, and thus $\mathcal{E}_{-}$ has no local
minimum in $\mathcal{R}_{>}$.

\paragraph{Analysis on $\mathcal{R}_=$} For $(x_1, x_2, \alpha) \in
\mathcal{R}_{=}$, we have:
\[
\mathcal{E}^-(x_1, x_2, \alpha) = \mathcal{E}_=^-(x_1, x_2) 
:= \gamma c(x_{1}, y_{1})+(1-\gamma)c(x_{2}, y_{2}),
\]
thus a local minimum $(x_1, x_2, \alpha) \in \mathcal{R}_{=}$ of
$\mathcal{E}^{-}$ must satisfy the stationarity conditions:
\begin{align*}
    0 &= \frac{\partial\mathcal{E}_{=}^{-}}{\partial x_{1}}=
    \gamma \nabla \phi(x_{1} - y_{1})\qquad
    \underset{\text{(H3)}}{\implies}\qquad x_{1}=y_{1},\\
    0 &= \frac{\partial\mathcal{E}_{=}^{-}}{\partial x_{2}}
    =(1-\gamma)\nabla \phi(x_{2} - y_{2})\qquad
    \underset{\text{(H3)}}{\implies}\qquad x_{2}=y_{2},
\end{align*}
Since $(x_1, x_2, \alpha) = (y_1, y_2, \gamma)$ is a global minimum of
$\mathcal{E}_{=}^{-}$, we conclude that the only local minimum of
$\mathcal{E}^-$ in $\mathcal{D}$ is $(y_1, y_2, \gamma)$.

Using \cref{eqn:twocomp_symmetry}, we deduce that the only local minimum of
$\mathcal{E}^+$ in $\mathcal{D}$ is $(y_2, y_1, 1-\gamma)$, which is a global
minimum of $\Etwo$ in $\mathcal{D}$. Finally, there are two local minima of
$\Etwo$ in $\mathcal{D}$, which are the two global minima corresponding to
$\mu(x_1, x_2, \alpha) = \nu$.

\paragraph{Local minimum for a single-Dirac source, and how to avoid it}

The constraint $(x_1, x_2, \alpha) \in \mathcal{D}$ imposes in particular that
$\mu(x_1, x_2, \alpha)$ is composed of two distinct Dirac masses. We now focus
on the pathological case where $\alpha=0$, showing the existence of a local
optimum. For simplicity, we consider $d=1$ and the cost $c(x,y):=|x-y|^2$. Set 
\[
z^{\star}:=(x_{1}^{\star}, x_{2}^{\star}, \alpha^{\star}):=(-1, 1-\gamma, 0),\; y_{1}:=0,\; y_{2}:=1,\; \gamma \in (0, 1).
\]
For $(x_1, x_2, \alpha)$ in an open vicinity of $z^{\star}$, the cost simplifies
to
\[
\mathcal{E}_2(x_{1}, x_{2}, \alpha)=\alpha x_{1}^{2}
+(\gamma-\alpha)x_{2}^{2}+(1-\gamma)(x_2-1)^{2}.
\]
To show that $z^{\star}$ is a local minimum, we compute the gradient of
$\mathcal{E}_2$ at $z^{\star}$:
\[
\frac{\partial\mathcal{E}_2}{\partial x_{1}}(z^{\star})=2\alpha^{\star}x_{1}^{\star}=0,
\]
\[
\frac{\partial\mathcal{E}_2}{\partial x_{2}}(z^{\star})=2\left(x_{2}^{\star}-(1-\gamma)\right)=0,
\]
\[
\frac{\partial\mathcal{E}_2}{\partial\alpha}(z^{\star})=x_{1}^{2}-x_{2}^{2}=1-(1-\gamma)^{2}>0.
\]
Consider a perturbation $h:=(h_{1}, h_{2}, h_{3}) \in \R^3$ with $h_{3}>0$ (as
$\alpha\ge0$). For $z:=z^{\star}+h$, we have:
\[
\mathcal{E}_2(z)=\mathcal{E}_2(z^{\star})+\underbrace{\frac{\partial\mathcal{E}_2}{\partial\alpha}(z^{\star})h_{3}}_{>0}+\O(\lVert h\rVert^{2}),
\]
so every feasible perturbation increases the objective. Thus $z^{\star}$ is a
strict local minimiser, but is not global. When $\alpha$ arises as a mixture
weight in EM, two remedies are possible: fixing uniform weights, or enforcing
$\alpha \ge\varepsilon>0$: both methods sidestep the degenerate local minimum
above.

\subsubsection{Essential Stationary Points for the MW2-EM Loss:
Computations}\label{sec:vanishing_gradients_EM_MW_computations}

We illustrate a phenomenon of points where the gradients are infinitesimally
small on a simple example with two Gaussian components, and studying a variant
of the EM algorithm that does not update the covariances for simplicity. We fix
$\varepsilon > 0$ and the following parametrised input GMM:
\[
\forall \alpha \in (0, 1),\; \forall m_1, m_2 \in \R,\; 
\mu(\alpha, m_1, m_2) :=
\alpha\mathcal{N}(m_1, \varepsilon^2) 
+ (1-\alpha)\mathcal{N}(m_2, \varepsilon^2),
\]
and for $w := \tfrac{2}{3}$, the target GMM: $\nu := w\mathcal{N}(0,
\varepsilon^2) + (1-w)\mathcal{N}(1, \varepsilon^2)$. We consider the particular
dataset $X_\varepsilon \in \R^{6\times 1}$ defined by:
\[
X_\varepsilon := (x_1, \cdots, x_6),\; x_1 := -\varepsilon,\; 
x_2 := \varepsilon,\; x_3 := x_5 := m^\star - \varepsilon,\; 
x_4 := x_6 := m^\star + \varepsilon.
\]
We define the GMM $\mu^\star := (1-w)\mathcal{N}(0, \varepsilon^2) +
w\mathcal{N}(m^\star, \varepsilon^2)$ associated to the vanishing gradient
point, and introduce $\theta^\star := ((1-w), 0, m^\star)$ its parameters. For
any $X \in \R^{6\times 1}$, we denote by $\theta(X)$ the parameters of the GMM
fitted by the EM algorithm in one iteration on $X$ with initialisation
$\theta^\star$: $\theta(X) := \FEM(\theta^\star, X)$ (we remind that in this
section, we consider a simplified EM that does not update covariances). We shall
first see that $\theta(X_\varepsilon)\approx \theta^\star$, then that the energy
\[
\EEMMW := X \longmapsto \MW_2^2(\mu(\theta(X)), \nu),\quad \mu(\theta(X)) := \alpha(X)\mathcal{N}(m_1(X), \varepsilon^2) + (1-\alpha(X))\mathcal{N}(m_2(X), \varepsilon^2),
\]
verifies $\partial_X \EEMMW(X_\varepsilon) \approx 0$. We summarise our setting
in \cref{fig:computations_vanishing_gradients_EM_MW}.

\begin{figure}[ht]
    \centering
    \includegraphics[width=0.6\textwidth]{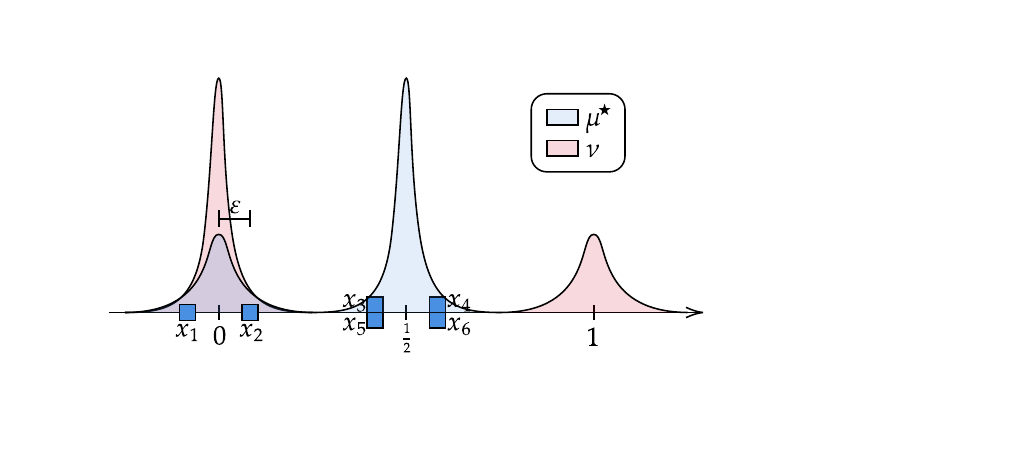}
    \caption{With the data $X_\varepsilon := (x_1, \cdots, x_6)$, one iteration
    of the EM algorithm initialised at $\theta^\star$ (corresponding to the GMM
    $\mu^\star$) yields approximately the same parameters $\theta^\star$. We
    shall see that the gradient of the energy $\EEMMW$ at $X_\varepsilon$ is
    approximately zero, illustrating the vanishing gradient phenomenon.}
    \label{fig:computations_vanishing_gradients_EM_MW}
\end{figure}

\paragraph{Showing that $\theta(X_\varepsilon) \approx \theta^\star$} Using the
expressions of the EM update from \cref{eqn:gamma_ik}, we obtain that
responsibilities $\gamma(X_\varepsilon)$ computed with initialisation
$\theta^\star$ verify:
\[
\forall i \in \{1, 2\},\; \gamma_{i, 1}(X_\varepsilon) = 1 + \O(e^{-1/\varepsilon^2}),\; \forall i \in \{3, 4, 5, 6\},\; \gamma_{i, 2}(X_\varepsilon) = 1 + \O(e^{-1/\varepsilon^2}),
\]
as $\varepsilon \longrightarrow 0^+$. This means that the two points $x_1, x_2$
are considered as belonging to the first component $\mathcal{N}(0,
\varepsilon^2)$ of $\mu^\star$, and the four points $x_3, x_4, x_5, x_6$ to the
second component $\mathcal{N}(m^\star, \varepsilon^2)$ of $\mu^\star$. As for
the weight $\alpha(X_\varepsilon)$ and means $m(X_\varepsilon)$ of the GMM
$\FEM(\theta^\star, X^\star)$, we use \cref{eqn:theta_next} to deduce that:
\begin{equation}\label{eqn:theta_vanish_gradients_EM_MW}
    \alpha(X_\varepsilon) = (1-w) + \O(e^{-1/\varepsilon^2}),\; 
    m_1(X_\varepsilon) = 0 + \O(e^{-1/\varepsilon^2}),\; 
    m_2(X_\varepsilon) = m^\star + \O(e^{-1/\varepsilon^2}).
\end{equation}
In this sense, we have $\theta(X_\varepsilon) \approx \theta^\star$ as
$\varepsilon \longrightarrow 0^+$.

\paragraph{Vanishing gradient of the energy $\EEMMW$ at $X_\varepsilon$} For
$\varepsilon > 0$ sufficiently small and $X$ in a sufficiently small open
vicinity of $X_\varepsilon$, it follows by regularity of $F$ that
$\theta(X)=(\alpha(X), m_1(X), m_2(X))$ will be sufficiently close to
$\theta(X_\varepsilon)$ to ensure that $\alpha(X) < w$ and that $m_1(X) <
m_2(X)$, which yields (by property on one-dimensional OT) the following
expression for the energy:
\[
\EEMMW(X) := \MW_2^2(\mu(\theta(X)), \nu) = \mathcal{F}(\alpha(X), m_1(X), m_2(X)),
\]
where $\mathcal{F}:(0, 1)\times\R\times\R\longrightarrow \R$ is the function
defined by:
\[
\forall (\alpha, m_1, m_2) \in (0, 1)\times\R\times\R,\;
\mathcal{F}(\alpha, m_1, m_2) := \alpha m_1^2 + (w - \alpha)m_2^2 
+ (1-w)(m_2-1)^2.
\]
To determine the gradient of $\EEMMW$ at $X_\varepsilon$, we use the chain rule:
\[
\nabla \EEMMW(X_\varepsilon) =
\cfrac{\partial \mathcal{F}}{\partial \alpha}(\theta(X_\varepsilon))
\cfrac{\partial \alpha}{\partial X}(X_\varepsilon)
+ \cfrac{\partial \mathcal{F}}{\partial m_1}(\theta(X_\varepsilon))
\cfrac{\partial m_1}{\partial X}(X_\varepsilon)
+ \cfrac{\partial \mathcal{F}}{\partial m_2}(\theta(X_\varepsilon))
\cfrac{\partial m_2}{\partial X}(X_\varepsilon).
\]
Differentiating $\mathcal{F}$ and evaluating at $\theta(X_\varepsilon)$ which
verifies the properties given in \cref{eqn:theta_vanish_gradients_EM_MW} yields:
\[
\cfrac{\partial \mathcal{F}}{\partial \alpha}(\theta(X_\varepsilon)) = -m^\star + \O(e^{-1/\varepsilon^2}) = \O(1),\;
\cfrac{\partial \mathcal{F}}{\partial m_1}(\theta(X_\varepsilon)) 
= \O(e^{-1/\varepsilon^2}),\;
\cfrac{\partial \mathcal{F}}{\partial m_2}(\theta(X_\varepsilon)) 
= \O(e^{-1/\varepsilon^2}).
\]
Using the expression of $\tfrac{\partial \alpha}{\partial X}$ from
\cref{eqn:dFwdu} with $X:=X_\varepsilon$ and $\theta := \theta^\star$ yields
$\tfrac{\partial \alpha}{\partial X}(X_\varepsilon) =
\O(\varepsilon^{-2}e^{-1/\varepsilon^2})$. 

With the expressions of $\tfrac{\partial m_1}{\partial X}$ and $\tfrac{\partial
m_2}{\partial X}$ computed in \cref{eqn:dFmdu}, we obtain $\tfrac{\partial
m_1}{\partial X}(X_\varepsilon) = \O(1)$ and $\tfrac{\partial m_2}{\partial
X}(X_\varepsilon) = \O(1)$. Putting everything together, we conclude that the
gradient vanishes:
\[
\nabla \EEMMW(X_\varepsilon) = \O(\varepsilon^{-2}e^{-1/\varepsilon^2}).
\]

\subsection{Differentiating the Matrix Square Root}\label{sec:diff_matrix_sqrt}

We consider the (differentiable) matrix square root function:
\begin{equation*}
    R := \app{\PD}{\PD}{A}{\sqrt{A}},
\end{equation*}
and provide a formula for its differential which is useful for numerical
automatic differentiation.
\begin{prop}\label{eqn:diff_matrix_sqrt} Let $A\in\PD$ and $H\in\Sym$. Then the
    differential of the matrix square root at $A$ in the direction $H$ is given
    by the following matrix in $\Sym$:
    \begin{equation*}
        \dd_A R(H) = PGP^\top,\quad [G]_{i,j} 
        := \cfrac{[P^\top H P]_{i,j}}{\sqrt{\lambda_i} + \sqrt{\lambda_j}},
    \end{equation*}
    where the orthonormal decomposition of $A$ is given by $A =
    P\diag(\lambda_1, \cdots, \lambda_d)P^\top$.    
\end{prop}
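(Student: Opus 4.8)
The plan is to differentiate the defining identity $R(A)^2 = A$, which converts the computation of $\dd_A R(H)$ into solving a linear Lyapunov equation, and then to diagonalise $A$ to read off a closed form.

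First I would record why $R$ is differentiable (the statement takes this for granted, but it underlies the argument). I would apply the implicit function theorem to $\Phi : \PD \times \PD \to \Sym$, $\Phi(S,A) := S^2 - A$, at a point $(R(A_0), A_0)$: here $\Phi$ is smooth, $\Phi(R(A),A) = 0$, and $\partial_S \Phi(S_0, A_0)$ is the Lyapunov operator $\Lambda_{S_0} : H \mapsto S_0 H + H S_0$ on $\Sym$. Writing $S_0$ in an orthonormal eigenbasis, $\Lambda_{S_0}$ is diagonal with eigenvalues $\mu_i + \mu_j > 0$, hence invertible; the implicit function theorem then yields a local smooth solution branch which, by uniqueness of the positive definite square root, must be $R$ itself. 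This also shows that the Lyapunov equation appearing below has a unique solution in $\Sym$.

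Next I would differentiate $R(A)^2 = A$ at $A$ in a direction $H \in \Sym$. Writing $S := \sqrt{A}$ and $X := \dd_A R(H)$, the product rule gives $S X + X S = H$, a Sylvester/Lyapunov equation. To solve it, diagonalise $A = P\diag(\lambda_1,\dots,\lambda_d)P^\top$ with $P$ orthogonal, so that $S = P\diag(\sqrt{\lambda_1},\dots,\sqrt{\lambda_d})P^\top$; conjugating the equation by $P$ and setting $\widetilde{X} := P^\top X P$, $\widetilde{H} := P^\top H P$, it becomes $\diag(\sqrt{\lambda_i})\widetilde{X} + \widetilde{X}\diag(\sqrt{\lambda_i}) = \widetilde{H}$, i.e. entrywise $(\sqrt{\lambda_i}+\sqrt{\lambda_j})\widetilde{X}_{i,j} = \widetilde{H}_{i,j}$. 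Dividing gives $\widetilde{X}_{i,j} = [P^\top H P]_{i,j}/(\sqrt{\lambda_i}+\sqrt{\lambda_j}) = [G]_{i,j}$, hence $X = P G P^\top$; symmetry of $X$ is immediate since $\widetilde{H}$ is symmetric and the weights are symmetric in $(i,j)$.

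The computation itself is routine; the only point that genuinely needs care is the invertibility of the Lyapunov operator $\Lambda_{S_0}$, equivalently the positivity of $\sqrt{\lambda_i} + \sqrt{\lambda_j}$, which simultaneously justifies differentiability of $R$, uniqueness of the solution $X$, and the fact that $PGP^\top$ does not depend on the choice of eigenbasis $P$ when $A$ has repeated eigenvalues.
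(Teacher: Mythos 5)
Your proof is correct and follows essentially the same route as the paper: differentiate the identity $R(A)^2=A$ to obtain the Sylvester equation $SX+XS=H$, then diagonalise $A$ to solve it entrywise. The only differences are cosmetic improvements — you justify the differentiability of $R$ via the implicit function theorem (which the paper takes for granted) and establish uniqueness of the Sylvester solution by the positivity of $\sqrt{\lambda_i}+\sqrt{\lambda_j}$ rather than citing Bhatia's Theorem VII.2.1.
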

\begin{proof}
    For any $A\in \PD$, we have by definition $R(A)R(A) = A$, and
    differentiating this identity at $A$ in the direction $H$ yields that $\dd_A
    R(H)$ is a solution of the following Sylvester equation:
    \begin{equation}\label{eqn:sylvester_sqrt}
        R(A) X + X R(A) = H.
    \end{equation}
    By \cite[Theorem VII.2.1]{bhatia2013matrix}, \cref{eqn:sylvester_sqrt} has a
    unique solution in $\R^{d\times d}$, which is therefore $\dd_A R(H)$. Using
    the notation of the result statement, we consider the symmetric matrix $X :=
    PGP^\top$ and notice that by definition of $G$, we have:
    \[
    \forall i,j \in \llbracket 1, d \rrbracket,\; 
    \sqrt{\lambda_i} G_{i,j} + G_{i,j}\sqrt{\lambda_j} = [P^\top H P]_{i,j},
    \]
    introducing $D := \diag(\lambda_1, \cdots, \lambda_d)$, we deduce that
    $R(D)G + GR(D) = P^\top H P$ and then that $X$ is indeed a solution of
    \cref{eqn:sylvester_sqrt}, hence by uniqueness $X = \dd_A R(H)$.
\end{proof}

Below, we provide a \href{https://pytorch.org/}{PyTorch} \cite{pytorch}
implementation of this gradient computation, allowing for automatic gradient
propagation.

\begin{python}[caption={PyTorch implementation of the matrix square root gradient}, label={lst:diff_matrix_sqrt}]
import torch

class MatrixSquareRoot(torch.autograd.Function):
    @staticmethod
    def forward(ctx, A):
        A_sym = .5 * (A + A.transpose(-2, -1))
        L, P = torch.linalg.eigh(A_sym)
        S = L.clamp_min(0).sqrt()
        R = (P * S.unsqueeze(-2)) @ P.transpose(-2,-1)
        ctx.save_for_backward(P, S)
        return R

    @staticmethod
    def backward(ctx, H):
        P, S = ctx.saved_tensors
        H_sym = .5 * (H + H.transpose(-2, -1))
        D = S.unsqueeze(-1) + S.unsqueeze(-2)
        G = (P.transpose(-2,-1) @ H_sym @ P) / D
        G = G.masked_fill(D == 0, 0)
        return P @ G @ P.transpose(-2, -1)
\end{python}

\subsection{Experimental Details and Additional Results}\label{sec:exp_details}

\subsubsection{Impact of the Number of Components for the Gradient
Methods}\label{sec:impact_K_grad_comp}

In this section, we present the impact of the parameter $K$ in the setting of
\cref{sec:xp_gradients_comparison}. We fix $n=2000, d=3$ and $T=30$, varying $K
\in \{3, 5, 8, 10, 12, 15\}$ in \cref{fig:impact_K_grad_comp}. The results
suggest that the number of components $K$ greatly impacts the difficulty of the
EM algorithm to converge to a ``stable'' fixed point, incurring worsening
gradient approximations. In certain settings, the spectral norm of the Jacobian
can be orders of magnitude larger than 1, completely invalidating the OS method.
In this experiment, we required a regularisation of $10^{-8} I_d$ for the
covariances for numerical stability.

\begin{figure}[ht]
    \centering
    \includegraphics[width=\textwidth]{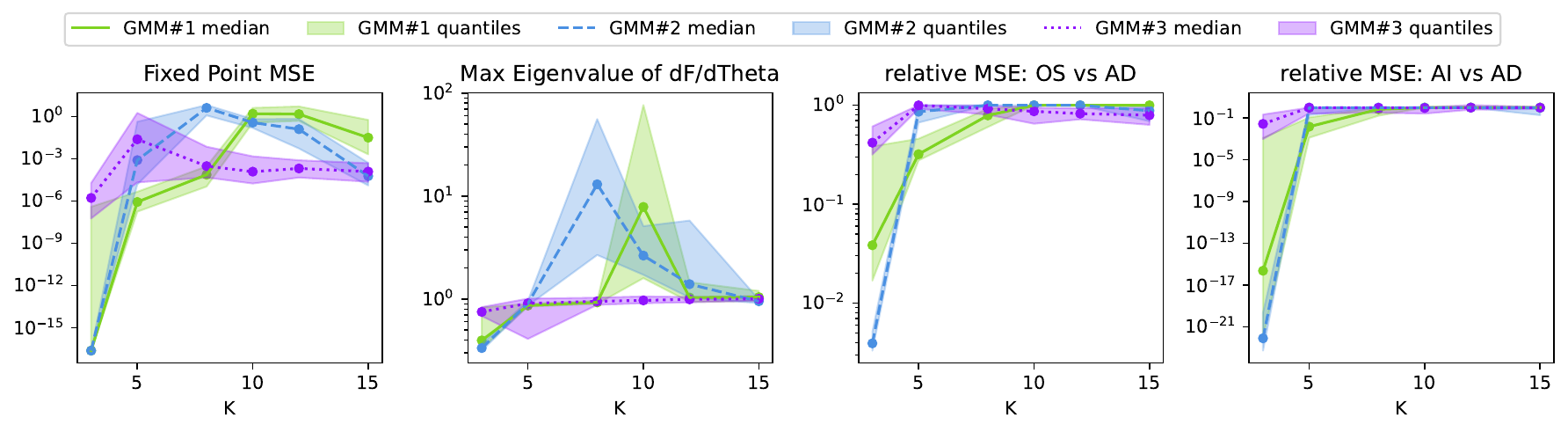}
    \caption{Varying the number of components $K$, we study the convergence of
    EM, the local contractivity of $\FEM$, and the MSEs of the OS and AI
    gradients.}
    \label{fig:impact_K_grad_comp}
\end{figure}

\subsubsection{Barycentres}\label{app:bary}

We consider a barycentre problem similar to \cref{sec:bary_flow}, with more
complex datasets: we take three two-dimensional images $I_1$, $I_2$ and $I_3$,
we randomly sample $n=500$ points $Y_i \in \mathbb{R}^{n\times 2}$ from each. In
\cref{fig:bary_image}, we flow a point cloud initialised as random normal noise,
towards a barycentre of $K=15$ GMMs fitted from $(Y_i)$.

\begin{figure}[htb]
    \centering
    \begin{adjustbox}{valign=c}
        \begin{subfigure}[t]{0.3\textwidth}
            \centering
            \includegraphics[width=\textwidth]{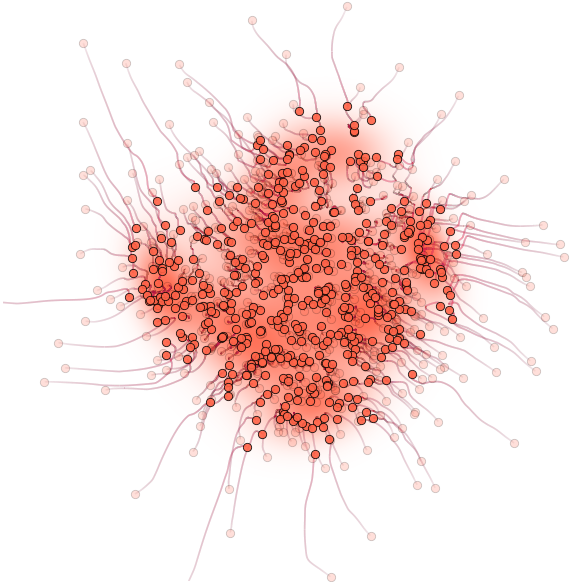}
        \end{subfigure}
    \end{adjustbox}
    \hfill
    \begin{adjustbox}{valign=c}
        \begin{subfigure}[t]{0.24\textwidth}
            \centering
            \includegraphics[width=\textwidth]{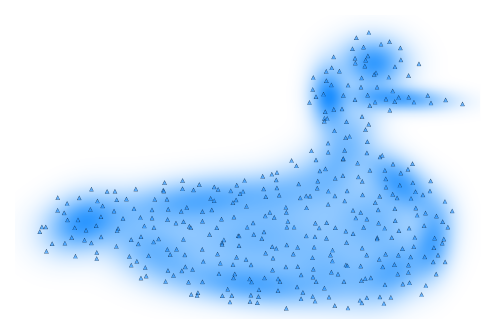}
        \end{subfigure}
    \end{adjustbox}
    \hfill
    \begin{adjustbox}{valign=c}
        \begin{subfigure}[t]{0.22\textwidth}
            \centering
            \includegraphics[width=\textwidth]{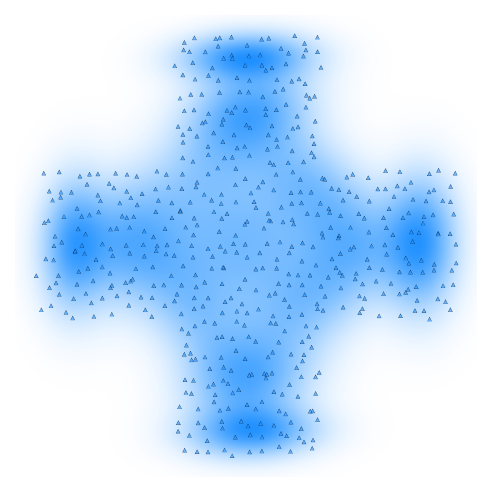}
        \end{subfigure}
    \end{adjustbox}
    \hfill
    \begin{adjustbox}{valign=c}
        \begin{subfigure}[t]{0.20\textwidth}
            \centering
            \includegraphics[width=\textwidth]{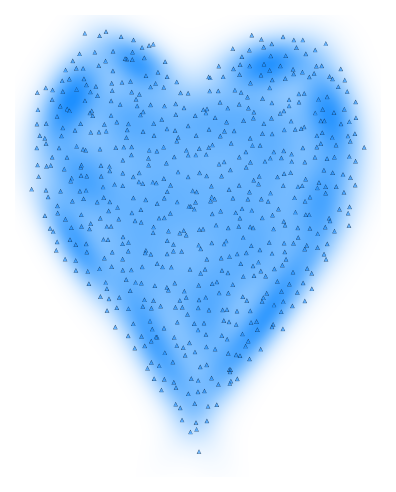}
        \end{subfigure}
    \end{adjustbox}
    \caption{Left: $\mathrm{EM}-\MW_2^2$-Barycentre flow; right: input point
    clouds.}
    \label{fig:bary_image}
\end{figure}

We can also compute a generalised barycentre $X$ in $\mathbb{R}^{n\times 3}$,
such that every projection $P_i(X)\in \mathbb{R}^{n\times 2}$ orthogonal to the
canonical direction $e_i$ coincides with $Y_i$. Specifically, we solve
\[
\min_{X\in\mathbb{R}^{n\times 2}} \sum_{i=1}^3 \MW_2^2\left(P_i\#\mu(F_X^T(\theta_0)),
\nu_i\right).
\]
The GMMs $(\nu_i) \in \GMM_2(40)^3$ are fitted beforehand with the point clouds
$(Y_i)$, and ${\mu}_X$ is the running EM estimation of optimised cloud $X$. The
projected GMM $P_i\#\mu(F_X^T(\theta_0))$ is defined by projecting the means and
covariances of the three-dimensional GMM $\mu(F_X^T(\theta_0))$. We sample
$n=1000$ points and fit $K=100$ Gaussian components in each cloud. Example
results are shown in \cref{fig:gbary}. We obtained very similar results with an
alternative loss which estimates three GMMs in $\R^2$ instead of one in $\R^3$.
In all our barycenter experiments, we set $\varepsilon_r = 10^{-3}$.

\begin{figure}[htb]
    \centering
    \begin{adjustbox}{valign=c}
        \begin{subfigure}[t]{0.3\textwidth}
            \centering
            \includegraphics[height=6cm]{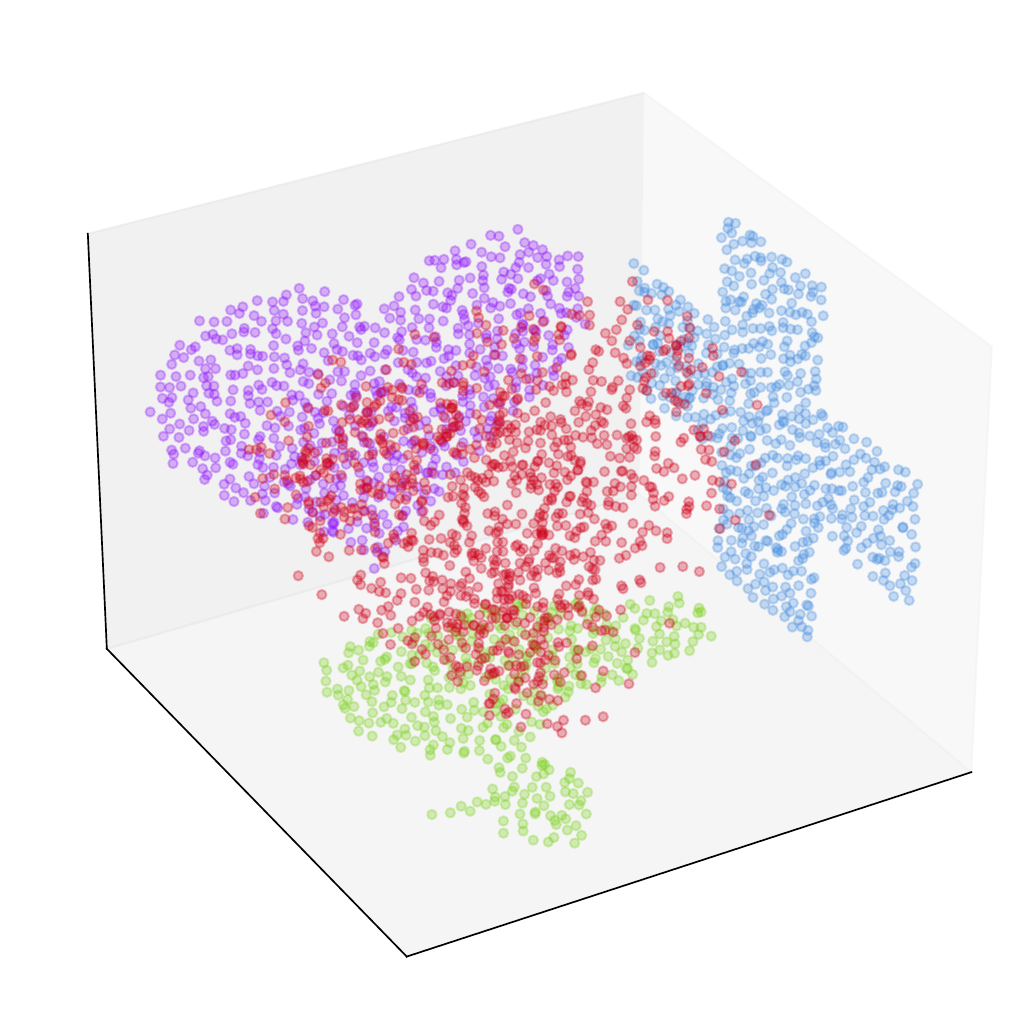}
        \end{subfigure}
    \end{adjustbox}
    \hfill
    \begin{adjustbox}{valign=c}
        \begin{subfigure}[t]{0.65\textwidth}
            \centering
            \includegraphics[height=6cm]{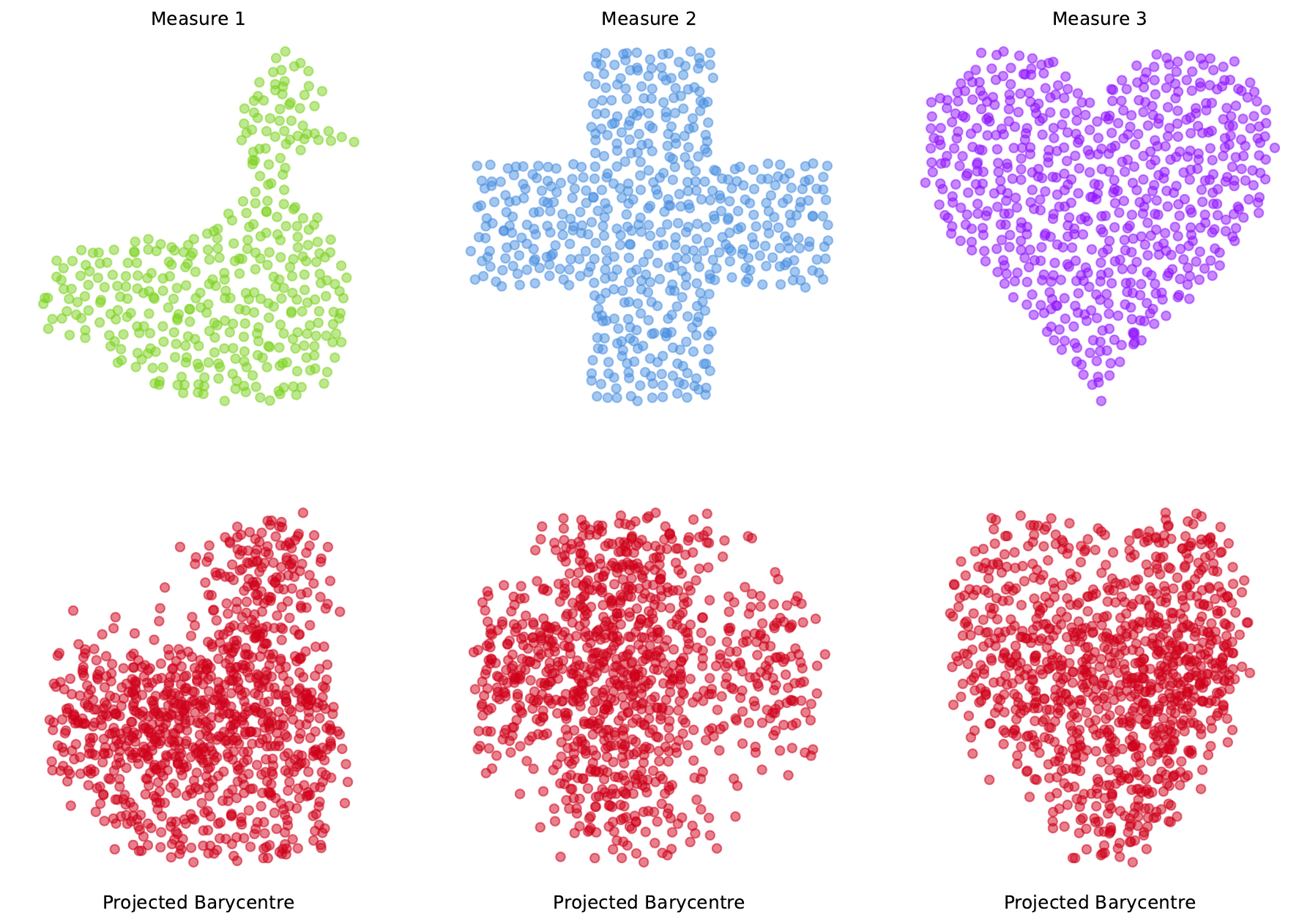}
        \end{subfigure}
    \end{adjustbox}
    \caption{3D barycentre (left) and its projections (right).}
    \label{fig:gbary}
\end{figure}

\subsubsection{Colour Transfer}\label{sec:colour_transfer_details}

\paragraph{Optimiser choice}
We use gradient descent with fixed step size. Indeed, optimisers such as Adam
have per-parameter learning rates that are adapted dynamically. As a result,
points are treated differently and move at different speeds. For instance,
imagine that we independently sample points $x_1,\cdots,x_n$ in $\mathbb{R}^{d}$
following the law $\mathcal{N}(0_d, I_d)$ and want to match them to
$\mathcal{N}(0_d, 2 I_d)$ using the Mixture Loss. Optimisers like Adam might
focus on moving the points on the boundary to make the cloud's variance larger.
Fixed-step gradient descent, on the contrary, will treat each point equally, and
will rescale the cloud as a whole.

\paragraph{Fixing mixture weights}
As discussed in \cref{sec:fixweights}, fixing the mixture weights to be uniform
(using \cref{alg:EM_fw}) avoids local minima, as illustrated in
\cref{fig:fixweights}. For a colour transfer task with $K=6$ components, when
weights are allowed to vary, some components of the optimised mixture (in red)
are trapped between two target components (in blue).

\begin{figure}[ht]
    \centering
    \begin{subfigure}{0.45\textwidth} \centering
    \includegraphics[width=\textwidth, trim=50 20 20 20,
    clip]{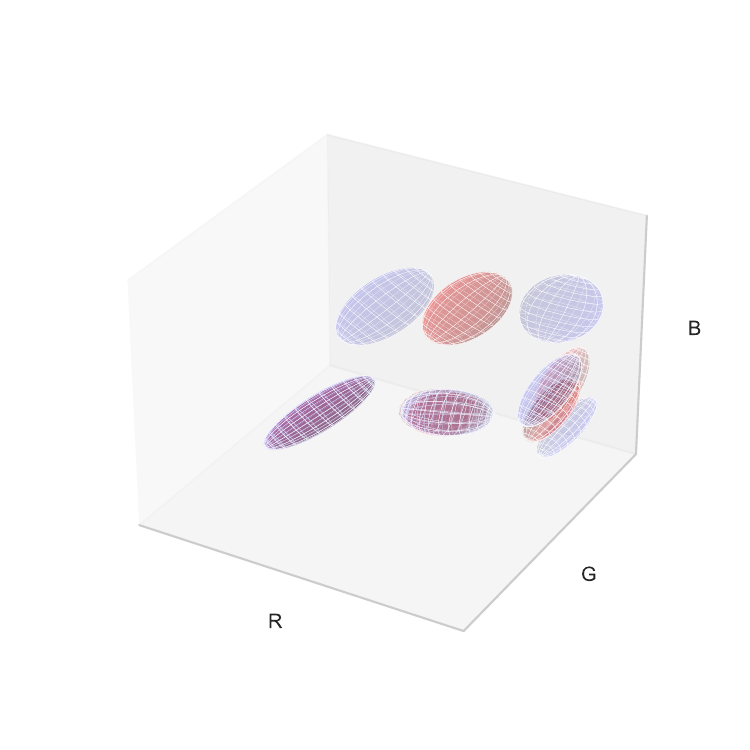}\caption{} \end{subfigure}
    \hfill
    \begin{subfigure}{0.45\textwidth} \centering
    \includegraphics[width=\textwidth, trim=50 20 20 20,
    clip]{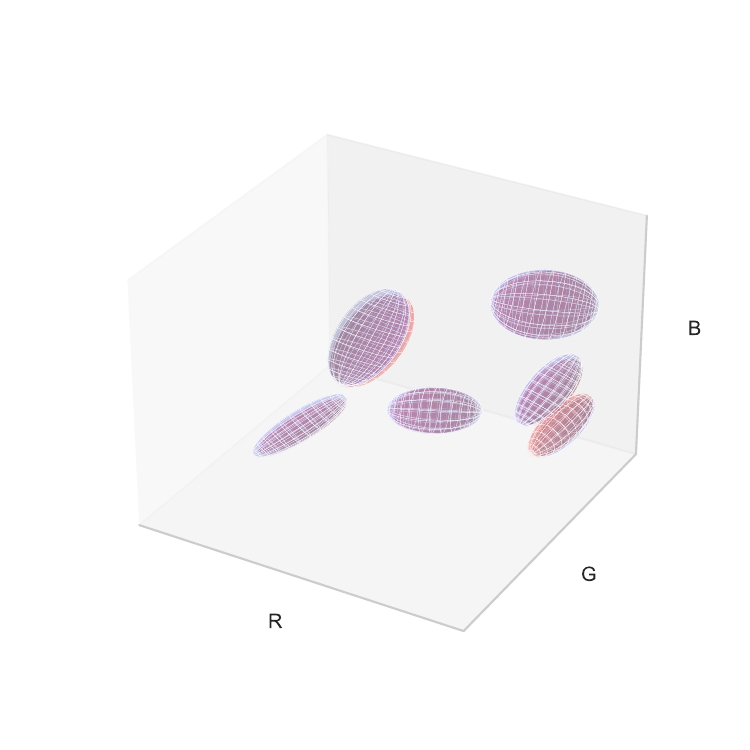}\caption{} \end{subfigure}
    \caption{Final GMMs in RGB space with (a) variable weights and (b) fixed
    weights. Target is in blue and optimised mixture is in red. In (a) we are
    stuck in a local minimum, in (b) we converged.}
    \label{fig:fixweights}
\end{figure}

\paragraph{Choice of the number of components $K$}
We choose $K=10$ Gaussian components in our colour transfer experiments. Taking
$K=1$, i.e. one Gaussian per source and target, is equivalent to applying the
following affine map to source pixels:
\[
T: x\in\mathbb{R}^d \longmapsto m_t + A(x-m_s),
\]
where $m_s$ and $m_t$ are the empirical means of source and target colour
distributions, and where
\[
A = \Sigma_s^{-1/2} \left(\Sigma_s^{1/2} \Sigma_t \Sigma_s^{1/2}\right)^{1/2} \Sigma_s^{-1/2} = A^\top.
\]
This map performs a coarser colour transfer than the optimisation with $K=10$,
as compared in \cref{fig:compare_K}: the contrast is better preserved with
higher values of $K$. We always set $\varepsilon_r = 10^{-3}$ in this
experiment.

\begin{figure}[ht]
  \centering
  \begin{subfigure}{0.32\textwidth} \centering
  \includegraphics[width=\textwidth]{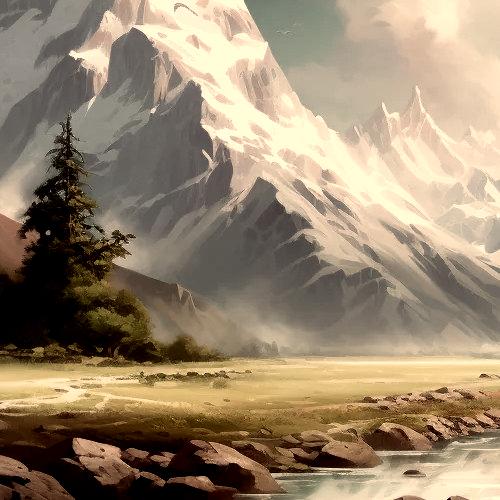}
  \caption{Result with $K=1$}\end{subfigure}
  \hfill
  \begin{subfigure}{0.32\textwidth} \centering
  \includegraphics[width=\textwidth]{figures/color_transfer_result.jpg}
  \caption{Result with $K=10$}\end{subfigure}
  \hfill
  \begin{subfigure}{0.32\textwidth} \centering
  \includegraphics[width=\textwidth]{figures/color_transfer_target.jpg}
  \caption{Target}\end{subfigure}
  \caption{Colour transfer with (a) $K=1$ components and (b) $K=10$ components.}
  \label{fig:compare_K}
\end{figure}

\subsubsection{Neural Style Transfer}\label{sec:style_transfer_details}

We note that the optimiser choice and EM variant (between
\cref{alg:EM,alg:EM_fw}) do not change the results qualitatively. In our
experiments, we use Adam and use standard EM \cref{alg:EM}.

\paragraph{Choice of the number of components $K$}
We choose $K=3$ in our experiments. When using $K=1$ Gaussian component, the
style transfer objective in \cref{eq:style_loss} simplifies to
\[
\min_{X\in \mathbb{R}^{3\times H\times W}} \sum_{\ell=1}^{3}\lambda_{\ell} \left(\left\lVert m_{s,\ell}(X) - m_{t,\ell} \right\rVert_2^2 + \dBW^2\left(\Sigma_{s,\ell}(X),\Sigma_{t,\ell}\right)\right),
\]
where $m_{s,\ell}(X)$ and $\Sigma_{s,\ell}(X)$ are the empirical means and
covariances of source features $\text{VGG}_{1\cdots\ell}(X)$. We similarly
define $m_{t,\ell}$ and $\Sigma_{t,\ell}$ for the target image $Y$. As there are
only one source and target Gaussian, there is no need for an EM algorithm, we
simply estimate the means and covariance and apply Gaussian OT. To evaluate the
influence of $K$, we take two content images (the Eiffel tower and Gatys'
\cite{gatys2015texture} picture of Tuebingen), and two style images (\emph{The
Great Wave} and \emph{The Starry Night}). We compute their features
corresponding to first layers $1,\cdots,\ell$ of VGG, for
$\ell\in\{1,\cdots,3\}$. We fit Gaussian mixtures on these features with varying
number of components $K$, and we evaluate corresponding log-likelihoods as a
measure of model quality. Results are presented in \cref{fig:style_ll}: most of
them elbow around $K=3$, the value we retain in our experiments. We set
$\varepsilon_r = 0.1$ (as the dominant eigenvalue of the target covariance
matrices empirically lies between $10^2$ and $10^3$).

\begin{figure}[ht]
  \centering
  \begin{subfigure}[t]{.24\linewidth}
    \centering
    \includegraphics[width=\linewidth]{figures/eiffel.jpg}
  \end{subfigure}
  \hfill
  \begin{subfigure}[t]{.24\linewidth}
    \centering
    \includegraphics[width=\linewidth]{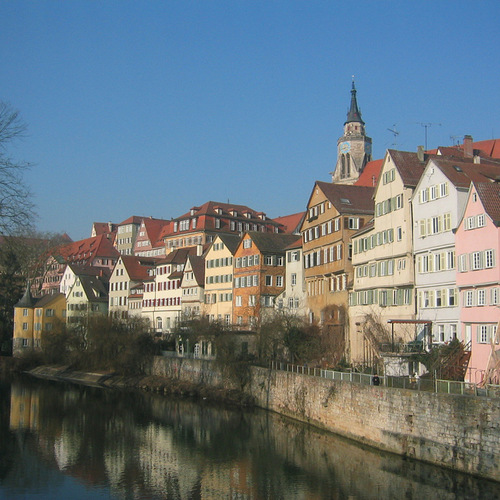}
  \end{subfigure}
  \hfill
  \begin{subfigure}[t]{.24\linewidth}
    \centering
    \includegraphics[width=\linewidth]{figures/hokusai.jpg}
  \end{subfigure}
  \hfill
  \begin{subfigure}[t]{.24\linewidth}
    \centering
    \includegraphics[width=\linewidth]{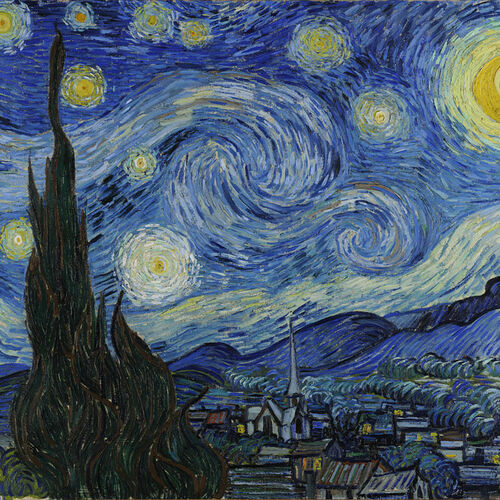}
  \end{subfigure}
  \begin{subfigure}{0.24\textwidth} \centering
  \includegraphics[width=\textwidth]{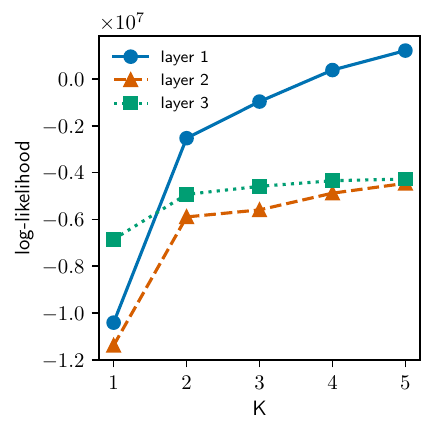} \caption{Eiffel
  Tower} \end{subfigure}
  \hfill
  \begin{subfigure}{0.24\textwidth} \centering
  \includegraphics[width=\textwidth]{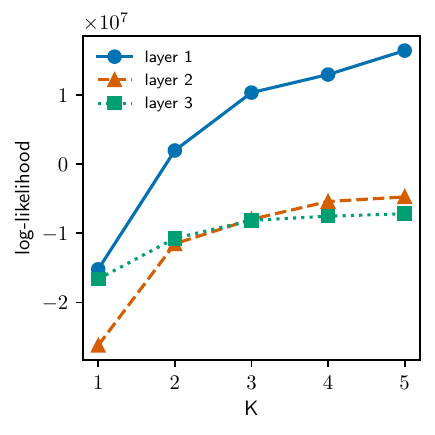}
  \caption{Tuebingen} \end{subfigure}
  \hfill
  \begin{subfigure}{0.24\textwidth} \centering
  \includegraphics[width=\textwidth]{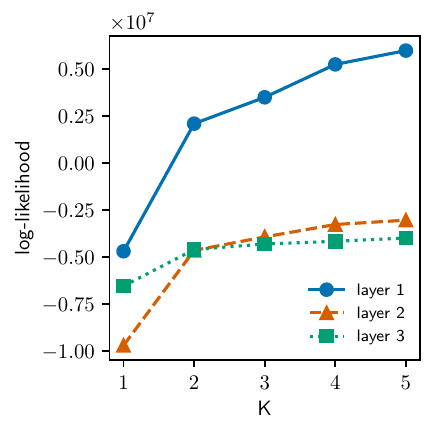} \caption{The Great
  Wave} \end{subfigure}
  \hfill
  \begin{subfigure}{0.24\textwidth} \centering
  \includegraphics[width=\textwidth]{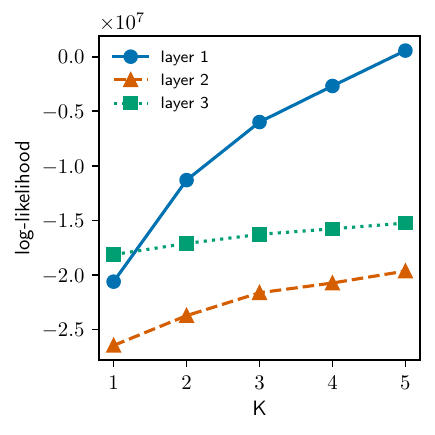} \caption{The Starry
  Night} \end{subfigure} \caption{Four images and their corresponding
  log-likelihood vs. $K$ plots, for VGG layers $\ell\in\{1,\cdots,3\}$.}
  \label{fig:style_ll}
\end{figure}

See \cref{fig:style_K_comp} for a comparison of style transfer with $K=1$ and
$K=3$. Taking higher values of $K$ does not yield significant improvement in the
results. Yet, for \cref{fig:eiffel_st}, the sky is less uniform with $K=1$ and
the bottom-left corner is more blurry. For \cref{fig:cri_st}, the sky differs a
bit between $K=1$ and $K=3$. In \cref{fig:style_film}, we illustrate some
gradient descent iterations, and observe the progressive refinement of the
stylisation. It appears that the colour correspondance is achieved quite early,
with stylistic details appearing later in the optimisation.

\begin{figure}[ht]
  \centering
  \begin{subfigure}[t]{.32\linewidth}
    \centering
    \includegraphics[width=\linewidth]{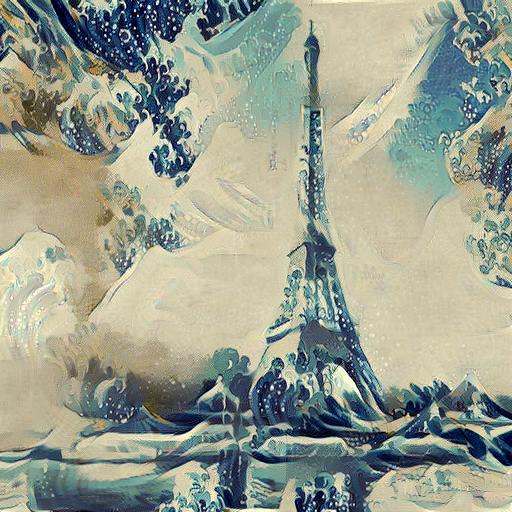}
  \end{subfigure}
  \hfill
  \begin{subfigure}[t]{.32\linewidth}
    \centering
    \includegraphics[width=\linewidth]{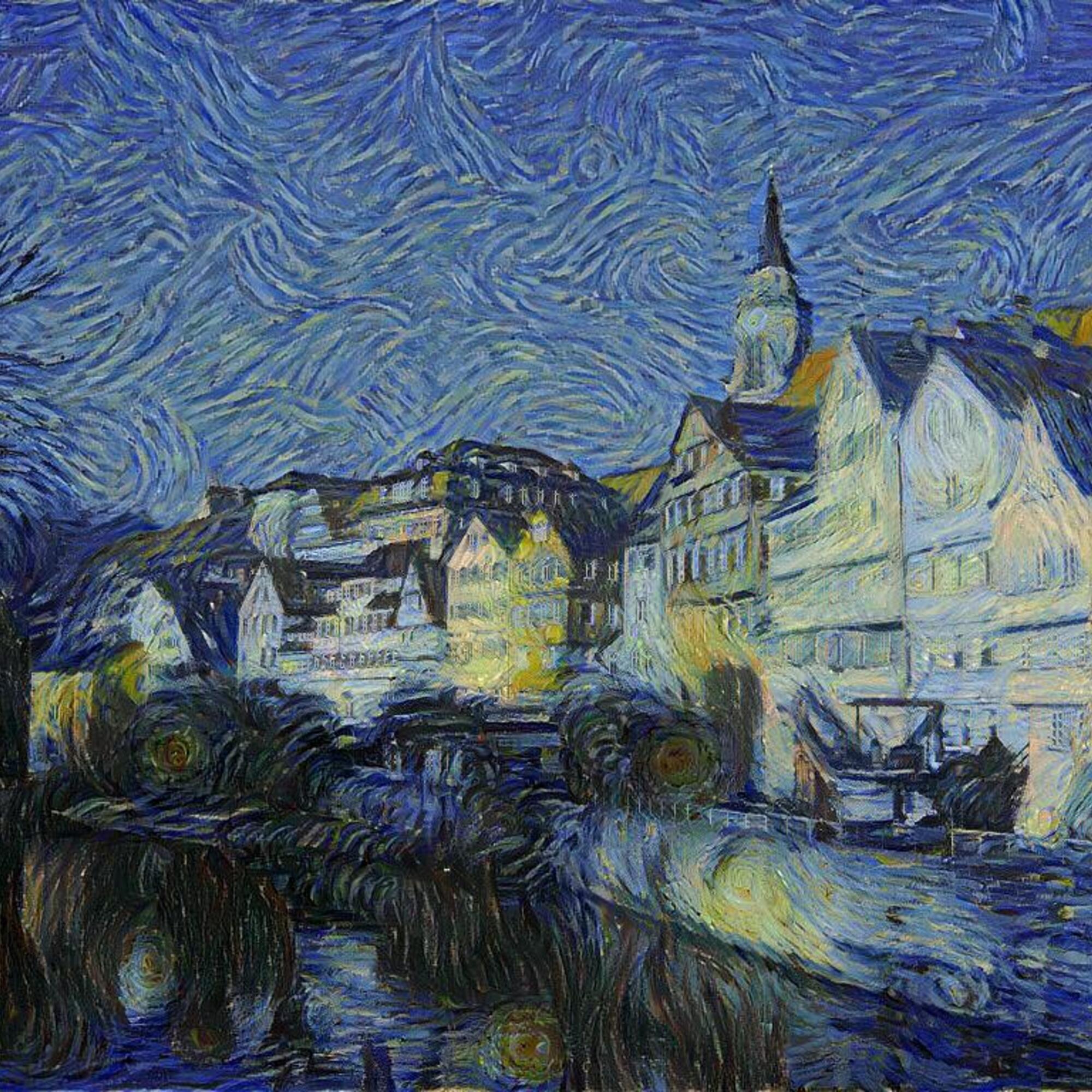}
  \end{subfigure}
  \hfill
  \begin{subfigure}[t]{.32\linewidth}
    \centering
    \includegraphics[width=\linewidth]{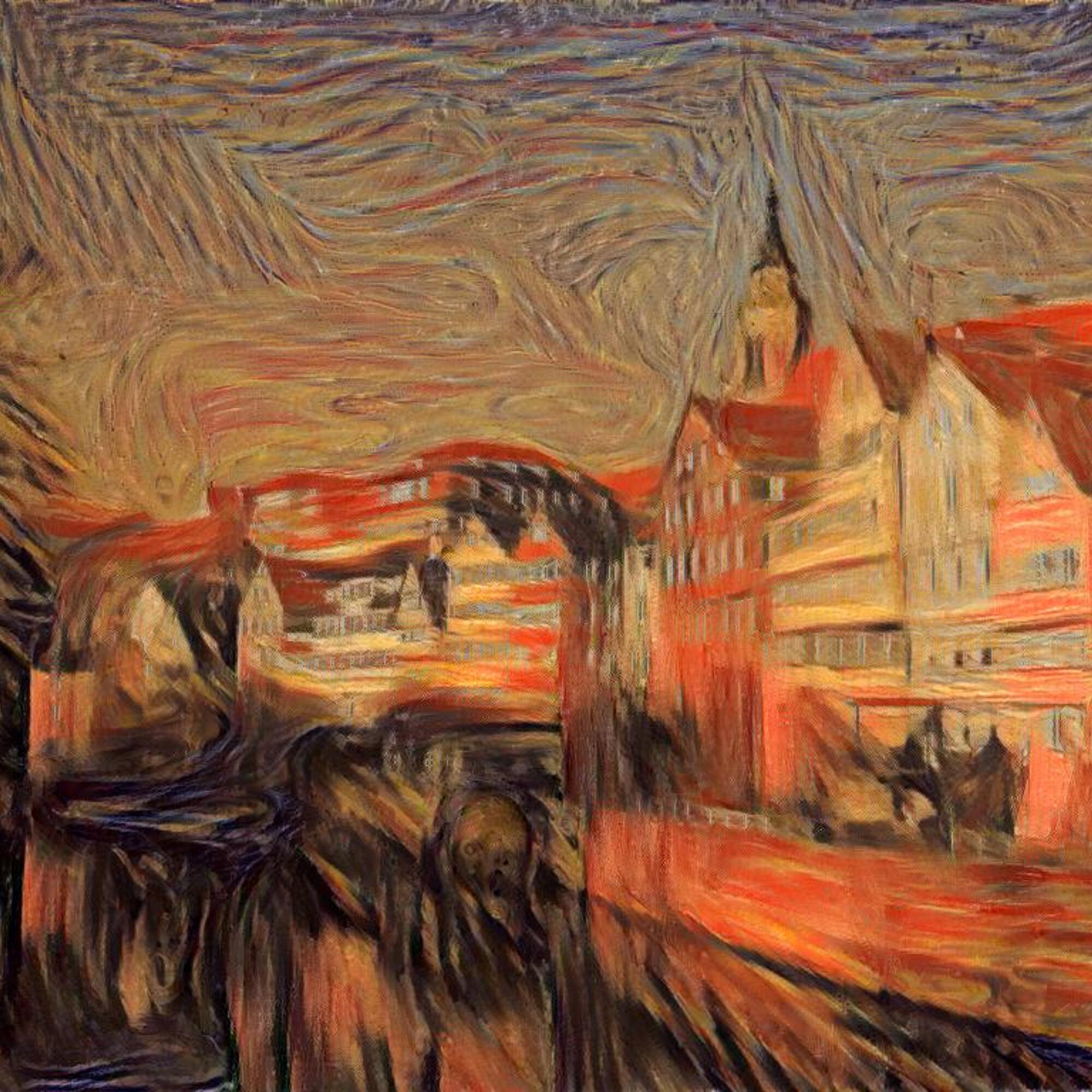}
  \end{subfigure}
  \begin{subfigure}[t]{.32\linewidth}
    \centering
    \includegraphics[width=\linewidth]{figures/style_transfer_K3.jpg}
    \caption{Eiffel $\rightarrow$ The Great Wave}
    \label{fig:eiffel_st}
  \end{subfigure}
  \hfill
  \begin{subfigure}[t]{.32\linewidth}
    \centering
    \includegraphics[width=\linewidth]{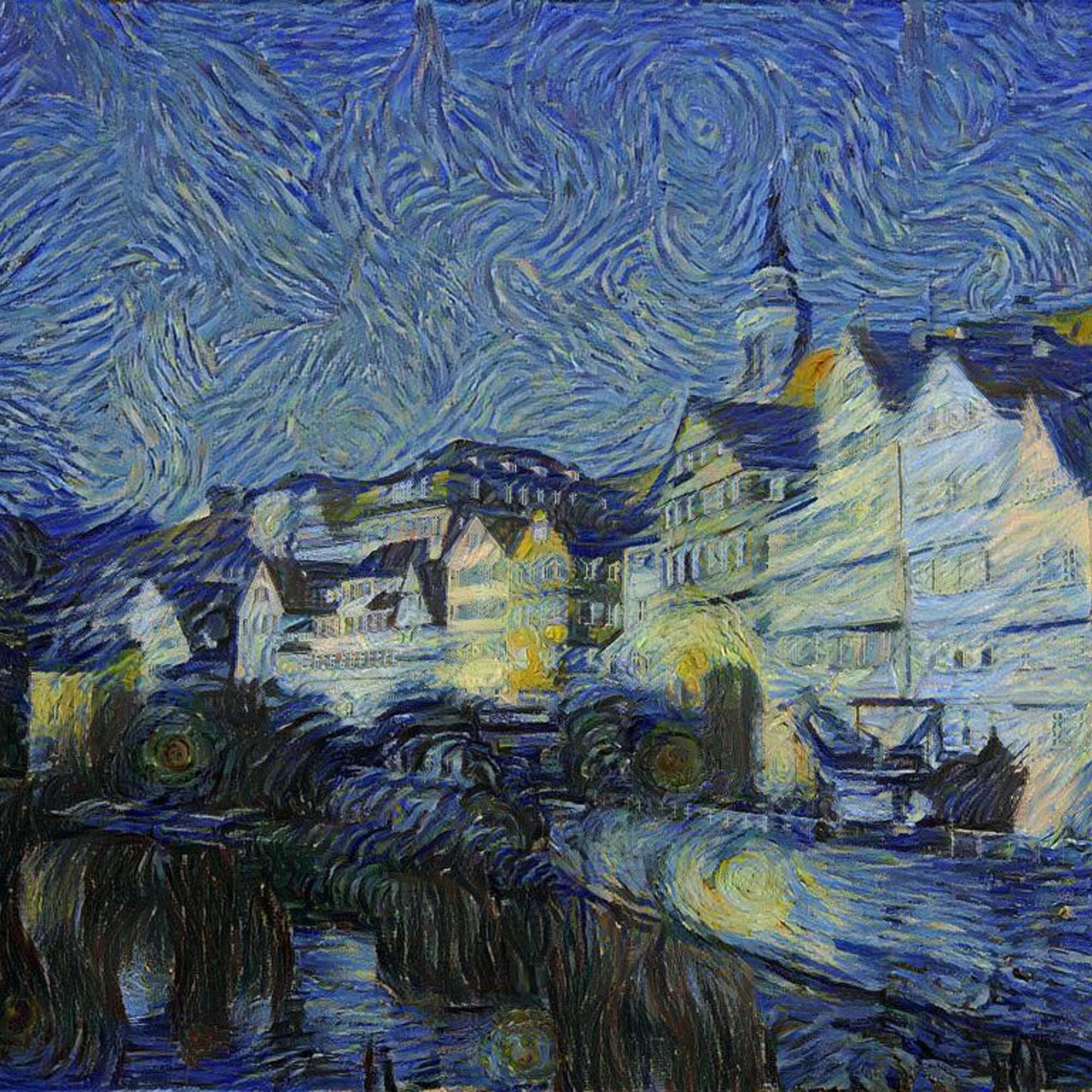}
    \caption{Tuebingen $\rightarrow$ The Starry Night}
  \end{subfigure}
  \hfill
  \begin{subfigure}[t]{.32\linewidth}
    \centering
    \includegraphics[width=\linewidth]{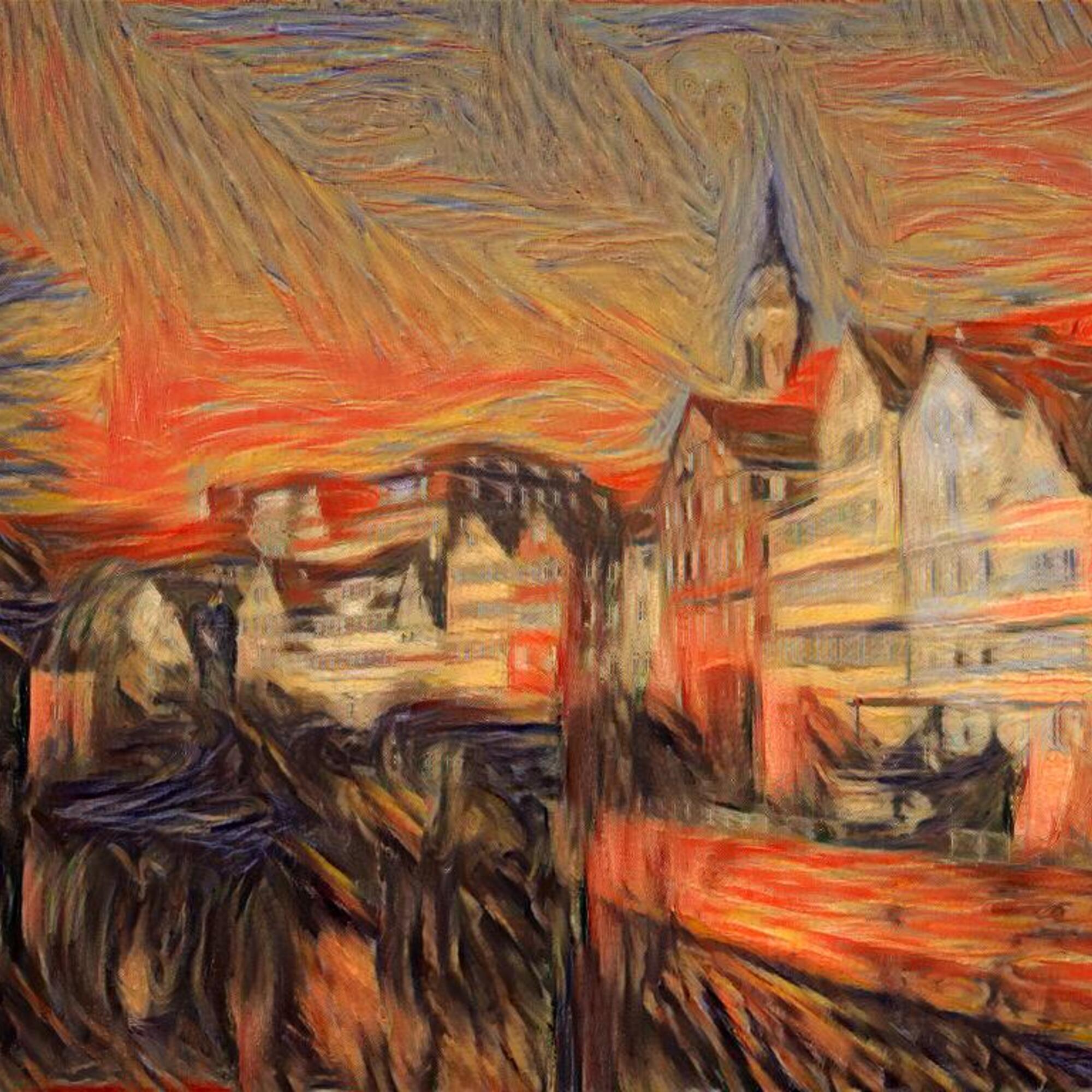}
    \caption{Tuebingen $\rightarrow$ The Scream}
    \label{fig:cri_st}
  \end{subfigure}
  \caption{Taking $K=1$ (top) gives results comparable to $K=3$ (bottom).}
  \label{fig:style_K_comp}
\end{figure}

\begin{figure}[ht]
  \centering
  \begin{subfigure}[t]{.19\linewidth}
    \centering
    \includegraphics[width=\linewidth]{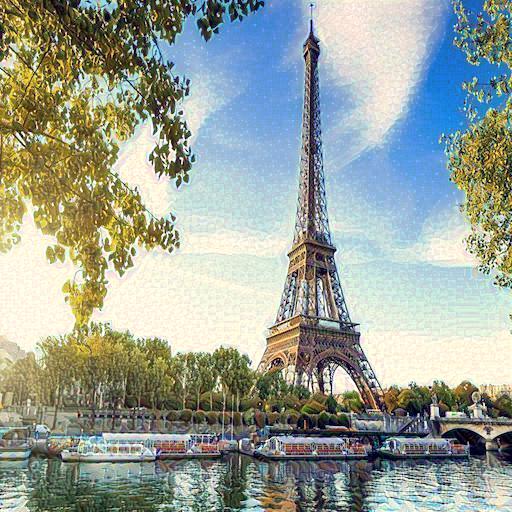}
    \caption{Iteration 10}
  \end{subfigure}
  \hfill
  \begin{subfigure}[t]{.19\linewidth}
    \centering
    \includegraphics[width=\linewidth]{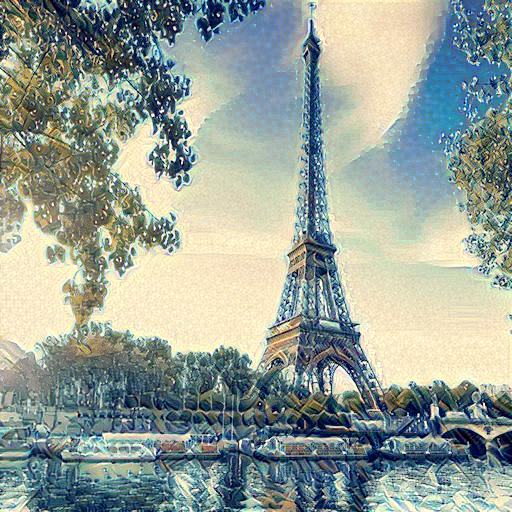}
    \caption{Iteration 30}
  \end{subfigure}
  \hfill
  \begin{subfigure}[t]{.19\linewidth}
    \centering
    \includegraphics[width=\linewidth]{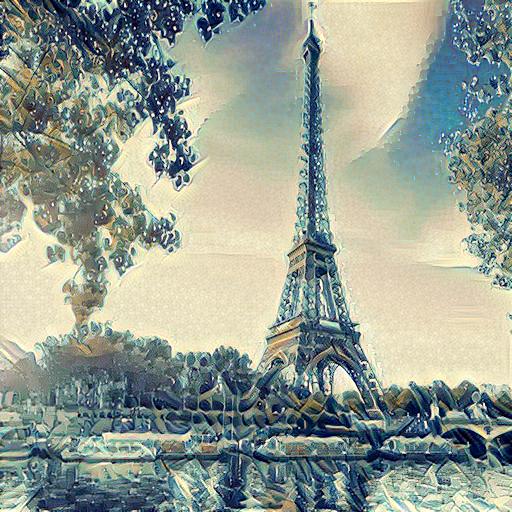}
    \caption{Iteration 50}
  \end{subfigure}
  \hfill
  \begin{subfigure}[t]{.19\linewidth}
    \centering
    \includegraphics[width=\linewidth]{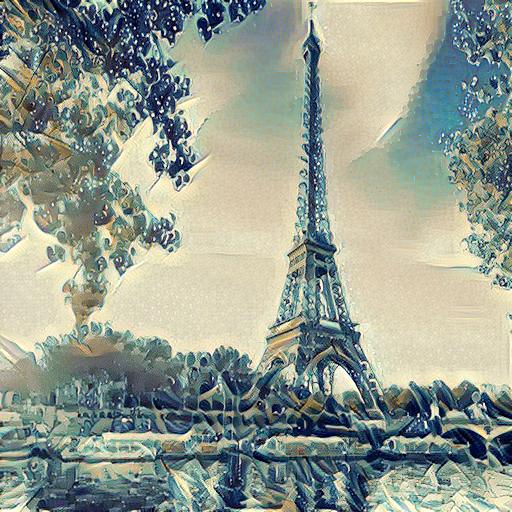}
    \caption{Iteration 70}
  \end{subfigure}
  \hfill
  \begin{subfigure}[t]{.19\linewidth}
    \centering
    \includegraphics[width=\linewidth]{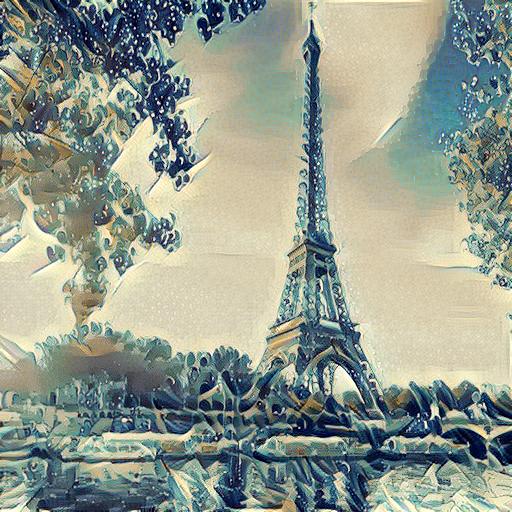}
    \caption{Iteration 90}
  \end{subfigure}
  \caption{Evolution of the style transfer across iterations.}
  \label{fig:style_film}
\end{figure}

\subsubsection{Texture Synthesis}\label{sec:texture_synthesis_details}

Consider a (space-periodic) target texture $u\in [0, 1]^{h\times w \times C}$.
Our objective is to produce a (space-periodic) texture $x \in [0, 1]^{H\times W
\times C}$. To initialise, we sample a stationary Gaussian field $Z \in
\R^{H\times W \times C}$ of i.i.d. entries of law $\mathcal{N}(0, I_C)$. The
initialisation is then a stationary Gaussian field with the same statistics as
$u$, defined as $\mathbb{T}_{H,W}$ by $x_0 := m + u \star Z$, where $m\in \R^C$
is the mean of $u$ and $\star$ denotes the discrete convolution with periodic
boundary conditions.

Given a texture $v \in \R^{H\times W \times C}$, we consider $P_p(v) \subset [0,
1]^{C\times p^2}$ the set of its $p\times p$ patches (with periodic boundary
conditions and indexing from the top-left corner). We also define the
down-sampling operator $D_{s_{i}}$ that shrinks the image by $2^{s_{i}}$. The
patch distribution of a texture $v$ is noted
$\mu_{p}^{v}=\frac{1}{|P_{p}(v)|}\sum_{y\in P_{p}(v)}\delta_{y}$.

For a given list of scales $ \mathcal{S}=\left\{(p_{i},s_{i})\right\}_{1\le i\le
L},$ we aim to make the patch distributions $\mu_{p}(x)$ of the optimised
texture $x$ match the targets $\mu_{p}(v)$. Each $\mu_{p}^{v}$ is approximated
by a Gaussian mixture $\hat{\mu}_{p}^{v}$. The target mixture
$\hat{\mu}_{p}^{u}$ is fitted once by EM and fixed. We use the \emph{Warm-Start
EM} (see \cref{alg:warm_start_EM_flow}) variant during optimisation, and take
$\varepsilon_r = 10^{-3}$. We solve the problem
\[
\min_x \sum_{i=1}^{L}2^{2s_i}\,
\MW_2^2\left(\hat{\mu}_{p_{i}}^{D_{s_{i}}x},\,
\hat{\mu}_{p_{i}}^{D_{s_{i}}u}\right).
\]
At the end, we perform a nearest neighbour projection on the largest scale: each
patch in the generated $x$ is matched to its nearest patch in the target $u$.
Then, each pixel is reconstructed by averaging the corresponding patches. In
\cref{fig:texto} we observe an example of this process for a single scale. For
this simple example, even the lighter mono-scale model produces satisfactory
results.

\begin{figure}[ht]
    \centering
    \begin{subfigure}{0.3\textwidth} \centering
    \includegraphics[width=\textwidth]{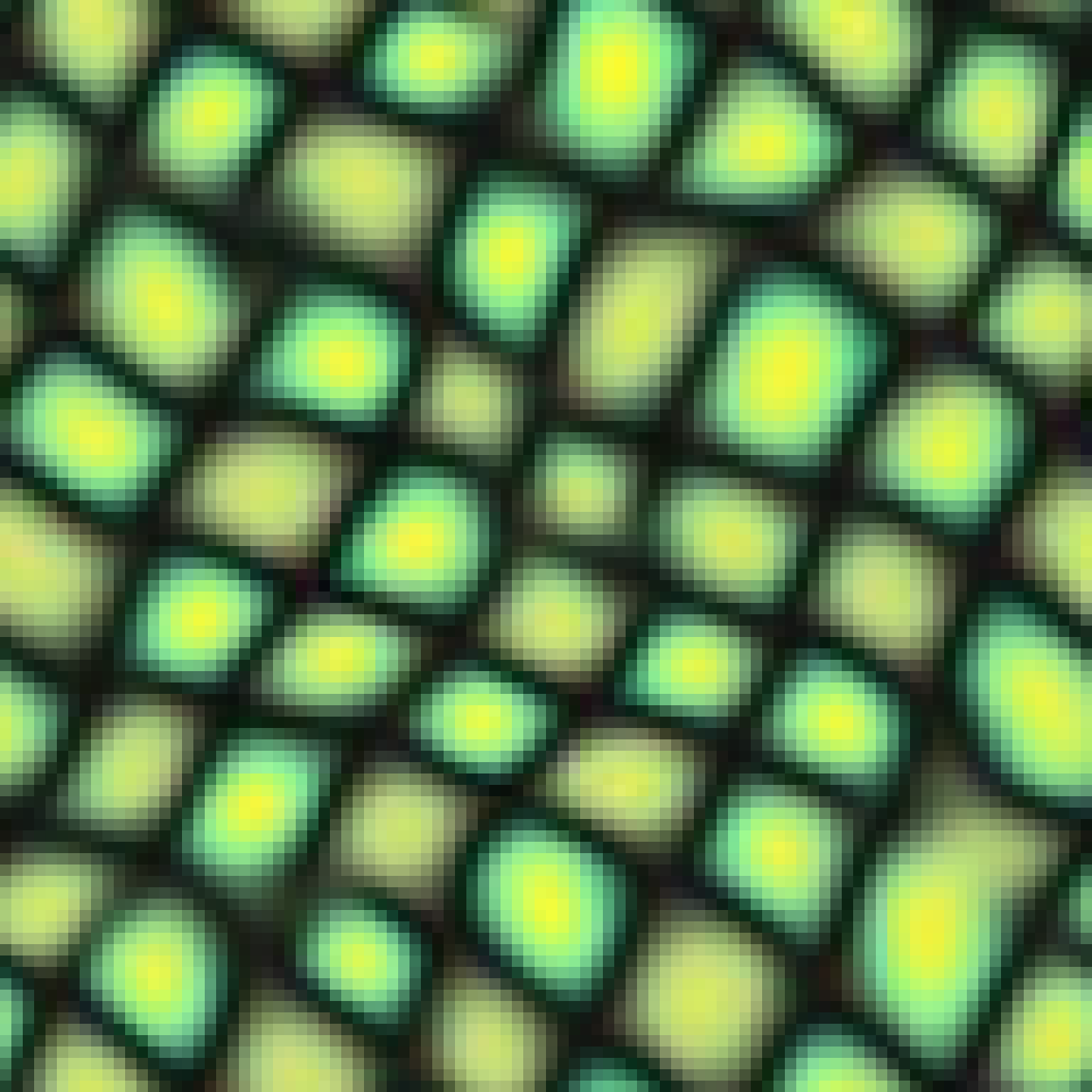}\caption{}
    \end{subfigure}
    \hspace{1cm}
    \begin{subfigure}{0.2\textwidth} \centering
    \includegraphics[width=\textwidth]{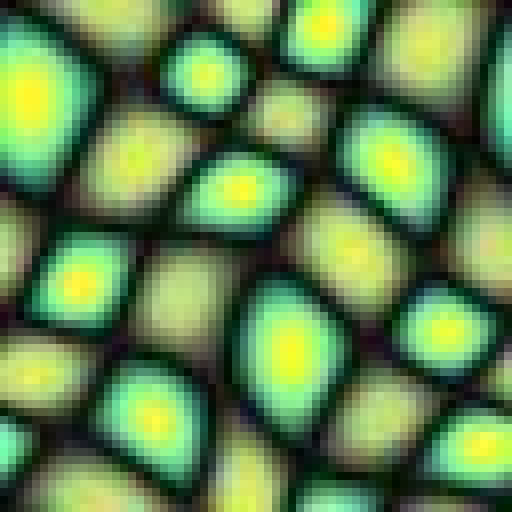}\caption{}
    \end{subfigure}
    \hfill
    \caption{\label{fig:texto}(a) Texture synthesis with $12\times 12$ patches,
    $K=4$, mono-scale, (b) reference.}
\end{figure}

\subsubsection{Image Generation}\label{sec:em_mw2_gan}

As a proof of concept, we train a Generative Adversarial Network (GAN)
\cite{goodfellow2014generative,arjovsky2017wasserstein,deshpande_generative_sw}
using the Mixture--Wasserstein distance as a regularisation. The idea is to
encourage the generator to produce images which have similar features to real
images: while this does not suffice to produce realistic images, it can guide
the training optimisation out of spurious local minima induced by the
notoriously unstable GAN training (\cite{arjovsky2017wasserstein}). Given a
batch size $b$ and a latent dimension $\ell$, we independently sample
$x_{1},\cdots,x_{b}\sim\mathcal{N}(0_{\ell},I_{\ell}),$ and we draw
$y_{1},\cdots,y_{b}$ from our dataset of real images. The aim of our
\emph{generator} $\mathtt{G}:\mathbb{R}^{\ell}\longrightarrow\mathbb{R}^{C\times
H\times W}$ is to map these Gaussian samples to real images of height $H$, width
$W$ and $C$ channels. We also introduce a \emph{feature extractor}
$\mathtt{F}:\mathbb{R}^{C\times H\times W}\longrightarrow\mathbb{R}^{f}$ which
learns to map images to relevant features of dimension $f$, and a
\emph{discriminator} $\mathtt{D}:\mathbb{R}^{f}\longrightarrow\mathbb{R}$ whose
goal is to discriminate real and fake images by producing different vectors of
features. The full adversarial objective on a noise batch $X$ and a real image
batch $Y$ is then defined as follows:
\begin{equation}\label{eqn:mw2_gan}
    \min_\mathtt{G}\ \left[
    \MW_2^2\left(F^{T}(\theta_0,\mathtt{F}\circ\mathtt{G}(X)), 
    F^{T}(\theta_0',\mathtt{F}(Y))\right) +
    \max_{\mathtt F,\,\mathtt D}\ 
    \left(\sum_{i=1}^{b} \log \mathtt{D}\circ\mathtt{F}\left(y_{i}\right) 
    + \log\left(1-\mathtt{D}\circ\mathtt{F}\circ\mathtt{G}(x_{i})\right)
    \right)\right],
\end{equation}
where $\theta_0$ and $\theta_0'$ are chosen using \emph{k-means} initialisation.
Note that the feature extractor $\mathtt{F}$ appears in the $\MW_2^2$
term, but gradients from this term are not used to update
$\mathtt{F}$. We optimise these losses using Stochastic Gradient Descent and
\emph{Full Automatic Differentiation} (see \cref{sec:em_grad_methods}). For
every sampled batch, we alternate one step on $\mathtt{F}$ and $\mathtt{D}$,
then one step on $\mathtt{G}$. The generator is encouraged to produce images
with similar features to the real ones, and aims to fool the discriminator. The
feature extractor $\mathtt{F}$ attempts to extract features such that
$\mathtt{D}$ is able to discriminate real from fake images. The full network
architecture is described in \cref{fig:image_gen}. Note that as in other GANs
\cite{goodfellow2014generative,arjovsky2017wasserstein,deshpande_generative_sw},
we do not optimise over the full dataset and rather optimise by sampling
mini-batches at each step, incurring a seldom-discussed bias
\cite{fatras2020learning,fatras2021minibatch,fatras2021unbalanced,tong2024improving}
for computational efficiency.

\begin{figure}[ht]
    \centering
    \begin{subfigure}{0.6\textwidth} 
        \centering
        \includegraphics[width=\textwidth]{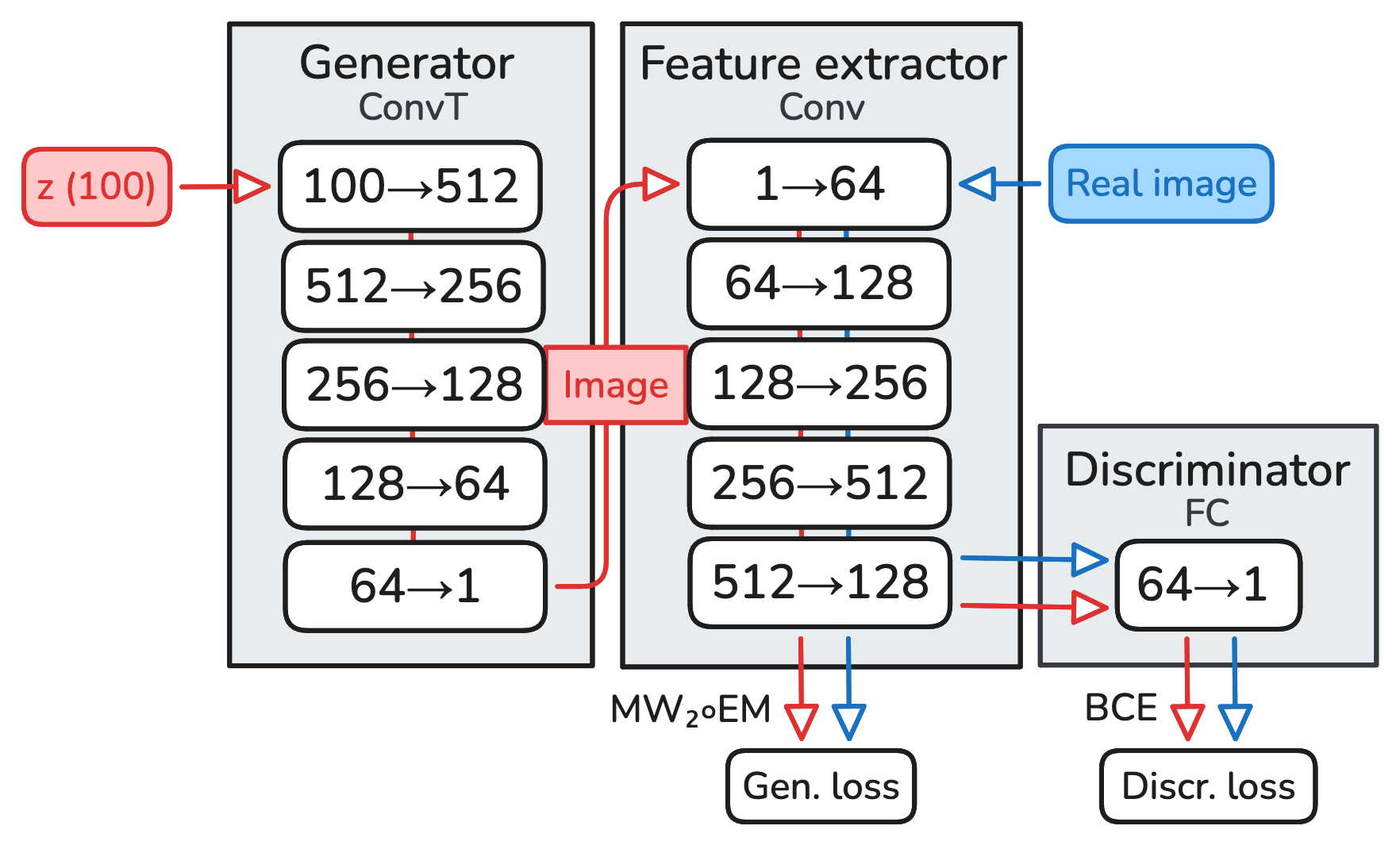}
        \caption{GAN architecture (adapted from DCGAN
        \cite{radford2016unsupervised})} 
    \end{subfigure}
    \hfill
    \begin{subfigure}{0.35\textwidth} 
        \centering
        \includegraphics[width=\textwidth, trim=40 40 40 40, clip]{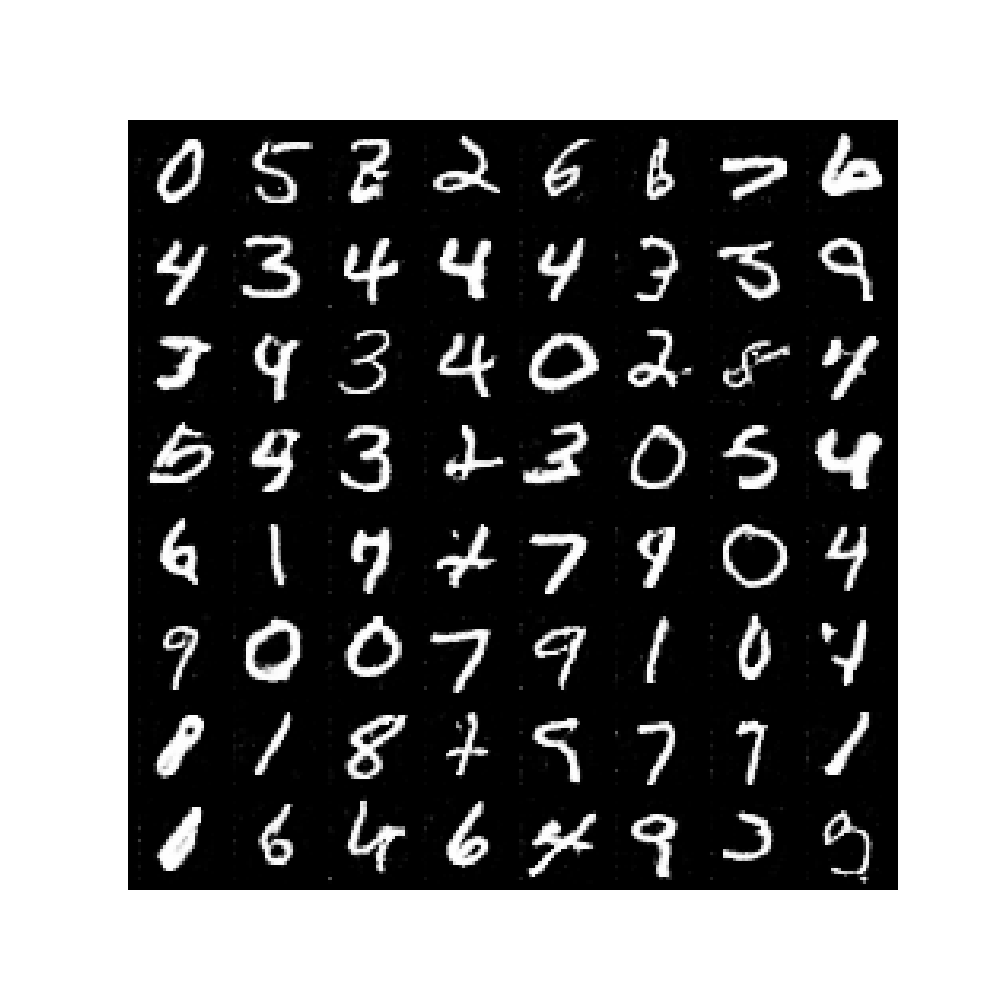}
        \caption{MNIST generation with $K=10$.} 
    \end{subfigure}
    \label{fig:image_gen}
    \caption{Image generation with the $\MW_2^2$-GAN from
    \cref{eqn:mw2_gan}.}
\end{figure}
\end{document}